\newtheorem{lemma}{\rm \textbf{Lemma}}
\newtheorem{theorem}{\rm \textbf{Theorem}}
\newtheorem{definition}{\rm \textbf{Definition}}
\newtheorem{remark}{\rm \textbf{Remark}}
\renewcommand \arraystretch{1.2}
\DeclareMathOperator{\Img}{Img}
\DeclareMathOperator{\Ker}{Ker}
\DeclareMathOperator{\rank}{rank}
\DeclareMathOperator*{\Span}{span}
\DeclareMathOperator{\col}{col}
\renewcommand{\maketag@@@}[1]{\hbox{\m@th\normalsize\normalfont#1}}%
\begin{document}


\title{A Transformation-based Consistent Estimation Framework: Analysis, Design and Applications}

\author{Ning Hao, Chungeng Tian, and Fenghua He
\thanks{Chungeng Tian is Ph.D candidate with Harbin Institute of Technology, Harbin, 150000, China. {\tt (email: 2018tian@gmail.com)}} 
\thanks{Ning Hao and Fenghua He are faculties with Harbin Institute of Technology, Harbin, 150000, China. {\tt (email: hefenghua@hit.edu.cn; haoning0082022@163.com)}}
}


\maketitle

\begin{abstract}
In this paper, we investigate the inconsistency problem arising from observability mismatch that frequently occurs in nonlinear systems such as multi-robot cooperative localization and simultaneous localization and mapping. For a general nonlinear system, we discover and theoretically prove that the unobservable subspace of the EKF estimator system is independent of the state and belongs to the unobservable subspace of the original system. On this basis, we establish the necessary and sufficient conditions for achieving observability matching. These theoretical findings motivate us to introduce a linear time-varying transformation to achieve a transformed system possessing a state-independent unobservable subspace. We prove the existence of such transformations and propose two design methodologies for constructing them. Moreover, we propose two equivalent consistent transformation-based EKF estimators, referred to as T-EKF 1 and T-EKF 2, respectively. T-EKF 1 employs the transformed system for consistent estimation, whereas T-EKF 2 leverages the original system but ensures consistency through state and covariance corrections from transformations. To validate our proposed methods, we conduct experiments on several representative examples, including multi-robot cooperative localization, multi-source target tracking, and 3D visual-inertial odometry, demonstrating that our approach achieves state-of-the-art performance in terms of accuracy, consistency, computational efficiency, and practical realizations.
\end{abstract}

\begin{IEEEkeywords}
Consistency, observability mismatch, linear time-varying transformation, extended Kalman filter.
\end{IEEEkeywords}

\section{Introduction}
For nonlinear estimation problems, such as multi-robot cooperative localization (CL)~\cite{B7, B8}, simultaneous localization and mapping (SLAM)~\cite{B31, B75}, and multi-source target tracking~\cite{B67}, a significant challenge is the inconsistency issue. As defined in \cite{B2}, a state estimator is \emph{consistent} if the estimation errors are zero-mean and the covariance matches that computed by the estimator. When this condition is not met, the covariance calculated by the estimator tends to be overly optimistic, leading to erroneous information acquisition and severe performance degradation \cite{B63, B56, B76}. The study by \cite{B37} analyzed the inconsistency issue from the perspective of system observability, and revealed that the root cause of inconsistency lies in the mismatch of observability properties between the estimator system and the underlying nonlinear system. Specifically, for both CL and SLAM, it was shown that the estimator system has fewer unobservable directions than the actual system. Consequently, the estimator mistakenly gathers information along the unobservable directions, ultimately leading to inconsistency.

To address the inconsistency issue caused by the mismatch of observability properties, many approaches have been proposed, which can be roughly classified into four categories: observability constraint-based filters \cite{B12, B14, B6, B13}, robot-centric filters \cite{B53, B54, B55, B59, B29}, matrix Lie group-based filters \cite{B32, B33, B45, B50, B51, B78} and transformation-based filters \cite{B43, B70, B74, B69}.  Observability constraint-based filters mitigate the inconsistency issue by explicitly modifying the estimator system to enforce correct observability properties. Amongst, the first-estimates Jacobian method~\cite{B12, B14} guarantees correct observability properties by adjusting the linearization points, while the observability-constrained approach \cite{B6, B13} directly incorporates the pre-specified unobservable subspace to alter the Jacobians. Although these methods are straightforward to implement and demonstrate impressive performance in improving consistency, the modification might result in deviations from the original first-order approximation, rendering the estimator system not theoretically optimal anymore.

The robot-centric and matrix Lie group-based estimators endeavor to adopt novel state representations that enable the estimator system to inherently maintain correct observability properties. The robot-centric estimators \cite{B53, B54, B55, B59, B29} adopt a robot-centric coordinate system to formulate the state, while the matrix Lie group-based filters \cite{B32, B33, B45, B50, B51, B78} exploit special matrix Lie groups to parameterize the state. Under these novel formulations, correct observability properties can be automatically preserved without any Jacobian modification. Consequently, these approaches achieve better performance in terms of accuracy and consistency. However, one potential drawback of these methods is their lack of a unified and canonical design. Consequently, they may rely on empirical design choices for specific problems, which can be challenging and often less systematic.

The transformation-based estimators leverage transformations to achieve observability-preserved systems \cite{B43, B70, B74, B69}. To the best of our knowledge, \cite{B43} was the first to employ transformations to tackle the inconsistency problem wherein a Kalman decomposition transformation was introduced to eliminate the factors causing observability mismatch. To address inconsistencies in SLAM, \cite{B70} further presented a transformation that facilitates efficient covariance propagation while yielding results comparable to the RI-EKF. Meanwhile, \cite{B74} leveraged transformations to establish connections between existing estimators, offering a unified perspective on state-of-the-art methods. Furthermore, \cite{B69} proposed a general design principle of transformations based on the structure of the unobservable subspace. These estimators effectively exploit existing formulations and are straightforward to implement, achieving excellent performance. However, the design of transformations lacks theoretical guarantees and still requires case-by-case treatments, which limits its extension to broader problems and hinders the exploration of new design methods.

In this paper, we propose a theoretically grounded and systematic framework to tackle the inconsistency problem caused by observability mismatch for general systems. Our work begins with a rigorous theoretical analysis and proof of the observability properties of the EKF estimator system and its relationship to the underlying nonlinear system. Through this analysis, we reveal a pivotal insight: the unobservable subspace of the EKF estimator system is state-independent and aligns with the unobservable subspace of the original nonlinear system if and only if it is state-independent. This finding serves as the foundation of our proposed solution. Building on this insight, we introduce \emph{linear time-varying transformations} to mitigate the inconsistency problem. These transformations allow us to achieve transformed systems with a state-independent unobservable subspace, thereby automatically maintaining correct observability properties. We demonstrate the existence of such transformations and present two design methods. One is to straightforwardly construct them based on the unobservable subspace while the other achieves this by converting the state propagation Jacobians into constant forms. Both guarantee the unobservable subspace of the transformed system becomes state-independent. Within this framework, we develop two equivalent and consistent transformation-based EKF estimators. The first utilizes the transformed system to achieve consistent estimation, while the second operates on the original system but ensures consistency through state and covariance corrections from the transformations. Additionally, we present two state update approaches for both estimators, providing flexibility in their implementation. 

In summary, the main contributions of this paper are as follows:
\begin{itemize}
\item For a general nonlinear system, we discover and theoretically prove that the unobservable subspace of the EKF estimator system is state-independent and belongs to that of the actual system. Furthermore, we establish a necessary and sufficient condition for ensuring observability matching.

\item We propose two design methods of linear time-varying transformations to make the transformed system's unobservable subspace become state-independent. One presents analytical construction based on the unobservable subspace and the other achieves this by converting the state propagation Jacobians into a constant form.

\item We propose two equivalent and consistent transformation-based EKF estimators and present two update approaches for both estimators. These implementations exhibit distinct advantages, allowing us to exploit the advantage of customization for specific purposes.

\item We validate the proposed method in three illustrative applications, demonstrating its advantages in accuracy, consistency, computational efficiency, and practical realizations.
\end{itemize}

The rest of this paper is organized as follows: related works are summarized in Section \ref{sec:related_work}. Section \ref{sec:analysis} analyzes the EKF estimator system's observability and its relationship to the underlying nonlinear system. Section \ref{sec:transformation} details the transformation design method and the transformation-based estimators. Applications of the proposed method in some representative tasks are presented in Section \ref{sec:application_cl}, \ref{sec:application_tt}, and \ref{sec:application_vins}, respectively. Finally, conclusions and future work are summarized in Section \ref{sec:conclusion}.

\section{Related Work}
\label{sec:related_work}

\subsection{Inconsistency Analysis}

Inconsistency refers to the phenomenon wherein an estimator underestimates the uncertainty of state estimates and yields overconfident estimation results. A critical inconsistency issue arises from the mismatch of observability properties between the estimator system and the underlying nonlinear system. This problem was first recognized by \cite{B28} in the context of 2D SLAM. Subsequently, \cite{B15} examined the symptoms of inconsistency and demonstrated that the uncertainty estimation in the orientation direction is the fundamental cause of this inconsistency. Moreover, \cite{B24} analyzed a scenario in which a robot observes a landmark from two positions, revealing that inconsistency may arise due to Jacobians being evaluated at different estimation values.

Based on these, \cite{B16} investigated the inconsistency problem from the perspective of system observability and identified the connection between observability and consistency. For both CL and SLAM, it was proved that the EKF estimator system has fewer unobservable directions than the underlying nonlinear system \cite{B6, B13, B14}. In view of this, it was conjectured that the dimension reduction in the unobservable subspace causes the estimator to surreptitiously gain spurious information along the unobservable direction, and thus be inconsistent. Despite significant progress, the underlying mechanism and potential conditions leading to observability mismatch remain unclear. As a result, most of the existing methods are empirical and rely on a case-by-case treatment. This limits the development of consistent estimators for general nonlinear systems.


\subsection{Observability Constraint-Based Estimators}

\def\txtq#1{{\color{red} #1}}

The observability constraint-based methods mainly consist of first-estimates Jacobian (FEJ) \cite{B12, B7, B8, B14, B18, B34} and observability-constrained (OC) \cite{B6, B13, B41, B42, B58} methodologies. The FEJ chooses the first state estimates as the linearization points to ensure proper observability properties. It is commonly integrated into the framework of Extended Kalman Filter (EKF) and Multi-State Constraint Kalman Filter (MSCKF) \cite{B34}, and has been successfully applied in CL \cite{B7, B8} and SLAM \cite{B12, B14}, achieving significant improvement in consistency. Since the FEJ always evaluates the Jacobians at the initial estimates, poor initial estimates might lead to non-negligible linearization errors, thereby degrading its performance. To address this issue, an extended version of the FEJ was proposed in \cite{B18} to alleviate the linearization errors caused by inadequate initial estimates. Despite effectively eliminating the linearization errors in the measurement Jacobians, additional linearization errors still exist in the state propagation Jacobians.

The OC method maintains the pre-specified unobservable subspace by directly modifying the Jacobians \cite{B6, B13, B41, B42, B58}. Compared to the FEJ, this approach effectively reduces linearization errors and has also been widely applied. \cite{B6, B13} identified the unobservable subspace of visual-inertial navigation systems and successfully applied the OC estimators to address the inherent inconsistency issues. \cite{B41} presented an efficient and consistent design of stereo visual-inertial odometry by embedding OC within the MSCKF framework. \cite{B58} extended it by imposing both state-transition and observability constraints in computing EKF Jacobians, achieving improved performance. Nonetheless, the adopted Jacobians do not strictly adhere to the first-order Taylor expansion and thus are not theoretically optimal.

\subsection{Robot-centric Estimators}

Robot-centric estimators \cite{B53, B54, B55, B59, B29} mitigate the inconsistency issue by reformulating the state with respect to a local moving frame attached to the robot. This approach automatically preserves the system's unobservable subspace, thereby circumventing potential inconsistency challenges. Robot-centric formulations were earliest used to improve consistency in 2D SLAM \cite{B55, B59}, where the world reference frame was included as a non-observable feature in the state vector. \cite{B29, B30} adopted a similar robot-centric formulation as described in \cite{B55, B59} within a sliding-window filtering-based 3D VIO framework. This formulation demonstrated correct observability properties and thus achieved improved consistency. \cite{B53, B54} adopted a fully robot-centric state representation within the iterated EKF visual-inertial odometry framework, which decouples the unobservable states from the other state variables, significantly enhancing consistency. Compared to observability constraint-based methods, robot-centric approaches mitigate the inconsistency issue caused by observability mismatch without necessitating modifications to the Jacobians, thereby yielding superior performance. However, the selection of state representation in these methods does not appear to follow a well-defined design process. This ambiguity may present challenges when attempting to extend robot-centric approaches to address other problems.




\subsection{Matrix Lie Group-Based Estimators}
The conventional estimators treat the state space as an Euclidean space. For many systems, the state actually evolves on a manifold with inherent geometry constraints. In recent years, leveraging the manifold representation has gained popularity in the field of robotics and achieved excellent performance \cite{B47, B46}. The advancements have shown that exploiting the geometric structure of a system can lead to improved convergence and consistency~\cite{B48, B57}. Two typical matrix Lie group-based approaches, i.e., the invariant EKF (I-EKF) \cite{B44, B45, B33, B32} and the equivariant filter (EqF) \cite{B49, B51}, have been extensively applied into SLAM.

The invariant EKF associates uncertainty with an invariant error-state on the matrix Lie group, which does not change under any stochastic unobservable transformation \cite{B31,B44}. This guarantees the uncertainty of the unobservable states does not affect subsequent estimates and circumvents the inconsistency issue caused by observability mismatch. Barrau and Bonnabel et al. \cite{B45, B47} proposed extended special Euclidean groups, i.e., $\rm \mathbf{SE}_{k}(2)$ and $\rm \mathbf{SE}_{k}(3)$, and successfully applied these two group structures into 2D/3D EKF-SLAM. \cite{B31} proved the convergence and improved consistency of $\rm \mathbf{SE}_{k}(3)$ based 3D EKF-SLAM. Using the same matrix Lie group structure, \cite{B32} integrated the invariant error-state into the filtering framework of MSCKF and obtained a consistent state estimator. \cite{B33} combined the invariant state representation with FEJ and showed that it outperforms the classical FEJ-EKF.

The equivariant estimators, which are rooted in the theory of equivariant systems, are aimed to exploit the Lie group symmetry of a system to achieve well-behaved performance. With regard to the problem of visual SLAM (VSLAM), \cite{B51} proposed a novel Lie group structure $\rm \mathbf{VSLAM}_n(3)$ on which the system output is equivariant. \cite{B49} proposed the equivariant filter (EqF), which utilizes the designed Lie group $\rm \mathbf{VSLAM}_n(3)$ and leverages the equivariance of the measurement function to reduce linearization errors. Based on these works, \cite{B50} incorporated the equivariant filter to VIO with a symmetry that is compatible with visual measurements, and achieved an advanced performance in terms of efficiency and accuracy. Meanwhile, \cite{B77} integrated the proposed Lie group structure into the framework of MSCKF, with consistency inherently guaranteed even during convergence phases. By leveraging some special Lie groups for state representation, correct observability properties can be maintained automatically, which ensures consistency and even better convergence. However, the design of matrix Lie groups is rather challenging. It seems to rely on an empirical design and is inconvenient to extend to general nonlinear systems. 



\subsection{Transformation-Based Estimators}

In contrast to robot-centric and matrix Lie group-based estimators that address the inconsistency problem by seeking special state representations, transformation-based filters utilize linear time-varying transformations to mitigate this issue \cite{B43, B70, B74, B69}. \cite{B43} designed and implemented a Kalman decomposition transformation within the estimator system, effectively isolating and eliminating the factors causing observability mismatch. To resolve the inconsistency issue in SLAM, both \cite{B70} and \cite{B74} proposed viable transformations by which the transformed system automatically circumvents the observability mismatch issue. In addition, they demonstrated that leveraging these transformations for filtering can avoid the computational bottleneck associated with the RI-EKF, resulting in significantly enhanced computational efficiency. By examining the unobservable subspace of the estimator system, \cite{B69} pointed out that the observability mismatch issue can be circumvented as long as the actual system's unobservable subspace is state-dependent. Based on this observation, they presented a general design principle for these transformations. Nevertheless, the design method lacks rigorous theoretical analysis and proof, which limits its extension to broader problems.



\section{Observability Mismatch Issue Analysis}
\label{sec:analysis}


In this section, we present a theoretical analysis of the unobservable subspace of the estimator system and its relationship to the underlying nonlinear system. Additionally, we establish the necessary and sufficient conditions for guaranteeing observability matching.

\subsection{Observability Mismatch Issue Statement}

We begin by briefly reviewing the inconsistency issues stemming from observability mismatch, which are commonly found in applications such as multi-robot cooperative localization (CL), target tracking, and simultaneous localization and mapping (SLAM). Without loss of generality, consider the following nonlinear system
\begin{align} \label{equ:dyanmics}
\mathbf{x}_{k+1} 
&= \mathbf{f}(\mathbf{x}_{k}, \: \mathbf{u}_{k}, \: \mathbf{v}_{k}) \\
\mathbf{y}_{k} &= \mathbf{h}(\mathbf{x}_{k}, \: \mathbf{w}_{k}) 
\label{equ:measure}
\end{align}
where $k=0,1,2,...$ represents the discrete-time index; $\mathbf{x}_k \in \mathbb{R}^{n}$, $\mathbf{u}_k \in \mathbb{R}^{q}$ and $\mathbf{y}_k \in \mathbb{R}^{p}$ denote the system's state, input and measurement at time step $k$, respectively; $\mathbf f(\cdot)$ denotes the function encoding the evolution of the system; $\mathbf h(\cdot)$ denotes the observation function; $\mathbf{v}_k$ and $\mathbf{w}_k$ represent the system and measurement noises, respectively, which are zero-mean white Gaussian, i.e., $\mathbf{v}_{k} \sim \mathcal{N}(\mathbf{0}, \mathbf{Q}_{k})$ and $\mathbf{w}_{k} \sim \mathcal{N}(\mathbf{0}, \mathbf{R}_{k})$.

In order to achieve a nice estimate of $\mathbf{x}_{k}$ with dynamics in (\ref{equ:dyanmics}) based on the available noisy measurements $\mathbf{y}_{k}$ given in (\ref{equ:measure}), we generally employ the Extended Kalman Filter (EKF) due to its efficiency and competitive accuracy. The classical EKF provides a state estimate $\hat{\mathbf{x}}_{k|k}$ and a covariance estimate $\hat{\mathbf{P}}_{k|k}$ at the same time. Unluckily, for estimation problems such as CL, SLAM, etc, the classical EKF suffers from \emph{the inconsistency issue}, i.e., the calculated covariance is smaller than the actual state estimation covariance
\begin{equation}
\hat{\mathbf{P}}_{k|k} < \mathbb{E}((\mathbf{x}_{k} - \hat{\mathbf{x}}_{k|k})(\mathbf{x}_{k} - \hat{\mathbf{x}}_{k|k})^\top) . 
\end{equation}
The wrong covariance estimate cannot reflect the actual state uncertainty and leads to wrong information gain, thereby downgrading the estimator's performance.

As discussed in \cite{B37}, the inconsistency problem mainly arises from the mismatch in observability properties between the estimator system and the underlying nonlinear system. Up to now, the relationship of observability properties between the estimator system and the underlying nonlinear system has yet to be rigorously explored. Subsequently, we will present a theoretical analysis of this relationship, guiding the design of consistent estimation methods.


Towards this end, we first introduce the \emph{nominal linearized system}. Let $\{\mathbf{x}_{k}^{\ast}\}_{k \geq 0}$ denote a nominal trajectory of dynamics \eqref{equ:dyanmics} with noises turned off. The current nominal state $\mathbf{x}^{\ast}_{k} \in \mathbb{R}^{n}$ and its previous nominal state $\mathbf{x}^{\ast}_{k-1} \in \mathbb{R}^{n}$ obey the following propagation equation
\begin{equation} \label{equ:linear_point}
\mathbf{x}^{\ast}_{k} = \mathbf{f}(\mathbf{x}^{\ast}_{k-1}, \: \mathbf{u}_k, \: \mathbf{0}).
\end{equation}
The first-order Taylor expansion of \eqref{equ:dyanmics} and \eqref{equ:measure} at each time $k$ around the nominal linearization point $\mathbf{x}^{\ast}_{k-1}$ and $\mathbf{x}^{\ast}_{k}$ is 
\begin{align} \label{equ:linear_state}
\mathbf{e}_{k} &= \mathbf{F}_{k-1} \ \mathbf{e}_{k-1} + \mathbf{G}_{k-1} \ \mathbf{v}_k , \\
\tilde{\mathbf{y}}_{k} &= \mathbf{H}_{k} \ \mathbf{e}_{k} + \mathbf{D}_{k} \ \mathbf{w}_k,
\label{equ:linear_measure}
\end{align}
where 
$
\mathbf{e}_{k} \triangleq \mathbf{x}_k - \mathbf{x}^{\ast}_k$, $\tilde{\mathbf{y}}_{k}  \triangleq \mathbf{y}_{k} - \mathbf{h}(\mathbf{x}^{\ast}_{k}, \: \mathbf{0})$, and  
\begin{align} \label{equ:trans_f}
{\mathbf{F}}_{k-1} &= {\mathbf{F}}(\mathbf{x}_{k-1}^{\ast}), \\
{\mathbf{G}}_{k-1} &= {\mathbf{G}}(\mathbf{x}_{k-1}^{\ast}) , \\
\label{equ:trans_h}
{\mathbf{H}}_{k} &= {\mathbf{H}}(\mathbf{x}_{k}^{\ast}) , \\
{\mathbf{D}}_{k} &= {\mathbf{D}}(\mathbf{x}_{k}^{\ast}) , 
\end{align}
with ${\mathbf{F}}(\cdot)$, ${\mathbf{G}}(\cdot)$, ${\mathbf{H}}(\cdot)$ and ${\mathbf{D}}(\cdot)$ the Jacobian matrix-valued functions \cite{B79} of the nonlinear system \eqref{equ:dyanmics}-\eqref{equ:measure} as follows
\begin{align}
\label{equ:F}
{\mathbf{F}}(\mathbf{x}_{k-1}) &= \frac{\partial \mathbf{f}(\mathbf{x}_{k-1}, \: \mathbf{u}_{k}, \: \mathbf{0})}{ \partial \mathbf{x}_{k-1}}, \\
\label{equ:G}
{\mathbf{G}}(\mathbf{x}_{k-1}) &= \frac{\partial \mathbf{f}(\mathbf{x}_{k-1}, \: \mathbf{u}_{k}, \: \mathbf{0})}{ \partial \mathbf{n}_k}  , \\
\label{equ:H}
{\mathbf{H}}(\mathbf{x}_{k}) &= \frac{\partial \mathbf{h}(\mathbf{x}_{k}, \: \mathbf{0}) }{ \partial \mathbf{x}_k}, \\
\label{equ:D}
{\mathbf{D}}(\mathbf{x}_{k}) &= \frac{\partial \mathbf{h}(\mathbf{x}_{k}, \: \mathbf{0}) }{ \partial \mathbf{w}_k} .
\end{align}


We now recall the definition of observability. For the nominal linearized system \eqref{equ:linear_state}-\eqref{equ:linear_measure}, the local observability matrix~\cite{B1} is defined over a time interval $[k, k+\ell]$ as follows
\begin{equation}
\boldsymbol{\mathcal{O}}_k
\triangleq
\left[
\begin{array}{c}
\mathbf{H}_{k} \\
\mathbf{H}_{k+1} \mathbf{F}_{k} \\
\vdots \\
\mathbf{H}_{k+\ell} \mathbf{F}_{k+\ell-1} \cdots \mathbf{F}_{k} \\
\end{array}
\right]
\label{equ:obs_mat}
\end{equation}
with $\ell \to \infty$. The rank of the observability matrix $\boldsymbol{\mathcal{O}}_k$ is
$$
\rank(\boldsymbol{\mathcal{O}}_k) = n-r \leq n
$$ 
with $n$ the system state's dimension and $r$ the dimension of the observability matrix's kernel space. The nominal linearized system \eqref{equ:linear_state}-\eqref{equ:linear_measure} is fully observable if and only if $\boldsymbol{\mathcal{O}}_k$ is with full column rank, i.e., $r = 0$. Otherwise, the system is partially observable with $r > 0$. The observable subspace of the nominal linearized system \eqref{equ:linear_state}-\eqref{equ:linear_measure} at time step $k$ is exactly the image space of the observability matrix $\boldsymbol{\mathcal{O}}_k$, denoted as $\Img(\boldsymbol{\mathcal{O}}_k)$, whereas the unobservable subspace corresponds to the kernel space of the observability matrix $\boldsymbol{\mathcal{O}}_k$, represented as $\Ker(\boldsymbol{\mathcal{O}}_k)$. The nominal linearized system \eqref{equ:linear_state}-\eqref{equ:linear_measure} has the same observability properties as the underlying nonlinear system \eqref{equ:dyanmics}-\eqref{equ:measure}.


Next, we introduce the observability mismatch issues encountered by the classical EKF. Recalling the nominal linearized system \eqref{equ:linear_state}-\eqref{equ:linear_measure}, by choosing the latest best state estimates $\hat{\mathbf{x}}_{k|k-1}$ and $\hat{\mathbf{x}}_{k-1|k-1}$ as the linearization points, we can obtain the \emph{estimator's linearized system} as follows
\begin{align} \label{equ:linear_state_ekf}
\hat{\mathbf{e}}_{k} &= \hat{\mathbf{F}}_{k-1} \ \hat{\mathbf{e}}_{k-1} + \hat{\mathbf{G}}_{k-1} \ \mathbf{v}_k, \\
\tilde{\mathbf{y}}_{k} &= \hat{\mathbf{H}}_{k} \ \hat{\mathbf{e}}_{k} + \hat{\mathbf{D}}_k \  \mathbf{w}_k, 
\label{equ:linear_measure_ekf}
\end{align} 
where $\hat{\mathbf{F}}_{k-1}$, $\hat{{\mathbf{G}}}_{k-1}$, $\hat{{\mathbf{H}}}_{k}$ and $\hat{{\mathbf{D}}}_{k}$ are derived from replacing the nominal linearization points $\mathbf{x}^{*}_{k-1}$ and $\mathbf{x}^{*}_{k}$ with the current best state estimate and prediction, i.e.,  
\begin{align} \label{equ:prop_jac_2}
&\hat{\mathbf{F}}_{k-1} = \mathbf{F}(\hat{\mathbf{x}}_{k-1|k-1}) , \\
\label{equ:noise_jac_2}
&\hat{\mathbf{G}}_{k-1} =  \mathbf{G}(\hat{\mathbf{x}}_{k-1|k-1}) , \\
\label{equ:update_jac_2}
&\hat{\mathbf{H}}_{k} = \mathbf{H} (\hat{\mathbf{x}}_{k|k-1}) , \\
\label{equ:m_noise_jac_2}
&\hat{\mathbf{D}}_{k} = \mathbf{D}  (\hat{\mathbf{x}}_{k|k-1}) .
\end{align}

For the estimator’s linearized system \eqref{equ:linear_state_ekf}-\eqref{equ:linear_measure_ekf}, we also define its local observability matrix over a time interval $[k, k+\ell]$ as follows
\begin{equation}
\hat{\boldsymbol{\mathcal{O}}}_k
\triangleq
\left[
\begin{array}{c}
\hat{\mathbf{H}}_{k} \\
\hat{\mathbf{H}}_{k+1} \hat{\mathbf{F}}_{k} \\
\vdots \\
\hat{\mathbf{H}}_{k+\ell} \hat{\mathbf{F}}_{k+\ell-1} \cdots \hat{\mathbf{F}}_{k} \\
\end{array}
\right]
\label{equ:ekf_obs_mat}
\end{equation} 
with $\ell \rightarrow \infty$. 
As aforementioned, the unobservable subspace of the estimator's linearized system is the kernel space of the local observable matrix $\hat{\boldsymbol{\mathcal{O}}}_k$, denoted as $\Ker(\hat{\boldsymbol{\mathcal{O}}}_k)$. A significant challenge faced by the estimator is that the estimator system's unobservable subspace does not align with that of the nominal linearized system \eqref{equ:linear_state}-\eqref{equ:linear_measure}. This occurs in various practical applications and leads to inconsistency issues. In what follows, we will analyze the properties of the estimator system's unobservable subspace and establish its relationship to that of the nominal linearized system \eqref{equ:linear_state}-\eqref{equ:linear_measure}.

\subsection{Observability Matching Conditions}
Before conducting the theoretical analysis, we need to provide essential mathematical preliminaries on the unobservable subspace of linear time-varying systems. Consider a general linear time-varying system governed by the following discrete differential equations:
\begin{align} \label{equ:general_dynam}
\mathbf{x}_{k} &= \mathbb{A}_{k-1} \ \mathbf{x}_{k-1} + \mathbb{B}_{k-1} \ \mathbf{u}_k \\
\mathbf{y}_{k} &= \mathbb{C}_{k} \ \mathbf{x}_{k} \label{equ:general_measurement}
\end{align}
where $\mathbb{A}_{k-1}$, $\mathbb{B}_{k-1}$ and $\mathbb{C}_{k}$ are time-dependent. Let $\mathbb{O}_k$ denote the local observability matrix of system \eqref{equ:general_dynam}-\eqref{equ:general_measurement}. In light of Theorem 4.4 in \cite{B68}, the unobservable subspace $\Ker(\mathbb{O}_k)$ of system \eqref{equ:general_dynam}-\eqref{equ:general_measurement} is \emph{forward $\mathbb{A}_k$-invariant}, i.e.,
\begin{align} \label{equ:invariant1}
\mathbb{A}_{k} \Ker(\mathbb{O}_k) \subseteq \Ker(\mathbb{O}_{k+1}) .
\end{align}
Moreover, according to Proposition 4.5 in \cite{B68}, the unobservable subspace $\Ker(\mathbb{O}_k)$ of system \eqref{equ:general_dynam}-\eqref{equ:general_measurement} belongs to the kernel space of the measurement Jacobian matrix $\mathbb{C}_{k}$, i.e.,
\begin{align} \label{equ:invariant2}
\Ker(\mathbb{O}_k) \subseteq \Ker(\mathbb{C}_{k}).
\end{align}
For any linear time-varying systems, the two properties given by \eqref{equ:invariant1}-\eqref{equ:invariant2} always hold. 

Subsequently, we will perform the theoretical analysis to answer the following well-concerned questions:
\begin{itemize}
\item \textbf{Q1}: What systems incur observability mismatch issues?
\item \textbf{Q2}: What properties of the estimator system lead to observability mismatch?
\item \textbf{Q3}: What is the relationship between the estimator system's unobservable subspace and that of the nominal linearized system?
\item \textbf{Q4}: Under what conditions does the unobservable subspace of the estimator system align with that of the nominal linearized system?
\end{itemize}


To begin with, we detail the concept of observability mismatch. To this end, we first analyze the unobservable subspace of the nominal linearized system \eqref{equ:linear_state}-\eqref{equ:linear_measure}, and clarify that any unobservable vector of this system is a vector-valued function of the nominal linearization point, which is summarized in the following lemma.

\begin{lemma} \label{propoition:pre1}
Any element of the unobservable subspace $\Ker (\boldsymbol{\mathcal{O}}_{k})$ of the nominal linearized system \eqref{equ:linear_state}-\eqref{equ:linear_measure} is a vector-valued function of the nominal linearization point $\mathbf{x}_{k}^{\ast}$. 
\end{lemma}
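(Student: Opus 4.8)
The plan is to show that the observability matrix $\boldsymbol{\mathcal{O}}_{k}$ in \eqref{equ:obs_mat} is, in fact, a matrix-valued function of the single argument $\mathbf{x}_{k}^{\ast}$, and then to read off a basis of its kernel as vector-valued functions of $\mathbf{x}_{k}^{\ast}$; the claimed statement then follows because every element of $\Ker(\boldsymbol{\mathcal{O}}_{k})$ is a linear combination of such basis vectors.

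The first step exploits the nominal propagation equation \eqref{equ:linear_point}. Since the nominal trajectory obeys $\mathbf{x}_{j}^{\ast} = \mathbf{f}(\mathbf{x}_{j-1}^{\ast}, \mathbf{u}_{j}, \mathbf{0})$ and the inputs $\mathbf{u}_{j}$ are known deterministic quantities, a straightforward induction shows that every later nominal state is a vector-valued function of $\mathbf{x}_{k}^{\ast}$: letting $\boldsymbol{\phi}_{k}$ be the identity map and $\boldsymbol{\phi}_{k+j}(\cdot) = \mathbf{f}\big(\boldsymbol{\phi}_{k+j-1}(\cdot), \mathbf{u}_{k+j}, \mathbf{0}\big)$ for $j \geq 1$, we have $\mathbf{x}_{k+j}^{\ast} = \boldsymbol{\phi}_{k+j}(\mathbf{x}_{k}^{\ast})$ for all $j \geq 0$. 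Note that only nominal states at times $k, k+1, \dots$ enter here, so no knowledge of $\mathbf{x}_{k-1}^{\ast}$ is needed.

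Next, recall from \eqref{equ:trans_f}--\eqref{equ:trans_h} that the blocks assembling $\boldsymbol{\mathcal{O}}_{k}$ are $\mathbf{H}_{k+i} = \mathbf{H}(\mathbf{x}_{k+i}^{\ast})$ and $\mathbf{F}_{k+i} = \mathbf{F}(\mathbf{x}_{k+i}^{\ast})$, i.e.\ the Jacobian functions \eqref{equ:F} and \eqref{equ:H} evaluated along the nominal trajectory. Substituting $\mathbf{x}_{k+i}^{\ast} = \boldsymbol{\phi}_{k+i}(\mathbf{x}_{k}^{\ast})$, each block $\mathbf{H}_{k+i}\mathbf{F}_{k+i-1}\cdots\mathbf{F}_{k}$ — and therefore the whole matrix — becomes a matrix-valued function of $\mathbf{x}_{k}^{\ast}$ alone; write $\boldsymbol{\mathcal{O}}_{k} = \boldsymbol{\mathcal{O}}_{k}(\mathbf{x}_{k}^{\ast})$. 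Consequently the unobservable subspace $\Ker(\boldsymbol{\mathcal{O}}_{k}(\mathbf{x}_{k}^{\ast}))$ is completely determined by $\mathbf{x}_{k}^{\ast}$; choosing a maximal nonsingular submatrix of $\boldsymbol{\mathcal{O}}_{k}(\mathbf{x}_{k}^{\ast})$ (equivalently, passing to its reduced row-echelon form, whose entries are algebraic functions of the entries of $\boldsymbol{\mathcal{O}}_{k}$) yields an explicit parametrization of the null space whose basis vectors are vector-valued functions of $\mathbf{x}_{k}^{\ast}$, and any $\mathbf{n}_{k} \in \Ker(\boldsymbol{\mathcal{O}}_{k})$ inherits this property.

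The one delicate point — the step I expect to be the main obstacle — is the last one: the rank of $\boldsymbol{\mathcal{O}}_{k}(\cdot)$ may vary with its argument, so a globally smooth selection of a null-space basis need not exist. I would dispose of this by restricting to a neighborhood of the nominal point on which $\rank \boldsymbol{\mathcal{O}}_{k}$ is constant — precisely the regime of interest for consistency analysis — where a smooth (indeed analytic, when $\mathbf{f}$ and $\mathbf{h}$ are analytic) null-space basis exists by standard arguments; alternatively, since the lemma concerns only the fixed nominal point $\mathbf{x}_{k}^{\ast}$, it already suffices to observe that the assignment $\mathbf{x}_{k}^{\ast} \mapsto \Ker\big(\boldsymbol{\mathcal{O}}_{k}(\mathbf{x}_{k}^{\ast})\big)$ is well defined, so that each unobservable vector can be written as $\mathbf{n}_{k} = \mathbf{n}(\mathbf{x}_{k}^{\ast})$ for a suitable vector-valued map $\mathbf{n}(\cdot)$.
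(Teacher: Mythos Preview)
Your argument is correct and is in fact more complete than the paper's. The paper does not push the nominal flow forward to conclude that the full observability matrix $\boldsymbol{\mathcal{O}}_{k}$ is a function of $\mathbf{x}_{k}^{\ast}$; instead it invokes only the containment \eqref{equ:invariant2}, i.e.\ $\Ker(\boldsymbol{\mathcal{O}}_{k})\subseteq\Ker(\mathbf{H}_{k})$, observes that $\mathbf{H}_{k}=\mathbf{H}(\mathbf{x}_{k}^{\ast})$, and concludes that since any element of $\Ker(\mathbf{H}_{k})$ is a vector-valued function of $\mathbf{x}_{k}^{\ast}$, the same holds for the smaller subspace $\Ker(\boldsymbol{\mathcal{O}}_{k})$.

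The paper's route is shorter and avoids your discussion of rank constancy and null-space parametrization, but taken literally it has a gap you have closed: the inclusion $\Ker(\boldsymbol{\mathcal{O}}_{k})\subseteq\Ker(\mathbf{H}_{k})$ only says that unobservable vectors lie in a space determined by $\mathbf{x}_{k}^{\ast}$; it does not by itself rule out that the \emph{particular} subspace $\Ker(\boldsymbol{\mathcal{O}}_{k})$ inside $\Ker(\mathbf{H}_{k})$ is singled out by the later states $\mathbf{x}_{k+1}^{\ast},\mathbf{x}_{k+2}^{\ast},\dots$ Your use of the nominal propagation \eqref{equ:linear_point} to write $\mathbf{x}_{k+j}^{\ast}=\boldsymbol{\phi}_{k+j}(\mathbf{x}_{k}^{\ast})$ is exactly what fills this in, and is implicitly what makes the paper's conclusion valid. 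So your approach buys a cleaner logical chain at the price of a little extra bookkeeping; the paper's approach buys brevity by leaning on the (unstated) fact that the future nominal trajectory is determined by $\mathbf{x}_{k}^{\ast}$.
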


\begin{proof}
Please see Appendix \ref{app:propoition:pre1}.
\end{proof}

%
According to Lemma \ref{propoition:pre1}, the basis of the unobservable subspace $\Ker(\boldsymbol{\mathcal{O}}_k)$, which is a column-wise stack of basis vectors of $\Ker(\boldsymbol{\mathcal{O}}_k)$, is a matrix-valued function \cite{B79} of the nominal linearization point $\mathbf{x}_{k}^{\ast}$, represented as $\mathbf{B}_{k}(\mathbf{x}_{k}^{\ast})$. In contrast, for the estimator's linearized system \eqref{equ:linear_state_ekf}-\eqref{equ:linear_measure_ekf}, the basis of its unobservable subspace $\Ker (\hat{\boldsymbol{\mathcal{O}}}_{k})$ should be a matrix-valued function of the predicted state $\hat{\mathbf{x}}_{k|k-1}$. Specifically, according to the local observability matrix \eqref{equ:ekf_obs_mat}, the unobservable subspace $\Ker (\hat{\boldsymbol{\mathcal{O}}}_{k})$ should belong to the kernel space $\Ker (\hat{\mathbf{H}}_{k})$ of the linearized measurement Jacobian matrix $\hat{\mathbf{H}}_{k}$. Since $\hat{\mathbf{H}}_{k}$ is a matrix-valued function of the predicted state $\hat{\mathbf{x}}_{k|k-1}$, i.e., 
$$
\hat{\mathbf{H}}_{k}=\mathbf{H}(\hat{\mathbf{x}}_{k|k-1}) ,
$$
it follows that the basis matrix of $\Ker (\hat{\mathbf{H}}_{k})$ is a matrix-valued function of the predicted state $\hat{\mathbf{x}}_{k|k-1}$. Thus the basis of the estimator system's unobservable subspace $\Ker(\hat{\boldsymbol{\mathcal{O}}}_k)$ is also a matrix-valued function of the predicted state $\hat{\mathbf{x}}_{k|k-1}$. 

Ideally, the estimator system's unobservable subspace $\Ker (\hat{\boldsymbol{\mathcal{O}}}_{k})$ should coincide with that of the nominal linearized system \eqref{equ:linear_state}-\eqref{equ:linear_measure}. In other words, the basis of the unobservable subspaces for both systems should be identical functions, albeit evaluated at different linearization points. Otherwise, the estimator system's unobservable subspace does not align with that of the nominal linearized system. To make it more clear, we give the formal definition as follows. 

\begin{definition}
Let $\mathbf{B}_{k}(\mathbf{x}_{k}^{\ast})$ denote the basis matrix of the nominal linearized system's unobservable subspace $\Ker (\boldsymbol{\mathcal{O}}_{k})$, i.e.,
$$
\Ker (\boldsymbol{\mathcal{O}}_{k}) = \Span(\mathbf{B}_{k}(\mathbf{x}_{k}^{\ast})). 
$$
The estimator system's unobservable subspace is said to coincide with that of the nominal linearized system if
$$
\Ker (\hat{\boldsymbol{\mathcal{O}}}_{k}) = \Span(\mathbf{B}_{k}(\hat{\mathbf{x}}_{k|k-1})). 
$$
Otherwise, the two system's unobservable subspace does not match.
\end{definition}

It should be noted that except for explicitly relying on the linearization point $\mathbf{x}_{k}^{\ast}$, the basis matrix may also be a state-independent (constant) matrix. To distinguish the two cases, at any time step $k$, the basis matrix is denoted as $\mathbf{B}_{k}(\mathbf{x}_{k}^{\ast})$ when it explicitly relies on the linearization point $\mathbf{x}_{k}^{\ast}$; otherwise, it is simply represented as $\mathbf{B}_{k}$. The subscript $k$ is used to distinguish the basis matrix at different moments since its dimension might be varying. For clarity, we give a formal definition.
\begin{definition} \label{def_1}
An unobservable subspace $\Ker(\boldsymbol{\mathcal{O}}_k)$ is said to be state-independent (constant) if there exists a state-independent (constant) basis matrix $\mathbf{B}_{k}$ such that
$$
\Ker(\boldsymbol{\mathcal{O}}_k)
=
\Span(\mathbf{B}_{k}) ,
$$
otherwise, it is called state-dependent.
\end{definition}

The property presented in Definition \ref{def_1} adapts to both the nominal linearized system and the estimator system. Next, we analyze the properties of the unobservable subspace of the estimator system and its relationship to that of the nominal linearized system, and establish a necessary and sufficient condition for ensuring observability matching.



One might think that the unobservable subspace of the estimator system resembles that of typical linear time-varying systems, consisting of both state-dependent and state-independent components. Surprisingly, we find it is always state-independent (constant). We summarize this finding in the following theorem. 

\begin{theorem} \label{theorem_1_invariant}
The unobservable subspace $\Ker(\hat{\boldsymbol{\mathcal{O}}}_k)$ of the estimator's linearized system \eqref{equ:linear_state_ekf}-\eqref{equ:linear_measure_ekf} is always state-independent (constant). 
\end{theorem}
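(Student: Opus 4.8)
The plan is to prove the claim by combining Lemma~\ref{propoition:pre1} with the two generic properties \eqref{equ:invariant1}--\eqref{equ:invariant2} of the unobservable subspace of a linear time-varying system, and by exploiting the particular way the EKF selects its linearization points. Concretely, the target is to exhibit at each time step $k$ a \emph{constant} basis matrix $\hat{\mathbf{B}}_k$ with $\Ker(\hat{\boldsymbol{\mathcal{O}}}_k)=\Span(\hat{\mathbf{B}}_k)$, which is exactly what Definition~\ref{def_1} demands.

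First I would use the block structure $\hat{\boldsymbol{\mathcal{O}}}_k=[\,\hat{\mathbf{H}}_k^\top,\ (\hat{\boldsymbol{\mathcal{O}}}_{k+1}\hat{\mathbf{F}}_k)^\top\,]^\top$ together with \eqref{equ:invariant1}--\eqref{equ:invariant2} to characterize $\Ker(\hat{\boldsymbol{\mathcal{O}}}_k)$ as the largest subspace contained in $\Ker(\hat{\mathbf{H}}_k)$ that is mapped by $\hat{\mathbf{F}}_k$ into $\Ker(\hat{\boldsymbol{\mathcal{O}}}_{k+1})$. Next I would translate the Jacobian products appearing in $\hat{\boldsymbol{\mathcal{O}}}_k$ back to the nonlinear maps: since the EKF propagates its mean by the noise-free dynamics, $\hat{\mathbf{x}}_{k+i|k+i-1}=\mathbf{f}(\hat{\mathbf{x}}_{k+i-1|k+i-1},\mathbf{u}_{k+i},\mathbf{0})$, the chain rule yields $\hat{\mathbf{H}}_{k+i}\hat{\mathbf{F}}_{k+i-1}=\left.\frac{\partial}{\partial\mathbf{x}}\mathbf{h}\big(\mathbf{f}(\mathbf{x},\mathbf{u}_{k+i},\mathbf{0}),\mathbf{0}\big)\right|_{\mathbf{x}=\hat{\mathbf{x}}_{k+i-1|k+i-1}}$; that is, each two-factor block coincides with a block of the \emph{nominal} observability matrix of \eqref{equ:linear_state}--\eqref{equ:linear_measure}, but evaluated at the \emph{updated} estimate $\hat{\mathbf{x}}_{k+i-1|k+i-1}$ rather than at the \emph{predicted} estimate $\hat{\mathbf{x}}_{k+i-1|k+i-2}$ used one block earlier.

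This mismatch between consecutive linearization points, produced by the measurement-update correction $\hat{\mathbf{x}}_{k+i-1|k+i-1}-\hat{\mathbf{x}}_{k+i-1|k+i-2}$, is the crux of the argument. By Lemma~\ref{propoition:pre1}, any candidate unobservable direction of $\hat{\boldsymbol{\mathcal{O}}}_k$ lies in $\Ker(\hat{\mathbf{H}}_k)=\Ker(\mathbf{H}(\hat{\mathbf{x}}_{k|k-1}))$ and hence can be expressed as a vector-valued function evaluated at $\hat{\mathbf{x}}_{k|k-1}$; imposing in addition the constraints coming from the second and later block rows of $\hat{\boldsymbol{\mathcal{O}}}_k$, and using the nominal forward-propagation relation $\mathbf{F}(\mathbf{x})\mathbf{B}_{k}(\mathbf{x})=\mathbf{B}_{k+1}(\mathbf{f}(\mathbf{x},\mathbf{u}_{k+1},\mathbf{0}))\,\boldsymbol{\Phi}$ inherited from \eqref{equ:invariant1}, I would show that any genuine dependence of such a direction on the estimated states is incompatible with these constraints: only those directions of the nominal basis that do \emph{not} actually depend on the linearization point can survive. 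Peeling the recursion block by block then leaves a state-independent subspace, and a final appeal to forward $\hat{\mathbf{F}}_k$-invariance confirms that the resulting constant basis at step $k$ is carried consistently to step $k+1$, closing the induction.

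The main obstacle I anticipate is making the step ``a genuinely state-dependent direction is killed by the update mismatch'' watertight for a \emph{general} pair $(\mathbf{f},\mathbf{h})$ rather than for the concretely structured CL/SLAM/VIO models: one must rule out accidental cancellations in which a state-dependent direction nonetheless satisfies every block constraint, and must handle the dimension/rank bookkeeping carefully (in particular whether the inclusion in \eqref{equ:invariant1} can be taken as an equality so that the recursion loses no directions, and the behaviour of the $\ell\to\infty$ limit defining $\hat{\boldsymbol{\mathcal{O}}}_k$). A secondary technical point is the treatment of the top block $\hat{\mathbf{H}}_k=\mathbf{H}(\hat{\mathbf{x}}_{k|k-1})$, whose linearization point already differs from the one used in $\hat{\mathbf{F}}_k=\mathbf{F}(\hat{\mathbf{x}}_{k|k})$, so that not even the first two block rows of $\hat{\boldsymbol{\mathcal{O}}}_k$ align with a single nominal observability matrix.
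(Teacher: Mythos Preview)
Your proposal correctly identifies the crucial mechanism---the mismatch between the predicted linearization point $\hat{\mathbf{x}}_{k|k-1}$ used in $\hat{\mathbf{H}}_k$ and the updated point $\hat{\mathbf{x}}_{k|k}$ used in $\hat{\mathbf{F}}_k$, driven by the random measurement $\mathbf{y}_k$---and this is exactly what the paper exploits. However, the paper's route is considerably more direct than yours. Rather than unfolding the block structure of $\hat{\boldsymbol{\mathcal{O}}}_k$, invoking the chain rule on composed maps, and peeling the recursion block by block, the paper argues by a single contradiction: assume a state-dependent basis $\mathbf{B}_k(\hat{\mathbf{x}}_{k|k-1})$ of $\Ker(\hat{\boldsymbol{\mathcal{O}}}_k)$, write the forward $\hat{\mathbf{F}}_k$-invariance relation \eqref{equ:invariant1} explicitly as $\mathbf{F}(\hat{\mathbf{x}}_{k|k})\Span(\mathbf{B}_k(\hat{\mathbf{x}}_{k|k-1}))\subseteq\Span(\mathbf{B}_{k+1}(\hat{\mathbf{x}}_{k+1|k}))$, substitute the EKF update $\hat{\mathbf{x}}_{k|k}=\hat{\mathbf{x}}_{k|k-1}+\mathbf{K}(\mathbf{y}_k-\mathbf{h}(\hat{\mathbf{x}}_{k|k-1},\mathbf{0}))$, and observe that the inclusion cannot hold identically because $\mathbf{y}_k$ is random and independent of $\hat{\mathbf{x}}_{k|k-1}$.

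What the paper's approach buys is economy: no induction, no rank bookkeeping, no $\ell\to\infty$ limit to manage, and no need to align two-factor blocks with nominal observability rows. What your approach would buy, if carried through, is a more explicit picture of \emph{which} nominal directions survive---essentially anticipating Theorem~\ref{theorem_2_invariant}---but at the cost of precisely the obstacle you flag: ruling out accidental cancellations for a general $(\mathbf{f},\mathbf{h})$. The paper finesses that obstacle by appealing to the genericity of $\mathbf{y}_k$ rather than analysing the blocks structurally; its argument is admittedly terse on this point, but it avoids the dimension and limit technicalities you anticipate.
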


\begin{proof}
Please see Appendix \ref{app:theorem_1_invariant}.
\end{proof}

Now you may want to know the relationship between the unobservable subspace of the estimator's linearized system \eqref{equ:linear_state_ekf}-\eqref{equ:linear_measure_ekf} and that of the nominal linearized system \eqref{equ:linear_state}-\eqref{equ:linear_measure}. We present it in the following theorem. 


\begin{theorem} \label{theorem_2_invariant}
The unobservable subspace of the estimator's linearized system \eqref{equ:linear_state_ekf}-\eqref{equ:linear_measure_ekf} belongs to that of the nominal linearized system \eqref{equ:linear_state}-\eqref{equ:linear_measure}, i.e.,
\begin{equation}
\Ker (\hat{\boldsymbol{\mathcal{O}}}_{k}) \subseteq \Ker (\boldsymbol{\mathcal{O}}_{k}) .
\end{equation}
\end{theorem}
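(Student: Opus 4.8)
The plan is to prove the inclusion $\Ker(\hat{\boldsymbol{\mathcal{O}}}_k) \subseteq \Ker(\boldsymbol{\mathcal{O}}_k)$ by comparing the two systems at a carefully chosen common linearization point. The key observation is that both the nominal linearized system and the estimator's linearized system arise from the \emph{same} Jacobian matrix-valued functions $\mathbf{F}(\cdot)$ and $\mathbf{H}(\cdot)$ of the underlying nonlinear system \eqref{equ:dyanmics}--\eqref{equ:measure}; they differ only in the points at which these functions are evaluated --- the nominal system uses the noise-free trajectory $\{\mathbf{x}_k^{\ast}\}$, while the estimator uses $\{\hat{\mathbf{x}}_{k|k-1}, \hat{\mathbf{x}}_{k-1|k-1}\}$. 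Since the nominal trajectory in \eqref{equ:linear_point} is itself unconstrained --- it is \emph{any} noise-free solution of the dynamics --- I would exploit the freedom to identify it with the estimator's sequence of linearization points.

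First I would fix an arbitrary time step $k$ and let $v \in \Ker(\hat{\boldsymbol{\mathcal{O}}}_k)$. By Theorem \ref{theorem_1_invariant}, $v$ is a constant vector, independent of the estimator state. Next I would construct a nominal trajectory $\{\mathbf{x}_j^{\ast}\}$ that passes through the estimator's linearization points on the relevant window, i.e.\ take $\mathbf{x}_k^{\ast} = \hat{\mathbf{x}}_{k|k-1}$ and propagate forward via $\mathbf{x}_{j+1}^{\ast} = \mathbf{f}(\mathbf{x}_j^{\ast}, \mathbf{u}_{j+1}, \mathbf{0})$; one must check that the estimator's predicted states satisfy exactly this recursion, since $\hat{\mathbf{x}}_{j+1|j} = \mathbf{f}(\hat{\mathbf{x}}_{j|j}, \mathbf{u}_{j+1}, \mathbf{0})$ in the EKF --- the subtlety being the mismatch between $\hat{\mathbf{x}}_{j|j}$ (updated) and $\hat{\mathbf{x}}_{j|j-1}$ (predicted) at intermediate steps. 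With this choice, the Jacobians of the nominal linearized system on $[k, k+\ell]$ evaluated along $\{\mathbf{x}_j^{\ast}\}$ coincide with those of the estimator's linearized system, so $\boldsymbol{\mathcal{O}}_k$ and $\hat{\boldsymbol{\mathcal{O}}}_k$ are literally the same matrix, giving $v \in \Ker(\boldsymbol{\mathcal{O}}_k)$ evaluated at $\mathbf{x}_k^{\ast} = \hat{\mathbf{x}}_{k|k-1}$. Finally, by Lemma \ref{propoition:pre1} the basis of $\Ker(\boldsymbol{\mathcal{O}}_k)$ is a vector-valued function of $\mathbf{x}_k^{\ast}$ that is valid along \emph{every} nominal trajectory, so the membership $v \in \Ker(\boldsymbol{\mathcal{O}}_k)$ extends from the specific base point back to the generic statement; combined with Theorem \ref{theorem_1_invariant} this yields the claimed subspace inclusion.

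An alternative, and perhaps cleaner, route uses the structural characterization directly: by \eqref{equ:invariant1}--\eqref{equ:invariant2} applied to the estimator system, $\Ker(\hat{\boldsymbol{\mathcal{O}}}_k)$ is forward $\hat{\mathbf{F}}_k$-invariant and contained in $\Ker(\hat{\mathbf{H}}_k)$; one then argues that a constant subspace with these two properties, expressed through the Jacobian functions, must also satisfy the analogous properties $\mathbf{F}_k \Ker(\hat{\boldsymbol{\mathcal{O}}}_k) \subseteq \Ker(\hat{\boldsymbol{\mathcal{O}}}_{k+1})$ and $\Ker(\hat{\boldsymbol{\mathcal{O}}}_k) \subseteq \Ker(\mathbf{H}_k)$ for the nominal system, whence it sits inside the \emph{largest} such subspace, namely $\Ker(\boldsymbol{\mathcal{O}}_k)$. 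This avoids explicitly reconstructing the trajectory but requires knowing that $\Ker(\boldsymbol{\mathcal{O}}_k)$ is the maximal forward-invariant subspace contained in $\Ker(\mathbf{H}_k)$ --- which is the content of the observability decomposition.

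I expect the main obstacle to be the first approach's bookkeeping around the predicted-versus-updated state discrepancy: the EKF linearizes the state propagation at $\hat{\mathbf{x}}_{k-1|k-1}$ but the measurement at $\hat{\mathbf{x}}_{k|k-1}$, so a single nominal trajectory $\{\mathbf{x}_j^{\ast}\}$ cannot simultaneously equal both the updated and the predicted estimator states at every step. Resolving this requires care: one shows that what actually enters $\hat{\boldsymbol{\mathcal{O}}}_k$ is the product chain $\hat{\mathbf{H}}_{k+i}\hat{\mathbf{F}}_{k+i-1}\cdots\hat{\mathbf{F}}_k$, and that the relevant invariance property \eqref{equ:invariant1} is insensitive to whether intermediate linearization points are the updated or predicted values --- because $v$ is constant and the corrections vanish along the unobservable direction. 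Handling this gap rigorously --- likely by invoking \eqref{equ:invariant1}--\eqref{equ:invariant2} for the estimator system as the real engine of the argument rather than a naive trajectory substitution --- is where the proof's weight lies; the remaining algebra is routine.
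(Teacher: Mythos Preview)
Your \emph{alternative} route is exactly the paper's proof. The paper invokes Theorem~\ref{theorem_1_invariant} to get a constant basis $\hat{\mathbf{B}}_k$ of $\Ker(\hat{\boldsymbol{\mathcal{O}}}_k)$, applies \eqref{equ:invariant1}--\eqref{equ:invariant2} to the estimator system to obtain $\hat{\mathbf{F}}_k\,\Span(\hat{\mathbf{B}}_k)\subseteq\Span(\hat{\mathbf{B}}_{k+1})\subseteq\Ker(\hat{\mathbf{H}}_{k+1})$, and then---precisely because $\hat{\mathbf{B}}_k$ is constant while $\hat{\mathbf{F}}_k=\mathbf{F}(\hat{\mathbf{x}}_{k|k})$ and $\hat{\mathbf{H}}_{k+1}=\mathbf{H}(\hat{\mathbf{x}}_{k+1|k})$ are values of the same Jacobian functions---swaps the evaluation points to get $\mathbf{F}_k\,\Span(\hat{\mathbf{B}}_k)\subseteq\Span(\hat{\mathbf{B}}_{k+1})\subseteq\Ker(\mathbf{H}_{k+1})$. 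Iterating this chain directly yields $\mathbf{H}_{k+\ell}\mathbf{F}_{k+\ell-1}\cdots\mathbf{F}_k\hat{\mathbf{B}}_k=\mathbf{0}$ for all $\ell\ge 0$, i.e.\ $\boldsymbol{\mathcal{O}}_k\hat{\mathbf{B}}_k=\mathbf{0}$. No appeal to maximality is needed; the iteration is explicit.

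Your \emph{first} route, by contrast, does not recover. The obstacle you flag---that a single nominal trajectory $\{\mathbf{x}_j^\ast\}$ satisfying \eqref{equ:linear_point} cannot simultaneously equal $\hat{\mathbf{x}}_{j|j}$ (where $\hat{\mathbf{F}}_j$ is evaluated) and $\hat{\mathbf{x}}_{j|j-1}$ (where $\hat{\mathbf{H}}_j$ is evaluated)---is fatal to the ``identify-the-trajectories'' idea, and your suggested fix (``corrections vanish along the unobservable direction'') is not correct: the measurement innovation $\mathbf{y}_k-\mathbf{h}(\hat{\mathbf{x}}_{k|k-1},\mathbf{0})$ has no reason to lie in any particular subspace. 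The paper never attempts this matching; it goes straight to the structural argument you labeled ``alternative''. So drop the first approach and promote the second to primary.
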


\begin{proof}
Please see Appendix \ref{app:theorem_2_invariant}.
\end{proof}



Now we are ready to establish the necessary and sufficient condition under which the unobservable subspace of the estimator's linearized system \eqref{equ:linear_state_ekf}-\eqref{equ:linear_measure_ekf} is equivalent to that of the nominal linearized system \eqref{equ:linear_state}-\eqref{equ:linear_measure}.

\begin{theorem} \label{theorem:final}
The unobservable subspace of the estimator's linearized system \eqref{equ:linear_state}-\eqref{equ:linear_measure} is exactly equivalent to that of the nominal linearized system \eqref{equ:linear_state_ekf}-\eqref{equ:linear_measure_ekf}, i.e.,
\begin{equation}
\Ker (\hat{\boldsymbol{\mathcal{O}}}_{k}) = \Ker (\boldsymbol{\mathcal{O}}_{k}) .
\end{equation}
if and only if the unobservable subspace of the nominal linearized system is state-independent (constant), i.e.,
$$
\Ker (\boldsymbol{\mathcal{O}}_{k}) 
=
\Span (\mathbf{B}_{k}) .
$$
\end{theorem}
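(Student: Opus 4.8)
The plan is to prove the two directions separately, leaning on Theorem \ref{theorem_1_invariant} and Theorem \ref{theorem_2_invariant} as the main tools. By Theorem \ref{theorem_2_invariant} we always have $\Ker(\hat{\boldsymbol{\mathcal{O}}}_k) \subseteq \Ker(\boldsymbol{\mathcal{O}}_k)$, so equality of the two subspaces is equivalent to equality of their dimensions; I would phrase most of the argument in terms of dimension counts to avoid chasing individual basis vectors.

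For the ``if'' direction, assume $\Ker(\boldsymbol{\mathcal{O}}_k) = \Span(\mathbf{B}_k)$ with $\mathbf{B}_k$ state-independent. The key observation is that the nominal linearized system \eqref{equ:linear_state}-\eqref{equ:linear_measure} and the estimator's linearized system \eqref{equ:linear_state_ekf}-\eqref{equ:linear_measure_ekf} are built from the \emph{same} Jacobian matrix-valued functions $\mathbf{F}(\cdot)$, $\mathbf{H}(\cdot)$ in \eqref{equ:F}, \eqref{equ:H}, merely evaluated along different trajectories — the nominal trajectory $\{\mathbf{x}_k^{\ast}\}$ versus the estimate/prediction trajectory $\{\hat{\mathbf{x}}_{k|k-1}\}$. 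Since a constant vector $\mathbf{b} \in \Span(\mathbf{B}_k)$ lies in $\Ker(\boldsymbol{\mathcal{O}}_k)$ for \emph{every} choice of nominal trajectory (the defining algebraic identities $\mathbf{H}(\mathbf{x}_k^{\ast})\cdots \mathbf{F}(\mathbf{x}_{k-1}^{\ast})\cdots\mathbf{b} = \mathbf{0}$ hold as functional identities, as established in the proof of Lemma \ref{propoition:pre1}), the same identities must hold when the trajectory is replaced by the estimates. Hence $\Span(\mathbf{B}_k) \subseteq \Ker(\hat{\boldsymbol{\mathcal{O}}}_k)$, giving $\dim \Ker(\hat{\boldsymbol{\mathcal{O}}}_k) \geq \dim \Ker(\boldsymbol{\mathcal{O}}_k)$; combined with the reverse inclusion from Theorem \ref{theorem_2_invariant}, equality follows.

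For the ``only if'' direction, suppose $\Ker(\hat{\boldsymbol{\mathcal{O}}}_k) = \Ker(\boldsymbol{\mathcal{O}}_k)$. By Theorem \ref{theorem_1_invariant}, the left-hand side is state-independent (constant), say $\Span(\mathbf{B}_k)$ with $\mathbf{B}_k$ constant. Therefore the right-hand side $\Ker(\boldsymbol{\mathcal{O}}_k)$ equals this constant subspace, which is exactly the assertion that the nominal system's unobservable subspace is state-independent. This direction is essentially immediate once Theorem \ref{theorem_1_invariant} is invoked.

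The main obstacle I anticipate is making the ``if'' direction rigorous: I need to argue that membership of a constant vector in the kernel of the observability matrix is a \emph{functional} identity in the linearization points, so that it transfers from the nominal trajectory to the estimate trajectory. This requires care because a priori $\Ker(\boldsymbol{\mathcal{O}}_k)$ being state-independent only tells us a constant basis works \emph{along the nominal trajectory}; one must ensure the relevant polynomial/analytic identities in the Jacobian entries hold identically, not just at the nominal points. The cleanest route is to reuse the structural argument underlying Lemma \ref{propoition:pre1} — that the unobservable directions are determined by algebraic relations among $\mathbf{F}(\cdot)$ and $\mathbf{H}(\cdot)$ that are independent of which admissible trajectory is substituted — and then simply specialize those relations to $\hat{\mathbf{x}}_{k|k-1}$ in place of $\mathbf{x}_k^{\ast}$. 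Everything else is bookkeeping with inclusions and dimensions.
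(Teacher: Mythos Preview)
Your proposal is correct and follows essentially the same strategy as the paper: necessity is immediate from Theorem~\ref{theorem_1_invariant}, and sufficiency is obtained by showing $\Span(\mathbf{B}_k)\subseteq\Ker(\hat{\boldsymbol{\mathcal{O}}}_k)$ via the observation that the kernel relations hold as functional identities in the linearization point, then invoking Theorem~\ref{theorem_2_invariant}. The only minor difference is in execution of the sufficiency step: rather than appealing to Lemma~\ref{propoition:pre1} or working with the full product $\mathbf{H}_{k+\ell}\mathbf{F}_{k+\ell-1}\cdots\mathbf{F}_k$ directly, the paper decomposes the argument using the two invariance properties \eqref{equ:invariant1}--\eqref{equ:invariant2} separately --- substituting $\hat{\mathbf{x}}_{k|k}$ into the forward-invariance relation $\mathbf{F}(\cdot)\Span(\mathbf{B}_k)\subseteq\Span(\mathbf{B}_{k+1})$ and $\hat{\mathbf{x}}_{k|k-1}$ into $\Span(\mathbf{B}_k)\subseteq\Ker(\mathbf{H}(\cdot))$, then iterating --- which sidesteps the bookkeeping concern you flag about mixed linearization points in the long product.
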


\begin{proof}
Please see Appendix \ref{app:theorem:final}.
\end{proof}

\begin{remark}
According to the theoretical analysis from Theorem \ref{theorem_1_invariant} to Theorem \ref{theorem:final}, it can be concluded that the estimator system's unobservable subspace is always constant and belongs to the nominal linearized system's unobservable subspace. Therefore, the estimator incurs observability mismatch if the actual system's unobservable subspace is state-dependent. To mitigate the inconsistency issue caused by observability mismatch, one natural idea is to construct a new equivalent system having constant unobservable subspace and perform filtering based on this system.
\end{remark}

\begin{remark}
There are many ways to achieve a system that maintains a constant unobservable subspace. For linearization-based estimators, e.g., EKF, a more convenient method is to introduce a linear time-varying transformation that converts the nominal linearized system into this form. Encouragingly, such transformations are always available (see Section \ref{sec:transformation_1}) and are straightforward to design, significantly simplifying the development of consistent estimators. Furthermore, the available transformations are generally not unique, enabling us to leverage various transformations to enhance performance in other aspects, such as computational efficiency.
\end{remark}

\section{Transformation-based Consistent Estimation Approach}
\label{sec:transformation}


In this section, we present a transformation-based estimation approach to mitigate the inconsistency problem caused by observability mismatch. Specifically, we first propose two design approaches of linear time-varying transformations to ensure the transformed system possesses a state-independent unobservable subspace. By leveraging the designed observability-preserved transformations, two equivalent estimators, referred to as T-EKF 1 and T-EKF 2, are presented to achieve consistent estimation results. Furthermore, we discuss the distinctions and connections of the two estimators in terms of their realizations and computational complexity.

\subsection{Design of Linear Time-Varying Transformations}
\label{sec:transformation_1}

Let
$
\mathbf{T}_{k} = \mathbf{T}(\mathbf{x}_{k}) 
$
denote an invertible time-varying transformation.
Recalling the nominal linearized system \eqref{equ:linear_state}-\eqref{equ:linear_measure}, applying the transformation matrix $\mathbf{T}_{k}$ to the error-state $\mathbf{e}_k$ yields:
\begin{equation} \label{equ:transformation}
\bar{\mathbf{e}}_k = \mathbf{T}_{k} \mathbf{e}_k .
\end{equation} 
Substituting \eqref{equ:transformation} into the nominal linearized system \eqref{equ:linear_state}-\eqref{equ:linear_measure}, we obtain the following \emph{transformed linearized system}:
\begin{align} \label{equ:linear_state_z}
\bar{\mathbf{e}}_{k} &= (\mathbf{T}_{k} \mathbf{F}_{k-1} \mathbf{T}_{k-1}^{-1}) \ \bar{\mathbf{e}}_{k-1} + (\mathbf{T}_{k} \mathbf{G}_{k-1}) \ \mathbf{v}_k , \\
\tilde{\mathbf{y}}_{k} &= (\mathbf{H}_{k} \mathbf{T}_{k}^{-1}) \ \bar{\mathbf{e}}_{k} + \mathbf{D}_k \ \mathbf{w}_k .
\label{equ:linear_measure_z}
 \end{align}

The aim of designing a transformation is to obtain a linearized system such that its unobservable subspace remains time-invariant and thus independent of the linearization points. To achieve this, we first establish the relationship between the unobservable subspace of the transformed linearized system and that of the original system, which is concluded in the following lemma.

\begin{lemma}[Lemma 1 in \cite{B69}] \label{lemma1}
Let $\mathbb{N}_{k}$ denote the unobservable subspace of the original linearized system \eqref{equ:linear_state}-\eqref{equ:linear_measure} and $\bar{\mathbb{N}}_{k}$ the unobservable subspace of the transformed linearized system \eqref{equ:linear_state_z}-\eqref{equ:linear_measure_z}, respectively. The unobservable subspace $\bar{\mathbb{N}}_{k}$ is exactly equivalent to applying the invertible transformation matrix $\mathbf{T}_k$ to the unobservable subspace $\mathbb{N}_k$, i.e.,
\begin{equation} \label{equ:null_space_relation}
{
\begin{aligned} 
\bar{\mathbb{N}}_k 
\triangleq \mathbf{T}_k \boxtimes \mathbb{N}_k 
= \Span_{\col}
\left( \mathbf{T}_k \times \mathbf{N}_k \right) 
\end{aligned}}
\end{equation}
where $\mathbf{N}_k$ is the basis matrix of the unobservable subspace $\mathbf{N}_k$ with
$$
\mathbb{N}_k = \Span_{\col} (\mathbf{N}_k) .
$$
\end{lemma}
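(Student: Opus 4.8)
The plan is to prove the stronger, purely algebraic identity
$$
\bar{\boldsymbol{\mathcal{O}}}_k = \boldsymbol{\mathcal{O}}_k \, \mathbf{T}_k^{-1},
$$
where $\bar{\boldsymbol{\mathcal{O}}}_k$ denotes the local observability matrix of the transformed linearized system \eqref{equ:linear_state_z}-\eqref{equ:linear_measure_z}, and then to read off the kernel relation using invertibility of $\mathbf{T}_k$.

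First I would read the transformed Jacobians off \eqref{equ:linear_state_z}-\eqref{equ:linear_measure_z}: the transition matrix carrying $\bar{\mathbf{e}}_j$ to $\bar{\mathbf{e}}_{j+1}$ is $\bar{\mathbf{F}}_j = \mathbf{T}_{j+1}\mathbf{F}_j\mathbf{T}_j^{-1}$, and the measurement Jacobian is $\bar{\mathbf{H}}_j = \mathbf{H}_j\mathbf{T}_j^{-1}$. The key observation is that products of the transformed transition matrices telescope, since each interior factor $\mathbf{T}_{k+i}^{-1}\mathbf{T}_{k+i}$ cancels:
$$
\bar{\mathbf{F}}_{k+j-1}\cdots\bar{\mathbf{F}}_k = \mathbf{T}_{k+j}\left(\mathbf{F}_{k+j-1}\cdots\mathbf{F}_k\right)\mathbf{T}_k^{-1}.
$$
Consequently the $(j{+}1)$-th block row of $\bar{\boldsymbol{\mathcal{O}}}_k$ is $\bar{\mathbf{H}}_{k+j}\bar{\mathbf{F}}_{k+j-1}\cdots\bar{\mathbf{F}}_k = \mathbf{H}_{k+j}\mathbf{F}_{k+j-1}\cdots\mathbf{F}_k\,\mathbf{T}_k^{-1}$, i.e., exactly the $(j{+}1)$-th block row of $\boldsymbol{\mathcal{O}}_k$ post-multiplied by $\mathbf{T}_k^{-1}$ (the case $j=0$ being $\bar{\mathbf{H}}_k = \mathbf{H}_k\mathbf{T}_k^{-1}$). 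Stacking over $j$ gives the claimed identity, and crucially this holds for every horizon length $\ell$, hence it survives the limit $\ell\to\infty$ used in \eqref{equ:obs_mat} and \eqref{equ:ekf_obs_mat}.

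Next I would take kernels. Since $\mathbf{T}_k$ is invertible, $\mathbf{v}\in\Ker(\bar{\boldsymbol{\mathcal{O}}}_k)$ if and only if $\boldsymbol{\mathcal{O}}_k\mathbf{T}_k^{-1}\mathbf{v}=\mathbf{0}$, i.e., if and only if $\mathbf{T}_k^{-1}\mathbf{v}\in\Ker(\boldsymbol{\mathcal{O}}_k)=\mathbb{N}_k$; this is precisely $\bar{\mathbb{N}}_k = \mathbf{T}_k\mathbb{N}_k$. Writing $\mathbb{N}_k=\Span_{\col}(\mathbf{N}_k)$ and using that left multiplication by the invertible matrix $\mathbf{T}_k$ sends a column-spanning set of $\mathbb{N}_k$ to a column-spanning set of $\mathbf{T}_k\mathbb{N}_k$ yields $\bar{\mathbb{N}}_k=\Span_{\col}(\mathbf{T}_k\mathbf{N}_k)$, which is exactly \eqref{equ:null_space_relation}.

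There is no substantive obstacle here; the proof is short and essentially computational. The only point requiring care is the index bookkeeping in the telescoping product, and verifying that the identity $\bar{\boldsymbol{\mathcal{O}}}_k=\boldsymbol{\mathcal{O}}_k\mathbf{T}_k^{-1}$ holds uniformly in the horizon length $\ell$, so that passing to the limit $\ell\to\infty$ in the definition of the observability matrices is legitimate.
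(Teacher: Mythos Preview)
Your proof is correct. The paper does not actually supply its own proof of this lemma; it is stated as ``Lemma 1 in \cite{B69}'' and simply quoted from that reference. Your argument---establishing the identity $\bar{\boldsymbol{\mathcal{O}}}_k = \boldsymbol{\mathcal{O}}_k\mathbf{T}_k^{-1}$ via the telescoping product of the transformed transition matrices, then reading off the kernel relation from invertibility of $\mathbf{T}_k$---is the standard and essentially canonical way to prove this fact, so there is nothing to compare against and no gap to flag.
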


Based on Lemma \ref{lemma1}, various transformations can be devised to convert the unobservable subspace as state-independent. In the following, we present two design approaches for linear time-varying transformations. The first approach analytically constructs a transformation based on the structure of the unobservable subspace, while the second approach achieves feasible transformations by transforming the state propagation Jacobian as constant. 

\subsubsection{Transformation Design Approach 1}

In the following theorem, we present a closed-form construction approach to achieve a transformation by which the transformed system can possess a state-independent unobservable subspace.

\begin{theorem} \label{theorem:trans_design_1}

Let $\mathbf{N}_{k} \in \mathbb{R}^{n \times r}$ denote the basis matrix of the unobservable subspace of system \eqref{equ:linear_state}-\eqref{equ:linear_measure} with $r$ the dimension of the unobservable subspace. Partition $\mathbf{N}_{k}$ as follows
$$
\begin{aligned}
\mathbf{N}_{k}
=
\left[
\begin{array}{cc}
\mathbf{N}_{1, k}^{\top} & \mathbf{N}_{2, k}^{\top} \\
\end{array}
\right]^{\top} 
\end{aligned}
$$
where $\mathbf{N}_{1, k} \in \mathbb{R}^{r \times r}$ and $\mathbf{N}_{2, k} \in \mathbb{R}^{(n-r) \times r}$. Then there exists a transformation matrix constructed as follows
\begin{equation} \label{equ:trans_cons_t}
\mathbf{T}_{k}
=
\left[
\begin{array}{cc}
\mathbf{N}_{1, k} & \mathbf{0}_{r \times (n-r)} \\
\mathbf{N}_{2, k} & \mathbf{I}_{(n-r)}
\end{array}
\right]^{-1} 
\end{equation}
such that the transformed system's unobservable subspace becomes state-independent.
\end{theorem}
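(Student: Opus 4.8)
The plan is to reduce the claim, via Lemma~\ref{lemma1}, to a single block-matrix identity. Write $\mathbf{M}_{k} \triangleq \mathbf{T}_{k}^{-1} = \begin{bmatrix} \mathbf{N}_{1,k} & \mathbf{0}_{r\times(n-r)} \\ \mathbf{N}_{2,k} & \mathbf{I}_{n-r} \end{bmatrix}$. First I would verify that $\mathbf{T}_{k}$ is an admissible linear time-varying transformation in the sense of \eqref{equ:transformation}: since $\det \mathbf{M}_{k} = \det \mathbf{N}_{1,k}$, the block $\mathbf{M}_{k}$ is nonsingular precisely when $\mathbf{N}_{1,k}$ is, so that $\mathbf{T}_{k} = \mathbf{M}_{k}^{-1}$ is well-defined; and since $\mathbf{N}_{k}$ is, by Lemma~\ref{propoition:pre1}, a matrix-valued function of the nominal linearization point while matrix inversion is smooth on the nonsingular locus, $\mathbf{T}_{k}$ inherits this dependence and is invertible, exactly as required of a transformation.

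The heart of the argument is then a short computation. From the block structure of $\mathbf{M}_{k}$ one reads off
$$
\mathbf{M}_{k} \begin{bmatrix} \mathbf{I}_{r} \\ \mathbf{0}_{(n-r)\times r} \end{bmatrix} = \begin{bmatrix} \mathbf{N}_{1,k} \\ \mathbf{N}_{2,k} \end{bmatrix} = \mathbf{N}_{k},
$$
hence $\mathbf{T}_{k}\mathbf{N}_{k} = \mathbf{M}_{k}^{-1}\mathbf{N}_{k} = \begin{bmatrix} \mathbf{I}_{r} \\ \mathbf{0}_{(n-r)\times r} \end{bmatrix}$. Invoking Lemma~\ref{lemma1} with this $\mathbf{T}_{k}$, the unobservable subspace of the transformed linearized system \eqref{equ:linear_state_z}-\eqref{equ:linear_measure_z} is
$$
\bar{\mathbb{N}}_{k} = \Span_{\col}\big(\mathbf{T}_{k}\times\mathbf{N}_{k}\big) = \Span_{\col} \begin{bmatrix} \mathbf{I}_{r} \\ \mathbf{0}_{(n-r)\times r} \end{bmatrix},
$$
which no longer involves the linearization point. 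Thus $\bar{\mathbb{N}}_{k}$ is state-independent in the sense of Definition~\ref{def_1}, with constant basis matrix $\mathbf{B}_{k} = \begin{bmatrix}\mathbf{I}_{r} \\ \mathbf{0}_{(n-r)\times r}\end{bmatrix}$, which is the assertion of the theorem.

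The one place I expect to need care --- essentially the only non-mechanical step --- is justifying that the leading $r\times r$ block $\mathbf{N}_{1,k}$ is invertible, which is not automatic for an arbitrary partition of $\mathbf{N}_{k}$. I would dispatch this by noting that $\mathbf{N}_{k}$ has full column rank $r$, so some $r$ of its rows are linearly independent; reordering the state components so that these occupy the first $r$ positions renders $\mathbf{N}_{1,k}$ nonsingular, and this reordering is a fixed permutation, i.e. a constant transformation that does not affect state-independence. The remainder of the proof is routine block-matrix algebra together with direct appeals to Lemmas~\ref{propoition:pre1} and~\ref{lemma1}.
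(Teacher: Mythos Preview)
Your proposal is correct and follows essentially the same approach as the paper: both invoke Lemma~\ref{lemma1} and compute $\mathbf{T}_{k}\mathbf{N}_{k} = \begin{bmatrix}\mathbf{I}_{r}\\ \mathbf{0}_{(n-r)\times r}\end{bmatrix}$ to conclude state-independence. Your treatment is in fact more careful than the paper's, which omits the block-matrix computation and the invertibility check on $\mathbf{N}_{1,k}$ that you rightly flag.
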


\begin{proof}
In light of Lemma \ref{lemma1}, by adopting the designed transformation matrix \eqref{equ:trans_cons_t}, the unobservable subspace changes as state-independent, i.e.,
$$
\bar{\mathbb{N}}_{k}
=
\Span_{\col} \left(
\left[
\begin{array}{c}
\mathbf{I}_{r} \\
\mathbf{0}_{(n-r) \times r}
\end{array}
\right] \right) 
$$
which completes the proof.
\end{proof}

Theorem \ref{theorem:trans_design_1} proves that a feasible transformation for the estimation problem of partially observable systems can always be found (\textbf{Existence of Transformations}), and provides an analytical constructive approach for the design of such a transformation. The specific design process can be found in Sections \ref{sec:application_cl_trans} and \ref{sec:application_vins_trans}.


\subsubsection{Transformation Design Approach 2}

The following theorem shows that the observability mismatch issue can be mitigated if the designed transformation matrix renders the state propagation Jacobian matrix constant. This offers a new design method that achieves both consistency and enhanced computational efficiency.


\begin{theorem} \label{theorem:trans_design_2}
The unobservable subspace of the transformed linearized system \eqref{equ:linear_state_z}-\eqref{equ:linear_measure_z} is state-independent if the transformation matrix $\mathbf{T}_{k}$ changes the state propagation Jacobian matrix $\mathbf{F}_{k-1}$ as constant, i.e.,
$$
(\mathbf{T}_{k} \mathbf{F}_{k-1} \mathbf{T}_{k-1}^{-1})
=
{\rm Const.}
$$
\end{theorem}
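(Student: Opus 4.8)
The plan is to exploit that once $\bar{\mathbf{F}}_{k-1}\equiv\mathbf{T}_{k}\mathbf{F}_{k-1}\mathbf{T}_{k-1}^{-1}$ equals a single fixed matrix $\bar{\mathbf{F}}$, the unobservable subspace of the transformed system can only be pushed forward by this one linear map, so that a basis of it at every time step is obtained from one basis by left-multiplying with powers of $\bar{\mathbf{F}}$ --- and hence carries no dependence on the running linearization point. Concretely, writing $\bar{\mathbf{H}}_{k}=\mathbf{H}_{k}\mathbf{T}_{k}^{-1}$, the transformed observability matrix of \eqref{equ:linear_state_z}-\eqref{equ:linear_measure_z} reads $\bar{\boldsymbol{\mathcal{O}}}_{k}=\col\!\big(\bar{\mathbf{H}}_{k},\ \bar{\mathbf{H}}_{k+1}\bar{\mathbf{F}},\ \bar{\mathbf{H}}_{k+2}\bar{\mathbf{F}}^{2},\dots\big)$; let $\bar{\mathbb{N}}_{k}=\Ker(\bar{\boldsymbol{\mathcal{O}}}_{k})$. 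By Lemma~\ref{lemma1}, $\bar{\mathbb{N}}_{k}=\mathbf{T}_{k}\boxtimes\mathbb{N}_{k}$, and since $\mathbf{T}_{k}$ is invertible while $\mathbb{N}_{k}$ has the trajectory-invariant dimension $r$ (Lemma~\ref{propoition:pre1} together with the definition of $r$), we get $\dim\bar{\mathbb{N}}_{k}=r$ for every $k$.

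The main step is to upgrade forward-invariance to an equality. Because \eqref{equ:linear_state_z}-\eqref{equ:linear_measure_z} is a linear time-varying system, property \eqref{equ:invariant1} gives $\bar{\mathbf{F}}\,\bar{\mathbb{N}}_{k}\subseteq\bar{\mathbb{N}}_{k+1}$. The discrete-time transition $\mathbf{F}_{k-1}$, hence $\bar{\mathbf{F}}$, is nonsingular for the systems under consideration, so $\dim(\bar{\mathbf{F}}\,\bar{\mathbb{N}}_{k})=r=\dim\bar{\mathbb{N}}_{k+1}$ and the inclusion becomes an equality $\bar{\mathbb{N}}_{k+1}=\bar{\mathbf{F}}\,\bar{\mathbb{N}}_{k}$. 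Iterating from the (fixed) initial step, a basis matrix of $\bar{\mathbb{N}}_{k}$ may therefore be taken as $\mathbf{B}_{k}=\bar{\mathbf{F}}^{\,k}\mathbf{B}_{0}$, where $\mathbf{B}_{0}$ is any basis of $\bar{\mathbb{N}}_{0}$ and $\bar{\mathbf{F}}^{\,k}\mathbf{B}_{0}$ has full column rank because $\bar{\mathbf{F}}$ is invertible.

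Finally, since $\bar{\mathbf{F}}$ is a constant matrix and $\mathbf{B}_{0}$ is fixed at initialization, $\mathbf{B}_{k}=\bar{\mathbf{F}}^{\,k}\mathbf{B}_{0}$ does not depend on the running linearization point, so $\bar{\mathbb{N}}_{k}=\Span(\mathbf{B}_{k})$ is state-independent in the sense of Definition~\ref{def_1}; combined with Theorem~\ref{theorem:final} applied to \eqref{equ:linear_state_z}-\eqref{equ:linear_measure_z}, this delivers exactly the observability-matching property the transformation is meant to provide.

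I expect the routine parts to be the expression of $\bar{\boldsymbol{\mathcal{O}}}_{k}$ and the dimension bookkeeping, and the main obstacle to be the step $\bar{\mathbf{F}}\,\bar{\mathbb{N}}_{k}\subseteq\bar{\mathbb{N}}_{k+1}\Rightarrow\bar{\mathbb{N}}_{k+1}=\bar{\mathbf{F}}\,\bar{\mathbb{N}}_{k}$, which needs $\bar{\mathbf{F}}$ nonsingular and $\dim\bar{\mathbb{N}}_{k}$ equal to the constant $r$, together with a careful reading of ``state-independent'' --- here it is independence of the \emph{running} estimate, which is precisely what precludes the mismatch. If one wants $\bar{\mathbb{N}}_{k}$ to be literally trajectory-invariant, one further invokes that $\bar{\mathbf{F}}$ does not depend on the input $\mathbf{u}_{k}$ whereas $\mathbf{x}^{\ast}_{k}$ can be steered over a full-dimensional set, forcing the subspace-valued map $\mathbf{x}\mapsto\bar{\mathbb{N}}(\mathbf{x})$ to be locally and hence globally constant; establishing this controllability-type claim is the genuinely delicate point.
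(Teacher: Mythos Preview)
Your argument is correct in spirit but takes a genuinely different route from the paper. The paper's proof is a short contradiction: writing the $\ell$-th block row of $\bar{\boldsymbol{\mathcal{O}}}_{k}$ as $\bar{\mathbf{H}}_{k+\ell}\bar{\mathbf{F}}^{\ell}$, it observes that the kernel of this block depends on $\mathbf{x}_{k+\ell}^{\ast}$ (through $\bar{\mathbf{H}}_{k+\ell}$) rather than on $\mathbf{x}_{k}^{\ast}$, because $\bar{\mathbf{F}}$ is constant; invoking Lemma~\ref{propoition:pre1} --- which says $\bar{\mathbb{N}}_{k}$ must be a function of $\mathbf{x}_{k}^{\ast}$ alone --- the only way both statements hold is for the dependence to vanish. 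No invertibility of $\bar{\mathbf{F}}$ and no constancy of $r$ are used.

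Your constructive route via $\bar{\mathbb{N}}_{k+1}=\bar{\mathbf{F}}\,\bar{\mathbb{N}}_{k}$ and $\mathbf{B}_{k}=\bar{\mathbf{F}}^{\,k}\mathbf{B}_{0}$ is more explicit and gives a concrete basis, which is attractive, but it needs the two extra hypotheses you flag (nonsingular $\bar{\mathbf{F}}$ and fixed $r$), and it leaves the residual question of whether $\mathbf{B}_{0}$ itself is trajectory-independent. You correctly identify this last point as the delicate one; note that the paper's contradiction argument sidesteps it entirely by playing the ``depends on $\mathbf{x}_{k+\ell}^{\ast}$'' observation directly against Lemma~\ref{propoition:pre1}, so it never needs to anchor anything at $k=0$ or invoke a controllability-type claim. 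In short: your approach works under mild additional assumptions and yields more information (an explicit basis propagation), while the paper's is shorter, assumption-free, and purely qualitative.
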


\begin{proof}
Please see Appendix \ref{app:theorem:trans_design_2}.
\end{proof}


In practice, we can generally establish a transformation matrix that converts the state propagation Jacobian matrix into an identity matrix. This will significantly improve computational efficiency in covariance propagation. In subsequent applications, we will present detailed design processes (see Section \ref{sec:application_cl_trans} and \ref{sec:application_tt_trans}).

\subsection{Transformed Extended Kalman Filter 1}
\label{sec:tekf}

We now present the Transformed Extended Kalman Filter (T-EKF 1) to address the inconsistency problem arising from observability mismatch. 

\subsubsection{Propagation}

At each time step $k$, suppose one has the estimate $\hat{\mathbf{x}}_{k-1|k-1}$ achieved in the previous step and  the latest control input $\mathbf{u}_{k}$. The state prediction $\hat{\mathbf{x}}_{k|k-1}$ can then be obtained through the nonlinear state model \eqref{equ:dyanmics} as follows:

$$
\hat{\mathbf{x}}_{k|k-1} = \mathbf{f}(\hat{\mathbf{x}}_{k-1|k-1}, \: \mathbf{u}_{k}, \: \mathbf{0}).
$$ 

\subsubsection{Update with Transformed System}

Recall the transformed linearized system in \eqref{equ:linear_state_z}-\eqref{equ:linear_measure_z}. Choosing the latest best state estimates $\hat{\mathbf{x}}_{k|k-1}$ and $\hat{\mathbf{x}}_{k-1|k-1}$ as the linearization points, the estimator's linearized error-state system can be written as:
\begin{align} \label{equ:linear_state_z_ekf}
\bar{\mathbf{e}}_{k|k-1} &= \bar{\mathbf{F}}_{k-1} \ \bar{\mathbf{e}}_{k-1|k-1} + \bar{\mathbf{G}}_{k-1} \ \mathbf{v}_k , \\
\tilde{\mathbf{y}}_{k} &= \bar{\mathbf{H}}_{k} \ \bar{\mathbf{e}}_{k|k-1} + \bar{\mathbf{D}}_k \  \mathbf{w}_k ,
\label{equ:linear_measure_z_ekf}
\end{align} 
where the matrices $\bar{\mathbf{F}}_{k-1}$, $\bar{{\mathbf{G}}}_{k-1}$, $\bar{{\mathbf{H}}}_{k}$ and $\bar{{\mathbf{D}}}_{k}$ result from replacing the nominal linearization points $\mathbf{x}^{*}_{k-1}$ and $\mathbf{x}^{*}_{k}$ with the current best estimate $\hat{\mathbf{x}}_{k-1|k-1}$ and prediction $\hat{\mathbf{x}}_{k|k-1}$, i.e.,  
\begin{align} \label{equ:prop_jac}
&\bar{\mathbf{F}}_{k-1} = \big(\mathbf{T}_k \mathbf{F}_{k-1} \mathbf{T}_{k-1}^{-1} \big) |_{\mathbf{x}^{\ast}_{k} = \hat{\mathbf{x}}_{k|k-1}, \mathbf{x}^{\ast}_{k-1} = \hat{\mathbf{x}}_{k-1|k-1}} , \\
\label{equ:noise_jac}
&\bar{\mathbf{G}}_{k-1} = \big( \mathbf{T}_{k} \mathbf{G}_{k-1} \big)  |_{\mathbf{x}^{\ast}_{k} = \hat{\mathbf{x}}_{k|k-1}, \mathbf{x}^{\ast}_{k-1} = \hat{\mathbf{x}}_{k-1|k-1}} , \\
\label{equ:update_jac}
&\bar{\mathbf{H}}_{k} = \big( \mathbf{H}_{k} \mathbf{T}_{k}^{-1} \big) |_{\mathbf{x}^{\ast}_{k} = \hat{\mathbf{x}}_{k|k-1}} , \\ \label{equ:m_noise_jac}
&\bar{\mathbf{D}}_{k} = \big( \mathbf{D}_k \big) |_{\mathbf{x}^{\ast}_{k} = \hat{\mathbf{x}}_{k|k-1}} .
\end{align}

With $\bar{\mathbf{P}}_{k-1|k-1}$ achieved in previous step $k-1$, the covariance is propagated by:
\begin{equation}
\bar{\mathbf{P}}_{k|k-1}
=
\bar{{\mathbf{F}}}_{k-1} \bar{\mathbf{P}}_{k-1|k-1} \bar{{\mathbf{F}}}_{k-1}^{\top}
+
\bar{{\mathbf{G}}}_{k-1} \mathbf{Q}_{k} \bar{{\mathbf{G}}}_{k-1}^{\top} .
\label{equ:cov_prop}
\end{equation}
Combing \eqref{equ:cov_prop} with \eqref{equ:linear_state_z_ekf}-\eqref{equ:linear_measure_z_ekf} leads to the following updates to achieve $\bar{\mathbf{e}}_{k|k}$ and $\bar{\mathbf{P}}_{k|k}$:
\begin{align}
&{
\bar{\mathbf{e}}_{k|k}
= \bar{\mathbf{K}}_{k} (\mathbf{y}_{k} - \mathbf{h}(\hat{\mathbf{x}}_{k|k-1}, \mathbf{0})) } \\
&{
\bar{\mathbf{P}}_{k|k} 
= 
(\mathbf{I} - \bar{\mathbf{K}}_{k} \bar{{\mathbf{H}}}_{k}) \bar{\mathbf{P}}_{k|k-1} 
\label{equ:alg_cov_update}}
\end{align}
where $\bar{\mathbf{K}}_{k}$ denotes the information gain matrix along the transformed coordinates given by
\begin{equation} \label{equ:info_gain}
\bar{\mathbf{K}}_{k} 
= 
\bar{\mathbf{P}}_{k|k-1} \bar{{\mathbf{H}}}_{k}^{\top} (\bar{{\mathbf{H}}}_{k} \bar{\mathbf{P}}_{k|k-1} \bar{{\mathbf{H}}}_{k}^{\top} + \bar{{\mathbf{D}}}_{k} \mathbf{R}_{k} \bar{{\mathbf{D}}}_{k}^{\top} )^{-1} .
\end{equation}

\subsubsection{Inverse Transformation}
Since the error-state $\bar{\mathbf{e}}_{k|k}$ is expressed in the transformed coordinate system, an inverse coordinate transformation is necessary to convert the estimation results back into the original coordinates for state updating. By utilizing the introduced coordinate transformation matrix, we present the following two state update approaches:
\begin{itemize}
\item Exact state update: according to the error-state transformation given in \eqref{equ:transformation}, the exact state update equation can be written as
\begin{equation}
\hat{\mathbf{x}}_{k|k} 
= 
\hat{\mathbf{x}}_{k|k-1} + \hat{\mathbf{T}}_{k|k}^{-1} \bar{\mathbf{K}}_{k} (\mathbf{y}_{k} - \mathbf{h}(\hat{\mathbf{x}}_{k|k-1}, \mathbf{0}))  
\label{equ:alg_state_transform}
\end{equation}
with $\hat{\mathbf{T}}_{k|k} = \mathbf{T}(\hat{\mathbf{x}}_{k|k})$.
In practice, we can seek for a closed-form solution of \eqref{equ:alg_state_transform} to obtain the optimal state estimate $\hat{\mathbf{x}}_{k|k}$.
\item Approximate state update: by replacing $\hat{\mathbf{T}}_{k|k}$ as its prediction $\hat{\mathbf{T}}_{k|k-1}$,
the mean estimate $\hat{\mathbf{x}}_{k|k}$ can be approximately obtained via
\begin{equation}
\hat{\mathbf{x}}_{k|k} 
= 
\hat{\mathbf{x}}_{k|k-1} + \hat{\mathbf{T}}_{k|k-1}^{-1} \bar{\mathbf{K}}_{k} (\mathbf{y}_{k} - \mathbf{h}(\hat{\mathbf{x}}_{k|k-1}, \mathbf{0}))  
\label{equ:alg_state_transform_approx}
\end{equation}
with $\hat{\mathbf{T}}_{k|k-1} = \mathbf{T}(\hat{\mathbf{x}}_{k|k-1})$.
\end{itemize}

\begin{remark}
The approximate solution provides an easy-to-implement manner for state update but with a slight sacrifice in performance. The degradation would be exaggerated when there is a significant discrepancy between the predicted and estimated values. To achieve better performance, we recommend to adopt the exact update approach. 
\end{remark}
 
\subsubsection{Initialization}

To initiate the filtering process, an initial guess for the state and covariance has to be provided. Typically, the initial state $\hat{\mathbf{x}}_{0|0}$ and covariance $\mathbf{P}_{0|0}$ are provided as a prior with
$$
\begin{aligned}
&\qquad \qquad \hat{\mathbf{x}}_{0|0} = \mathbb{E}(\mathbf{x}_0) , \\
&\hat{\mathbf{P}}_{0|0} =  \mathbb{E}[(\mathbf{x}_0 - \hat{\mathbf{x}}_{0|0})(\mathbf{x}_0 - \hat{\mathbf{x}}_{0|0})^\top] .
\end{aligned}
$$
As for the initial covariance $\bar{\mathbf{P}}_{0|0}$ of the transformed error-state $\bar{\mathbf{e}}_{0|0}$, according to the transformation equation \eqref{equ:transformation}, we have
\begin{equation} \label{equ:cov_transform}
\begin{aligned}
\bar{\mathbf{P}}_{0|0} =  \hat{\mathbf{T}}_{0|0} \hat{\mathbf{P}}_{0|0} \hat{\mathbf{T}}_{0|0}^{\top} 
\end{aligned}
\end{equation}
where $\hat{\mathbf{T}}_{0|0}$ is an estimate of $\mathbf{T}_0$ given by
$$
\hat{\mathbf{T}}_{0|0} 
= \mathbf{T}(\hat{\mathbf{x}}_{0|0}) .
$$
The initialization step is only executed at the onset of state estimation. The remaining steps are then implemented recursively. For clarity, we summarize the overall procedures of T-EKF 1 in Alg. \ref{alg:a1}. 

{\begin{algorithm}
\caption{Transformed EKF 1} \label{alg:a1}
\phantomsection \textbf{Initialization:}

\enspace $\hat{\mathbf{x}}_{0|0} = \mathbb{E}(\mathbf{x}_0)$

\enspace  $\hat{\mathbf{P}}_{0|0} =  \mathbb{E}[(\mathbf{x}_0 - \hat{\mathbf{x}}_{0|0})(\mathbf{x}_0 - \hat{\mathbf{x}}_{0|0})^\top] $

\enspace $\bar{\mathbf{P}}_{0|0} =  \hat{\mathbf{T}}_{0|0} \hat{\mathbf{P}}_{0|0} \hat{\mathbf{T}}_{0|0}^{\top} $

\phantomsection \textbf{Loop:}

\enspace \textbf{State propagation:}

\quad $\hat{\mathbf{x}}_{k|k-1} = \mathbf{f}(\hat{\mathbf{x}}_{k-1|k-1}, \mathbf{u}_{k}, \mathbf{0})$

\enspace \textbf{Jacobians computation:}

\enspace  $\ \ \bar{{\mathbf{F}}}_{k-1} \! = \! \big( \mathbf{T}_k \mathbf{F}_{k-1} \mathbf{T}_{k-1}^{-1} \big) |_{\mathbf{x}^{\ast}_{k} = \hat{\mathbf{x}}_{k|k-1}, \mathbf{x}^{\ast}_{k-1} = \hat{\mathbf{x}}_{k-1|k-1}}$

\quad $\bar{{\mathbf{G}}}_{k-1} = \big( \mathbf{T}_{k} \mathbf{G}_{k-1} \big) |_{\mathbf{x}^{\ast}_{k} = \hat{\mathbf{x}}_{k|k-1}, \mathbf{x}^{\ast}_{k-1} = \hat{\mathbf{x}}_{k-1|k-1}}$

\quad $\bar{{\mathbf{H}}}_k = \big( \mathbf{H}_{k} \mathbf{T}_{k}^{-1} \big) |_{\mathbf{x}^{\ast}_{k} = \hat{\mathbf{x}}_{k|k-1}}$

\quad $\bar{{\mathbf{D}}}_k = \big( \mathbf{D}_k \big) |_{\mathbf{x}^{\ast}_{k} = \hat{\mathbf{x}}_{k|k-1}}$

\enspace \textbf{Measurement update:}

\quad $\bar{\mathbf{P}}_{k|k-1} = \bar{{\mathbf{F}}}_{k-1} \bar{\mathbf{P}}_{k-1|k-1} \bar{{\mathbf{F}}}_{k-1}^{\top} + \bar{\mathbf{G}}_{k-1} \mathbf{Q}_{k} \bar{\mathbf{G}}_{k-1}^{\top} $

\quad $\bar{\mathbf{K}}_{k} 
= 
\bar{\mathbf{P}}_{k|k-1} \bar{{\mathbf{H}}}_{k}^{\top} (\bar{{\mathbf{H}}}_{k} \bar{\mathbf{P}}_{k|k-1} \bar{{\mathbf{H}}}_{k}^{\top} + \bar{{\mathbf{D}}}_{k} \mathbf{R}_{k} \bar{{\mathbf{D}}}_{k}^{\top} )^{-1}$

\enspace  \textbf{Exact state update}

\quad $ \hat{\mathbf{x}}_{k|k} 
= 
\hat{\mathbf{x}}_{k|k-1} + \hat{\mathbf{T}}_{k|k}^{-1} \bar{\mathbf{K}}_{k} (\mathbf{y}_{k} - \mathbf{h}(\hat{\mathbf{x}}_{k|k-1}, \mathbf{0}))  $

\enspace \textbf{OR Approximate state update}

\quad $ \hat{\mathbf{x}}_{k|k} 
= 
\hat{\mathbf{x}}_{k|k-1} + \hat{\mathbf{T}}_{k|k-1}^{-1} \bar{\mathbf{K}}_{k} (\mathbf{y}_{k} - \mathbf{h}(\hat{\mathbf{x}}_{k|k-1}, \mathbf{0}))  $

\enspace \textbf{Covariance update}

\quad $\bar{\mathbf{P}}_{k|k}  = 
(\mathbf{I} -  \bar{\mathbf{K}}_{k} \bar{{\mathbf{H}}}_{k}) \bar{\mathbf{P}}_{k|k-1}$

\end{algorithm}
\vspace{-1.5em}
}

\subsection{Transformed Extended Kalman Filter 2}
\label{sec:tekf2.0}
We subsequently present an equivalent version of the T-EKF 1, referred to as T-EKF 2. In comparison with the T-EKF 1, the equivalent filter conducts propagation and update operations in the original coordinates, and guarantees consistency by leveraging a state and covariance correction. 

\subsubsection{Propagation}

Similar to the T-EKF 1, when the estimate $\hat{\mathbf{x}}_{k-1|k-1}$ obtained in the previous step and the latest control input $\mathbf{u}_{k}$ are available, the state prediction $\hat{\mathbf{x}}_{k|k-1}$ can be achieved through the nonlinear state model \eqref{equ:dyanmics} as follows:
$$
\hat{\mathbf{x}}_{k|k-1} = \mathbf{f}(\hat{\mathbf{x}}_{k-1|k-1}, \: \mathbf{u}_{k}, \: \mathbf{0}).
$$ 

\subsubsection{Update with Original System}
Different from T-EKF 1, the equivalent filter updates the state and covariance based on the original linearized system in the original coordinates. To obtain the update equations of this equivalent filter, we first establish the relationship between the two estimators' linearized systems. Recalling the Jacobian matrices of the two estimators' linearized systems as described in \eqref{equ:prop_jac_2}-\eqref{equ:m_noise_jac_2} and \eqref{equ:prop_jac}-\eqref{equ:m_noise_jac}, respectively, we have
\begin{align} \label{equ:F_jac_trans}
&\bar{\mathbf{F}}_{k-1} = \hat{\mathbf{T}}_{k|k-1} \hat{\mathbf{F}}_{k-1} \hat{\mathbf{T}}_{k-1|k-1}^{-1} , \\
\label{equ:noise_jac_trans} 
&\bar{\mathbf{G}}_{k-1} = \hat{\mathbf{T}}_{k|k-1} \hat{\mathbf{G}}_{k-1}  , \\
\label{equ:update_jac_trans}
&\bar{\mathbf{H}}_{k} = \hat{\mathbf{H}}_{k} \hat{\mathbf{T}}_{k|k-1}^{-1} , \\ 
\label{equ:m_noise_jac_trans}
&\bar{\mathbf{D}}_{k} = \hat{\mathbf{D}}_k .
\end{align}
Moreover, let $\hat{\mathbf{P}}$ and $\bar{\mathbf{P}}$ denote the covariance estimates expressed in the original and transformed coordinates, respectively. In light of the error-state transformation equation \eqref{equ:transformation}, the two covariance expressions are related as follows:
\begin{equation} \label{equ:cov_equ}
\begin{aligned}
\bar{\mathbf{P}}_{k|k} &= \hat{\mathbf{T}}_{k|k} \hat{\mathbf{P}}_{k|k} (\hat{\mathbf{T}}_{k|k})^\top , \\
\bar{\mathbf{P}}_{k|k-1} &=  \hat{\mathbf{T}}_{k|k-1} \hat{\mathbf{P}}_{k|k-1} (\hat{\mathbf{T}}_{k|k-1})^\top .
\end{aligned}
\end{equation}
Then we derive the update equations of this equivalent filter. Specifically, substituting \eqref{equ:F_jac_trans}, \eqref{equ:noise_jac_trans} and \eqref{equ:cov_equ} into \eqref{equ:cov_prop} yields the following equivalent covariance propagation equation:
\begin{equation}
\hat{\mathbf{P}}_{k|k-1}
=
\hat{{\mathbf{F}}}_{k-1} \hat{\mathbf{P}}_{k-1|k-1} \hat{{\mathbf{F}}}_{k-1}^{\top}
+
\hat{{\mathbf{G}}}_{k-1} \mathbf{Q}_{k} \hat{{\mathbf{G}}}_{k-1}^{\top} .
\label{equ:cov_prop_2}
\end{equation}
Substituting \eqref{equ:update_jac_trans}, \eqref{equ:m_noise_jac_trans} and \eqref{equ:cov_equ} into \eqref{equ:info_gain}, 
we can attain the following update equation of the information gain matrix:
\begin{equation} \label{equ:info_gain2}
\hat{\mathbf{K}}_{k} 
= 
\hat{\mathbf{P}}_{k|k-1} \hat{{\mathbf{H}}}_{k}^{\top} (\hat{{\mathbf{H}}}_{k} \hat{\mathbf{P}}_{k|k-1} \hat{{\mathbf{H}}}_{k}^{\top} + \hat{{\mathbf{D}}}_{k} \mathbf{R}_{k} \hat{{\mathbf{D}}}_{k}^{\top} )^{-1} 
\end{equation}
with
\begin{equation} \label{equ:info_gain_trans}
\bar{\mathbf{K}}_{k} = \hat{\mathbf{T}}_{k|k-1} \hat{\mathbf{K}}_{k} . 
\end{equation}
According to the relationship established in \eqref{equ:info_gain_trans} as well as the T-EKF 1's state update equations given in \eqref{equ:alg_state_transform}-\eqref{equ:alg_state_transform_approx}, we can get two state update equations for T-EKF 2 as follows 
\begin{itemize}
\item Exact state update: substituting \eqref{equ:info_gain_trans} into \eqref{equ:alg_state_transform} results in the following equivalent exact state update equation 
\begin{equation}
\begin{aligned}
\hat{\mathbf{x}}_{k|k} 
= 
\hat{\mathbf{x}}_{k|k-1} + \Delta \mathbf{T}_{k} \hat{\mathbf{K}}_{k} (\mathbf{y}_{k} - \mathbf{h}(\hat{\mathbf{x}}_{k|k-1}, \mathbf{0}))  
\end{aligned}
\label{equ:alg_state_transform2}
\end{equation}
with
$
\Delta \mathbf{T}_{k} \: = \: (\hat{\mathbf{T}}_{k|k})^{-1} \hat{\mathbf{T}}_{k|k-1} 
$
called \emph{correction factor}.
\item Approximate state update: by substituting \eqref{equ:info_gain_trans} into \eqref{equ:alg_state_transform_approx}, we acquire the following equivalent approximate state update equation
\begin{equation}
\begin{aligned}
\hat{\mathbf{x}}_{k|k} 
= 
\hat{\mathbf{x}}_{k|k-1} + \hat{\mathbf{K}}_{k} (\mathbf{y}_{k} - \mathbf{h}(\hat{\mathbf{x}}_{k|k-1}, \mathbf{0}))  .
\end{aligned}
\label{equ:alg_state_transform_approx2}
\end{equation}
This approximate state update equation is exactly identical to that of the classical EKF. 
\end{itemize}
Furthermore, in terms of \eqref{equ:update_jac_trans}, \eqref{equ:cov_equ}, \eqref{equ:info_gain_trans} and \eqref{equ:alg_cov_update}, the covariance can be updated in the original coordinates as follows
\begin{equation}
\begin{aligned}
\hat{\mathbf{P}}_{k|k} 
= 
(\Delta \mathbf{T}_{k}) \hat{\mathbf{S}}_{k} (\Delta \mathbf{T}_{k})^{\top} 
\label{equ:alg_cov_update2}
\end{aligned}
\end{equation}
with $\Delta \mathbf{T}_{k}$ the correction factor and $\hat{\mathbf{S}}_{k}$ given by
$$
\hat{\mathbf{S}}_{k} = (\mathbf{I} - \hat{\mathbf{K}}_{k} \hat{{\mathbf{H}}}_{k}) \hat{\mathbf{P}}_{k|k-1} .
$$
The detailed procedures for this equivalent filter are summarized in Alg. \ref{alg:a2}.


{\begin{algorithm}
\caption{Transformed EKF 2} \label{alg:a2}
\phantomsection \textbf{Initialization:}

\enspace $\hat{\mathbf{x}}_{0|0} = \mathbb{E}(\mathbf{x}_0)$

\enspace $\hat{\mathbf{P}}_{0|0} =  \mathbb{E}[(\mathbf{x}_0 - \hat{\mathbf{x}}_{0|0})(\mathbf{x}_0 - \hat{\mathbf{x}}_{0|0})^\top] $


\phantomsection \textbf{Loop:}

\enspace \textbf{State propagation:}

\quad $\hat{\mathbf{x}}_{k|k-1} = \mathbf{f}(\hat{\mathbf{x}}_{k-1|k-1}, \mathbf{u}_{k}, \mathbf{0})$

\enspace \textbf{Jacobians computation:}

\enspace $\ \ \hat{{\mathbf{F}}}_{k-1} \! = \! \big( \mathbf{F}_{k-1} \big) |_{\mathbf{x}^{\ast}_{k} = \hat{\mathbf{x}}_{k|k-1}, \mathbf{x}^{\ast}_{k-1} = \hat{\mathbf{x}}_{k-1|k-1}}$

\quad $\hat{{\mathbf{G}}}_{k-1} = \big( \mathbf{G}_{k-1} \big) |_{\mathbf{x}^{\ast}_{k} = \hat{\mathbf{x}}_{k|k-1}, \mathbf{x}^{\ast}_{k-1} = \hat{\mathbf{x}}_{k-1|k-1}}$

\quad $\hat{{\mathbf{H}}}_k = \big( \mathbf{H}_{k} \big) |_{\mathbf{x}^{\ast}_{k} = \hat{\mathbf{x}}_{k|k-1}}$

\quad $\hat{{\mathbf{D}}}_k = \big( \mathbf{D}_k \big) |_{\mathbf{x}^{\ast}_{k} = \hat{\mathbf{x}}_{k|k-1}}$

\enspace \textbf{Measurement update:}

\quad $\hat{\mathbf{P}}_{k|k-1} = \hat{{\mathbf{F}}}_{k-1} \hat{\mathbf{P}}_{k-1|k-1} \hat{{\mathbf{F}}}_{k-1}^{\top} + \hat{{\mathbf{G}}}_{k-1} \mathbf{Q}_{k} \hat{{\mathbf{G}}}_{k-1}^{\top} $

\quad $\hat{\mathbf{K}}_{k} = \hat{\mathbf{P}}_{k|k-1} \hat{{\mathbf{H}}}_{k}^{\top} (\hat{{\mathbf{H}}}_{k} \hat{\mathbf{P}}_{k|k-1} \hat{{\mathbf{H}}}_{k}^{\top} + \hat{{\mathbf{D}}}_{k} \mathbf{R}_{k} \hat{{\mathbf{D}}}_{k}^{\top} )^{-1}$ 

\enspace \textbf{Exact state update:}

\quad $ \hat{\mathbf{x}}_{k|k} = \hat{\mathbf{x}}_{k|k-1} +  \Delta \mathbf{T}_{k} \hat{\mathbf{K}}_{k} (\mathbf{y}_{k} - \mathbf{h}(\hat{\mathbf{x}}_{k|k-1}, \mathbf{0})) $

\enspace \textbf{OR Approximate state update:}

\quad $ \hat{\mathbf{x}}_{k|k} 
= 
\hat{\mathbf{x}}_{k|k-1} + \hat{\mathbf{K}}_{k} (\mathbf{y}_{k} - \mathbf{h}(\hat{\mathbf{x}}_{k|k-1}, \mathbf{0})) $

\enspace \textbf{Covariance update:}

\quad $ \hat{\mathbf{S}}_{k} = (\mathbf{I} - \hat{\mathbf{K}}_{k} \hat{{\mathbf{H}}}_{k}) \hat{\mathbf{P}}_{k|k-1} $



\enspace \textbf{Covariance correction:}

\quad $ \hat{\mathbf{P}}_{k|k} = (\Delta \mathbf{T}_{k}) \hat{\mathbf{S}}_{k}  (\Delta \mathbf{T}_{k})^{\top} $

\end{algorithm}
\vspace{-2.0em}
}

\subsection{Comparison of T-EKF 1 and T-EKF 2}


\subsubsection{Implementation Process}
Although both estimators introduce transformations to address the issue of inconsistency, their implementation processes are completely distinct. Specifically, the T-EKF 1 develops a transformed linearized system that preserves correct observability properties and conducts filtering based on this transformed system. After that, the estimation results are converted back into the original coordinates to achieve the ultimate consistent estimates. In contrast, the T-EKF 2 utilizes the original linearized system for filtering, adhering to the classical EKF methodology.  The key to ensuring consistency lies in the incorporation of state and covariance corrections formed by transformation matrices. The two estimators perform filtering based on different linearized systems but yield identical estimation results. 

\subsubsection{Computational Efficiency}

Leveraging the duality of the two proposed filters, we can customize estimators for better computational efficiency. To illustrate this, we first analyze the computational complexity of both estimators. 
Assuming the dimensions of the state vector, the process noise vector, the measurement vector, and the measurement noise vector are $n$, $q$, $p$, and $p$, respectively, we analyze the computational complexity of both estimators in relation to the dimensions of these vectors. We mainly focus on the matrix multiplication operations and matrix assignment operations. The analysis results are summarized in Table \ref{table:complexity}. 

\begin{table*}[htbp] 
\centering
\caption{Comparison of computational complexity between T-EKF 1 and T-EKF 2. For clarity, higher complexity values are displayed in red, lower complexity values in blue, and equal values in black.}
\setlength{\abovecaptionskip}{0cm}
\setlength{\belowcaptionskip}{0cm}
\setlength{\tabcolsep}{0.08pt}
\renewcommand{\arraystretch}{1.5}
\begin{tabular}{|c|cc|cc|} 
\hline
& \multicolumn{2}{c|}{T-EKF 1} & \multicolumn{2}{c|}{T-EKF 2} \\
\hline
& Flow & \makecell{Complexity} & \makecell{Flow} & \makecell{Complexity} \\ \hline
1 & $\hat{\mathbf{x}}_{k|k-1} = \mathbf{f}(\hat{\mathbf{x}}_{k-1|k-1}, \mathbf{u}_{k}, \mathbf{0})$ & $\mathcal{O}(n)$ & $\hat{\mathbf{x}}_{k|k-1} = \mathbf{f}(\hat{\mathbf{x}}_{k-1|k-1}, \mathbf{u}_{k}, \mathbf{0})$ & $\mathcal{O}(n)$ \\  \hline
2 & $\bar{{\mathbf{F}}}_{k-1}  \! =  \! \big( \mathbf{T}_k \mathbf{F}_{k-1} \mathbf{T}_{k-1}^{-1} \big) |_{\mathbf{x}^{\ast}_{k} = \hat{\mathbf{x}}_{k|k-1}, \mathbf{x}^{\ast}_{k-1} = \hat{\mathbf{x}}_{k-1|k-1}}$ & $\mathcal{O}(n^{2})$ & $\hat{{\mathbf{F}}}_{k-1}  \! =  \! \big( \mathbf{F}_{k-1} \big) |_{\mathbf{x}^{\ast}_{k} = \hat{\mathbf{x}}_{k|k-1}, \mathbf{x}^{\ast}_{k-1} = \hat{\mathbf{x}}_{k-1|k-1}}$ & $\mathcal{O}(n^{2})$ \\  \hline
3 & $\bar{{\mathbf{G}}}_{k-1} = \big( \mathbf{T}_{k} \mathbf{G}_{k-1} \big) |_{\mathbf{x}^{\ast}_{k} = \hat{\mathbf{x}}_{k|k-1}, \mathbf{x}^{\ast}_{k-1} = \hat{\mathbf{x}}_{k-1|k-1}}$ & $\mathcal{O}(n q)$ & $\hat{{\mathbf{G}}}_{k-1} = \big( \mathbf{G}_{k-1} \big) |_{\mathbf{x}^{\ast}_{k} = \hat{\mathbf{x}}_{k|k-1}, \mathbf{x}^{\ast}_{k-1} = \hat{\mathbf{x}}_{k-1|k-1}}$ & $\mathcal{O}(n q)$ \\  \hline
4 & $\bar{\mathbf{P}}_{k|k-1}  \! =  \! \bar{{\mathbf{F}}}_{k-1} \bar{\mathbf{P}}_{k-1|k-1} \bar{{\mathbf{F}}}_{k-1}^{\top}  \! +  \! \bar{{\mathbf{G}}}_{k-1} \mathbf{Q}_{k} \bar{{\mathbf{G}}}_{k-1}^{\top} $ & $ \mathcal{O}(2n^{3}  \! +  \! n q^{2}  \! +  \! q n^{2}) $ & $ \hat{\mathbf{P}}_{k|k-1}  \! =  \! \hat{{\mathbf{F}}}_{k-1} \hat{\mathbf{P}}_{k-1|k-1} \hat{{\mathbf{F}}}_{k-1}^{\top}  \! +  \! \hat{{\mathbf{G}}}_{k-1} \mathbf{Q}_{k} \hat{{\mathbf{G}}}_{k-1}^{\top} $ & $\mathcal{O}(2n^{3}  \! +  \! n q^{2}  \! +  \! q n^{2})$ \\  \hline
5 & $\bar{{\mathbf{H}}}_k = \big( \mathbf{H}_{k} \mathbf{T}_{k}^{-1} \big) |_{\mathbf{x}^{\ast}_{k} = \hat{\mathbf{x}}_{k|k-1}}$ & $\mathcal{O}(n p)$ & $\hat{{\mathbf{H}}}_k = \big( \mathbf{H}_{k}  \big) |_{\mathbf{x}^{\ast}_{k} = \hat{\mathbf{x}}_{k|k-1}}$ & $\mathcal{O}(n p)$ \\  \hline
6 & $\bar{{\mathbf{D}}}_k = \big( \mathbf{D}_k \big) |_{\mathbf{x}^{\ast}_{k} = \hat{\mathbf{x}}_{k|k-1}}$ & $\mathcal{O}(p^{2})$ & $\hat{{\mathbf{D}}}_k = \big( \mathbf{D}_k \big) |_{\mathbf{x}^{\ast}_{k} = \hat{\mathbf{x}}_{k|k-1}}$ & $\mathcal{O}(p^{2})$ \\  \hline
7 & $ \bar{\mathbf{K}}_{k}  \! =  \! \bar{\mathbf{P}}_{k|k-1} \bar{{\mathbf{H}}}_{k}^{\top} (\bar{{\mathbf{H}}}_{k} \bar{\mathbf{P}}_{k|k-1} \bar{{\mathbf{H}}}_{k}^{\top}  \! +  \! \bar{{\mathbf{D}}}_{k} \mathbf{R}_{k} \bar{{\mathbf{D}}}_{k}^{\top} )^{-1} $ & $\mathcal{O}(2nr(n   \! +  \! p)  \! +  \! 3p^{3})$ & $\hat{\mathbf{K}}_{k}  \! =  \! \hat{\mathbf{P}}_{k|k-1} \hat{{\mathbf{H}}}_{k}^{\top} (\hat{{\mathbf{H}}}_{k} \hat{\mathbf{P}}_{k|k-1} \hat{{\mathbf{H}}}_{k}^{\top}  \! +  \! \hat{{\mathbf{D}}}_{k} \mathbf{R}_{k} \hat{{\mathbf{D}}}_{k}^{\top} )^{-1} $  & $\mathcal{O}(2nr(n   \! +  \! p)  \! +  \! 3p^{3})$ \\ \hline
8 & \makecell{Exact state update \\ $ \hat{\mathbf{x}}_{k|k}  \! =  \! \hat{\mathbf{x}}_{k|k-1}  \! +  \! \hat{\mathbf{T}}_{k|k}^{-1} \bar{\mathbf{K}}_{k} (\mathbf{y}_{k}  \! -  \! \mathbf{h}(\hat{\mathbf{x}}_{k|k-1}, \mathbf{0}))$} & \color{blue} $\mathcal{O}(n(n  \! +  \! p))$ & \makecell{Exact state update \\ $ \hat{\mathbf{x}}_{k|k}  \! =  \! \hat{\mathbf{x}}_{k|k-1}  \! +  \! \Delta \mathbf{T}_{k} \hat{\mathbf{K}}_{k} (\mathbf{y}_{k}  \! -  \! \mathbf{h}(\hat{\mathbf{x}}_{k|k-1}, \mathbf{0}))$} & \color{red} $\mathcal{O}(n(n^2 \! + \! n \! + \! p))$ \\  \hline
9 & \makecell{Or approximate state update \\  $ \hat{\mathbf{x}}_{k|k}  \! =  \! \hat{\mathbf{x}}_{k|k-1}  \! +  \! \hat{\mathbf{T}}_{k|k-1}^{-1} \bar{\mathbf{K}}_{k} (\mathbf{y}_{k}  \! -  \! \mathbf{h}(\hat{\mathbf{x}}_{k|k-1}, \mathbf{0}))$} & \color{red} $\mathcal{O}(n(n  \! +  \! p))$ & \makecell{Or approximate state update \\ $ \hat{\mathbf{x}}_{k|k}  \! =  \! \hat{\mathbf{x}}_{k|k-1}  \! +  \! \hat{\mathbf{K}}_{k} (\mathbf{y}_{k}  \! -  \! \mathbf{h}(\hat{\mathbf{x}}_{k|k-1}, \mathbf{0}))$} & \color{blue} 
 $\mathcal{O}(np)$ \\  \hline
10 & $ \bar{\mathbf{P}}_{k|k}  \! =  \! (\mathbf{I} \! - \! \bar{\mathbf{K}}_{k} \bar{{\mathbf{H}}}_{k}) \bar{\mathbf{P}}_{k|k-1} $ & $\mathcal{O}(n^{2}( n \! +  \!  p))$ & $ \hat{\mathbf{S}}_{k|k}  \! =  \! (\mathbf{I} \! - \! \hat{\mathbf{K}}_{k} \hat{{\mathbf{H}}}_{k}) \hat{\mathbf{P}}_{k|k-1}  $  &  $\mathcal{O}(n^{2}( n \! +  \!  p))$ \\  \hline
11 &  --- &  ---  & $\hat{\mathbf{P}}_{k|k} = (\Delta \mathbf{T}_{k}) \hat{\mathbf{S}}_{k}  (\Delta \mathbf{T}_{k})^{\top} $ & \color{red}  $\mathcal{O}(3n^3)$ \\
\hline
\end{tabular}
\label{table:complexity}
\vspace{-1.5em}
\end{table*}

As observed from Table \ref{table:complexity}, the two estimators distinguish from each other primarily in two key aspects: computation of Jacobian matrices (steps 2-3 and 5-6) and update of the state and covariance (steps 8-11). Since the closed-form expressions for the transformed Jacobian matrices are established in advance, it is only necessary to update the elements of these matrices during the execution process. Consequently, the computational cost associated with Jacobian matrix computation remains the same. During the state and covariance update step, the computational complexity of the T-EKF 2 is slightly higher than that of the T-EKF 1, primarily due to the additional operations required for state and covariance correction. 


\begin{remark}
The T-EKF 1 is preferable if the application is developed from scratch due to better computational efficiency. However, if inheriting from an existing framework, the T-EKF 2 is more recommended because it is easier to embed into those frameworks.
\end{remark}

\subsubsection{Combination Application}


It should be noted that the actual computation costs might differ significantly when choosing different coordinates. This depends on the structure of the actual Jacobian matrices. For instance, in subsequent applications (see Section \ref{sec:application_cl_trans}), the state propagation Jacobian matrix is changed into an identity matrix, substantially reducing the computation cost of covariance propagation. In practice, we can select the most suitable coordinates for filtering, wherein both the inconsistency and efficiency issues can be tackled simultaneously. It is worth mentioning that if two or more distinct coordinate transformations are necessary to address inconsistency and efficiency issues simultaneously, the proposed two filters can be combined to achieve consistent estimation while maintaining high computational efficiency. The discussions regarding the combined use of estimators are not the focus of this paper; therefore, we will not emphasize this topic further.

\section{Application in Multi-Robot Cooperative Localization}
\label{sec:application_cl}
In this section, we conduct application experiments on a typical partially observable system, i.e., multi-robot cooperative localization (CL) system, to verify the proposed method. We compare our method against state-of-the-art baselines in both simulation and real-world experiments and evaluate the performance under different coordinate transformations and state update approaches.

\subsection{Problem Statement of CL}

We consider a cooperative localization scenario where $m$ robots move in a two-dimensional plane. Each robot is equipped with proprioceptive sensors to sense its ego-motion information, and exteroceptive sensors to access relative robot-to-robot measurements. The objective is to jointly estimate these robots' poses with respect to a common reference frame by fusing their ego-motion information and relative measurements. Since absolute measurements are unavailable, the system is not fully observable, and the classical EKF suffers from inconsistency issues. Next, we give the motion and measurement model of this system.

Suppose the $m$ robots are homogeneous. For each robot $i$, the dynamics is described by
\begin{align} \label{equ:cl_system_i0}
\mathbf{p}_{i, k+1} &= \mathbf{p}_{i, k} + \mathbf{R}({\psi_{i, k}}) \Big(\mathbf{v}_{i, k} + \boldsymbol{\nu}_{i, k} \Big) \delta t , \\
\psi_{i, k+1} &= \psi_{i, k} + \Big(\omega_{i, k} + \varpi_{i, k} \Big) \delta t ,
\label{equ:cl_system_i}
\end{align}
where $k$ denotes the discrete-time index; $\mathbf{p}_{i, k} \in \mathbb{R}^2$ and $\psi_{i, k} \in \mathbb{R}$ denote the position and orientation of robot $i$ in a global reference frame, respectively;
$\mathbf{v}_{i, k} \in \mathbb{R}^2$ and $\omega_{i, k} \in \mathbb{R}$ denote robot $i$'s linear and angular velocities expressed relative to the body-fixed reference frame; $\boldsymbol{\nu}_{i, k}$ and $\varpi_{i, k}$ denote the input noises of proprioceptive sensors modeled as zero-mean white Gaussian; $\delta t$ is the sampling period, and $\mathbf{R}(\cdot) \in \mathbb{R}^{2\times 2}$ denotes the rotation matrix. Each robot $i$ might detect the relative position from its neighboring robot $j$, $j \in \{1,2,\cdots,m\} \slash \ 
 \{i\}$. The observation model of the relative position measurement can be written as follows
\begin{equation}
\mathbf{y}_{ij, k} =
\mathbf{R}({\psi_{i, k}})^{\top} \Big( \mathbf{p}_{j, k} - \mathbf{p}_{i, k} \Big) + \mathbf{w}_{ij, k}
\label{equ:cl_relative_position}
\end{equation} 
where $\mathbf{w}_{ij, k} \in \mathbb{R}^{2}$ denotes the measurement noise assumed to be subject to a zero-mean Gaussian distribution.

\subsection{Transformation Design}
\label{sec:application_cl_trans}

In the following, we will present the design of coordinate transformations to mitigate the inconsistency issue.
To this end, we first give the system's unobservable subspace. According to \cite{B7}, the unobservable subspace spans along the direction of global position and orientation, i.e., 
\begin{equation}
\mathbb{N}_k
=
\Span_{\col}
\left( 
\left[
\begin{array}{cc}
\mathbf{I}_{2} & \mathbf{J} \mathbf{p}_{1, {k}} \\
\mathbf{0}_{1 \times 2} &  1 \\
\vdots & \vdots \\
\mathbf{I}_{2} & \mathbf{J}\mathbf{p}_{m, {k}} \\
\mathbf{0}_{1 \times 2} & 1 \\
\end{array}
\right]
\right) 
\label{equ:CL_N_x}
\end{equation}
with $
\mathbf{J}
=
{\small
\left[
\begin{array}{cc}
0 & -1 \\
1 & 0  \\
\end{array}
\right] } .
$

\subsubsection{Transformation T1}

In light of Theorem \ref{theorem:trans_design_1} and the system's unobservable subspace in \eqref{equ:CL_N_x}, it is straightforward to construct the following transformation matrix 
\begin{equation} \label{equ:CL_T_inv_1}
{ 
\begin{aligned}
\mathbf{T}_k 
=
\left[
\setlength{\arraycolsep}{3.2pt}
\begin{array}{cc|ccc}
 \mathbf{I}_2 & \mathbf{J} \mathbf{p}_{1, k} & & &  \\
\mathbf{0}_{1 \times 2} & 1 & \multicolumn{3}{c}{\raisebox{1ex}[0pt]{\Large 0}} \\ \hline 
\vdots & \vdots  \\
\mathbf{I}_2 & \mathbf{J} \mathbf{p}_{m, k} \\
\mathbf{0}_{1 \times 2} & 1 & \multicolumn{3}{c}{\raisebox{3ex}[0pt]{\Large I}} \\
\end{array}
\right]^{-1} 
\end{aligned}}
\end{equation}
by which the transformed system's unobservable subspace becomes state-independent, thereby overcoming the inconsistency problem. 

\subsubsection{Transformation T2}
To improve computational efficiency, we present a new transformation design. Specifically, observing that the basis (see \eqref{equ:CL_N_x}) of the system's unobservable subspace is a column-wise stack of $m$ invertible block matrices, thus we can design a block-diagonal transformation matrix as follows:
\begin{equation} \label{equ:CL_T_inv_2}
{
\begin{aligned}
\mathbf{T}_k &= 
\left[
\setlength{\arraycolsep}{3.2pt}
\begin{array}{ccccc}
\mathbf{I}_2 & \mathbf{J} \mathbf{p}_{1, k} & \cdots & \mathbf{0}_{2 \times 2} & \mathbf{0}_{2 \times 1} \\
\mathbf{0}_{1 \times 2} & 1 & \cdots & \mathbf{0}_{1 \times 2} & 0  \\
\vdots & \vdots & \ddots & \vdots & \vdots   \\
\mathbf{0}_{2 \times 2} & \mathbf{0}_{2 \times 1} & \cdots & \mathbf{I}_2 & \mathbf{J} \mathbf{p}_{m, k}  \\
\mathbf{0}_{1 \times 2} & 0 & \cdots & \mathbf{0}_{1 \times 2} & 1 \\
\end{array}
\right]^{-1} .
\end{aligned}}
\end{equation}
Applying the transformation matrix \eqref{equ:CL_T_inv_2} makes the state propagation Jaobian matrix become an identity matrix. According to Theorem \ref{theorem:trans_design_2}, this would lead to a constant unobservable subspace.
Consequently, the inconsistency issue associated with cooperative localization can be mitigated.
Besides, the designed transformation \eqref{equ:CL_T_inv_2} significantly reduces the computational burden in covariance propagation. Subsequently, we will validate the performance of the proposed methods and transformations through both simulations and real-world experiments.
\subsection{Simulation Experiments}

Consider a simulation scenario where six robots move randomly in a two-dimensional plane with the linear velocity set to be $\rm 0.3 \: m/s$ and the angular velocity drawn from a uniform distribution $\rm [-0.1 rad/s, \: 0.1 rad/s]$. The ego-motion measurements (linear and angular velocities) of these robots are corrupted by zero-mean Gaussian white noises with standard deviations of ${\rm 0.15 \: m/s}$ and ${\rm 0.06 \: rad/s}$, respectively. In addition, each robot randomly detects relative position measurements to other robots with a detection probability of $20\%$. The standard deviation of relative position measurement noises is $\rm 0.1 \: m$. We compare the performance of the T-EKF with the state-of-the-art methods, including the Unscented Kalman filter (UKF), the classical EKF, the First-Estimate-Jacobian (FEJ) EKF \cite{B8}, and the Invariant EKF (I-EKF) \cite{B44}. 

\subsubsection{Evaluation of Consistency and Accuracy}
In this simulation test, we employ the Root Mean Square Error (RMSE) to quantify the accuracy of estimators, and the Normalized Estimation Error Squared (NEES) to evaluate estimation consistency. The simulation step is set to ${\rm 2.0 \: s}$ and $100$ trials are conducted. The evaluation results are summarized in Table~\ref{table:cl_sim}. The best results among these estimators are highlighted in bold and the second best results are printed in blue. Besides, we draw the plots of the average RMSE and NEES in Figure \ref{fig:cl_nees}. Notably, in this simulation scenario, for a consistent estimator, the average NEES should be close to the dimension of the state vector, i.e., the expected position NEES is equal to $2$ while the ideal orientation NEES approaches $1$.
\begin{table}[h] 
\centering
\caption{Average RMSE, NEES, and running time over Monte Carlo simulations for CL applications.}
\setlength{\abovecaptionskip}{0cm}
\setlength{\belowcaptionskip}{0cm}
\setlength{\tabcolsep}{4.6pt}
\renewcommand{\arraystretch}{1.3}
\begin{tabular}{|c|c|c|c|c|c|} 
\hline
Estimators & \makecell{RMSE \\ Pos. (m)} & \makecell{RMSE \\ Ori. (rad)} & \makecell{NEES \\ Pos.} & \makecell{NEES \\ Ori.} & \makecell{{ Running} \\ {Time (ms)}} \\
\hline
UKF & 4.330 & 0.460 & 5.153 & 3.518 & 4.681 \\
\hline
EKF & 5.104 & 0.591 & 6.828 & 8.233 & 1.556 \\
\hline
FEJ & 3.930 & 0.434 & 2.794 & 1.375 & {1.605} \\
\hline
I-EKF & 3.734 & 0.421 & 2.490 & 1.301 & 1.931 \\
\hline
T-EKF 1 (w/ T1) & \multirow{2}{*}{\textbf{3.618}} & \multirow{2}{*}{\color{blue}{0.410}} & \multirow{2}{*}{\textbf{2.392}} & \multirow{2}{*}{\color{blue}{1.258}} & 1.926 \\
\cline{1-1} \cline{6-6}
T-EKF 2 (w/ T1) & & & & & 2.165 \\
\hline
T-EKF 1 (w/ T2) & \multirow{2}{*}{\color{blue} {3.647}} & \multirow{2}{*}{\textbf{0.399}} & \multirow{2}{*}{\color{blue} {2.393}} & \multirow{2}{*}{\textbf{1.190}} & 1.854 \\
\cline{1-1} \cline{6-6}
T-EKF 2 (w/ T2) & & & & & 2.005 \\
\hline
\end{tabular}
\label{table:cl_sim}
\vspace{-1.0em}
\end{table}

\begin{figure}[!htp]
\centering
\subfloat[RMSE]{
\includegraphics[scale=0.48]{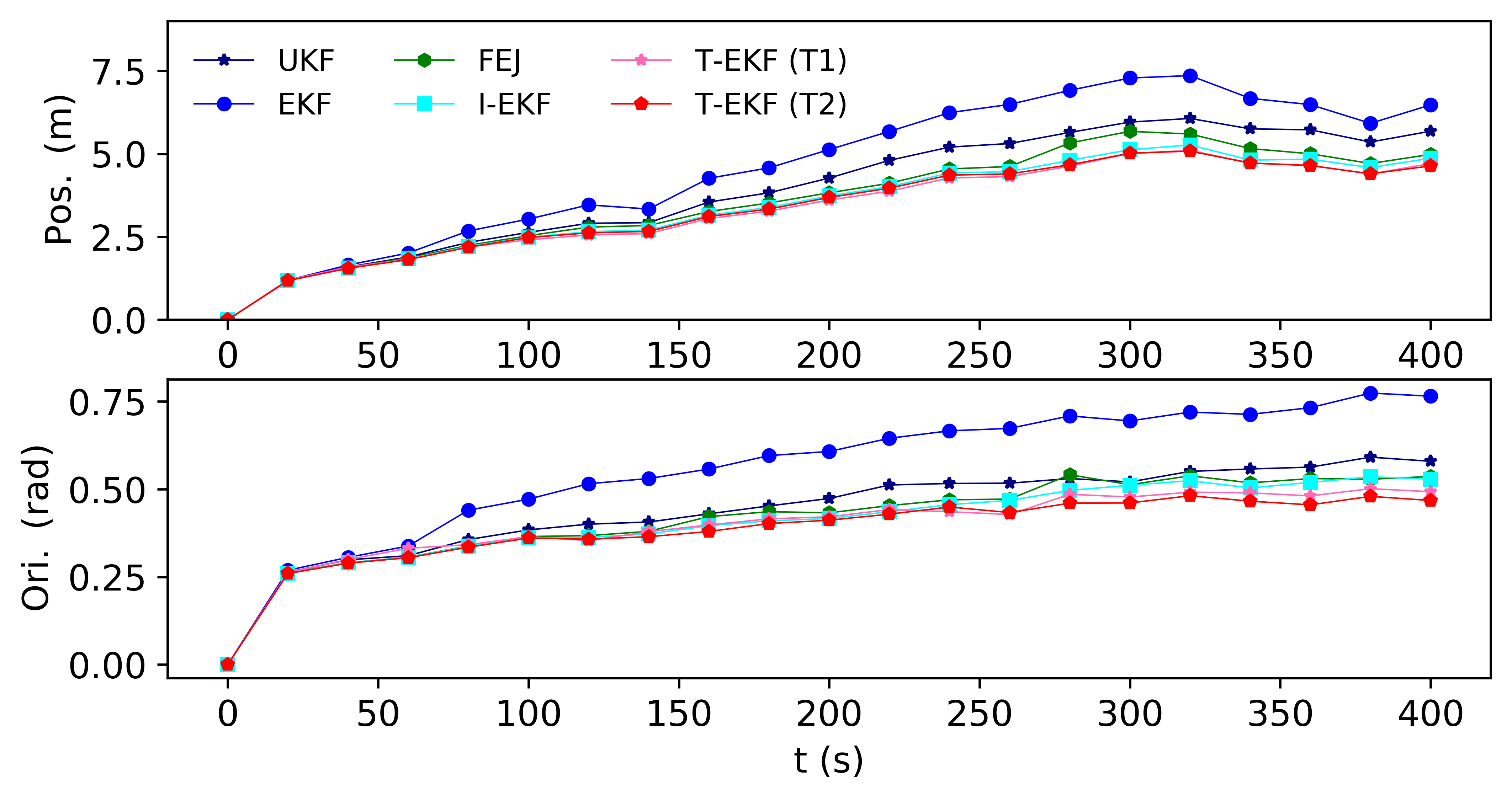}
} \vspace{-8pt} \\
\subfloat[NEES]{
\includegraphics[scale=0.48]{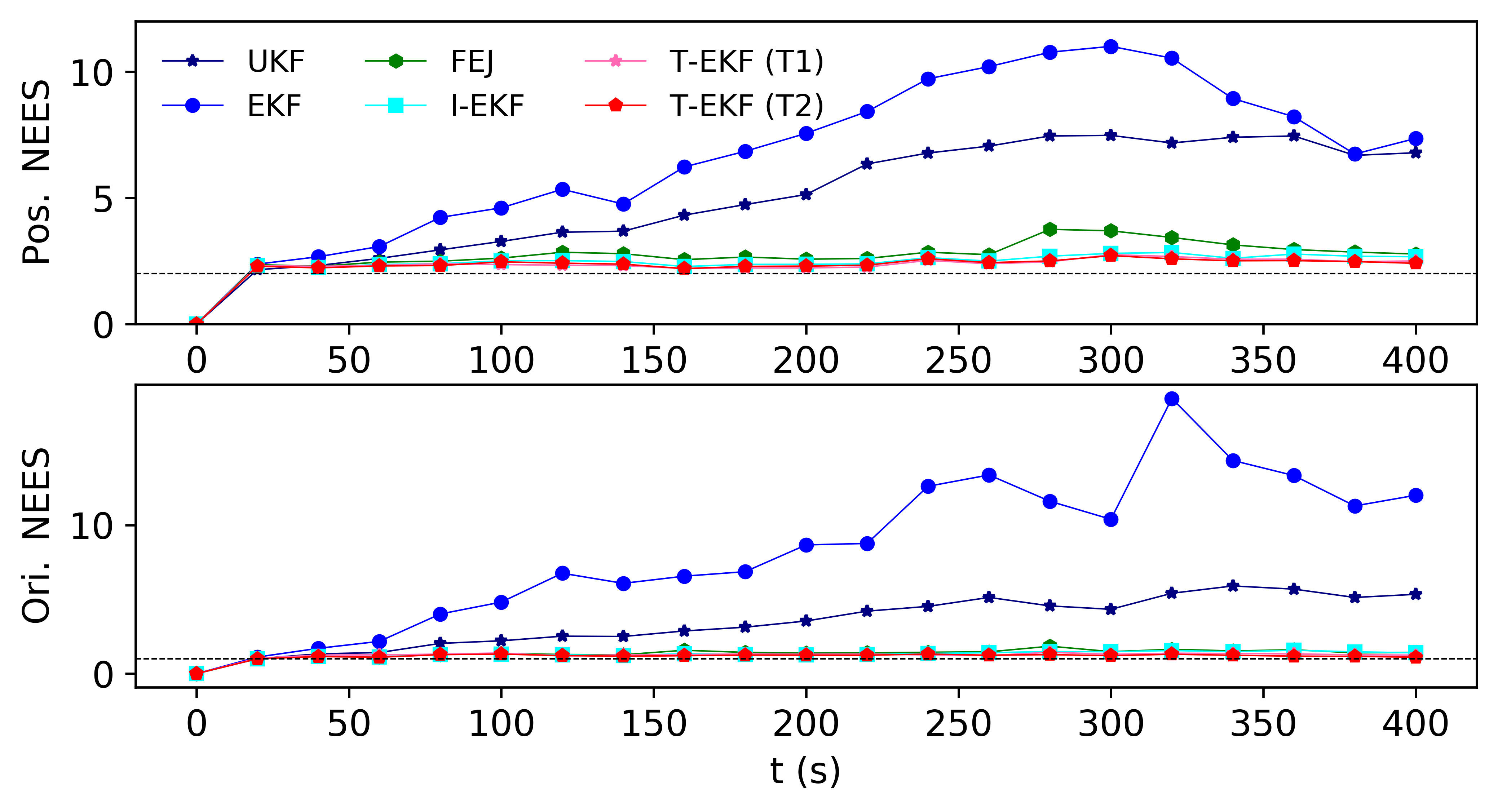}
}
\caption{The plots of statistical RMSE and NEES (position and orientation) of different estimators over Monte Carlo simulation for CL applications.}
\label{fig:cl_nees}
\vspace{-1.0em}
\end{figure}

In the consistency and accuracy test, both implementations of the T-EKF adopt the exact state update approaches. As evident, the proposed T-EKF approach achieves almost consistent estimation results and exhibits better performance compared to other estimators. Amongst, the two estimators, T-EKF 1 and T-EKF 2, yield equivalent results when the same transformation matrix is used. It should be pointed out that the estimation results can vary significantly when different coordinate transformations are applied. In general, selecting the optimal coordinate transformation remains a challenging task, often requiring a process of trial and error.
Fortunately, from the perspective of guaranteeing consistency and enhancing efficiency, we in this paper provide some fundamental principles for the design of coordinate transformations. This will serve as essential guidelines for the development of effective coordinate transformations.

The nonlinear estimator UKF and the classical EKF encounter inconsistency issues stemming from the observability mismatch. Consequently, the performance is significantly inferior to those estimators possessing correct observability properties. The FEJ's performance significantly degrades with time going due to that the adopted linearized system does not strictly follow the first-order Taylor expansion, resulting in larger linearization errors and degradation of accuracy. In contrast, the linearized systems utilized by the I-EKF and the T-EKF not only maintain correct observability properties but also achieve first-order optimality. Consequently, these two estimators behave more accurately and reliably.

\subsubsection{Evaluation of Computational Efficiency}
Additionally, we validate the average running time of these estimators at each single iteration with results presented in Table~\ref{table:cl_sim}. We conduct the simulation test on a desktop consisting of an AMD® Ryzen 9 3900x CPU and 32 GB memory. As expected, the nonlinear estimator UKF requires more computational resources and significantly exceeds the classical EKF and its variants as it necessitates to maintain a lot of sigma points. The computational costs of the proposed T-EKF estimators are slightly higher than that of the classical EKF due to the operations associated with coordinate transformation. Among the two proposed implementation methods, the T-EKF 1 exhibits superior computational efficiency. It is noteworthy that computational requirements might vary when adopting different coordinate transformations. In the context of cooperative localization (CL), the designed second transformation (T2) contributes to better computational performance.

\subsubsection{Evaluation of State Update Approaches}
Furthermore, we conduct simulation experiments to compare the two proposed state update approaches (exact and approximate state update methods). Notably, the two methods exhibit significant differences primarily when the predicted values deviate from the estimates. In actual applications, state predictions may diverge from the estimated values as the measurement detection probability decreases. Therefore, to thoroughly validate their performance, we perform Monte Carlo simulations across different detection probabilities of relative measurements. In these simulation tests, we implement both update approaches within the framework of the T-EKF 1 estimator and the T2 transformation. We present plots of the statistical position and orientation RMSE under varying detection probabilities in Figure \ref{fig:cl_update}.


\begin{figure}[!htp]
\centering
\includegraphics[scale=0.5]{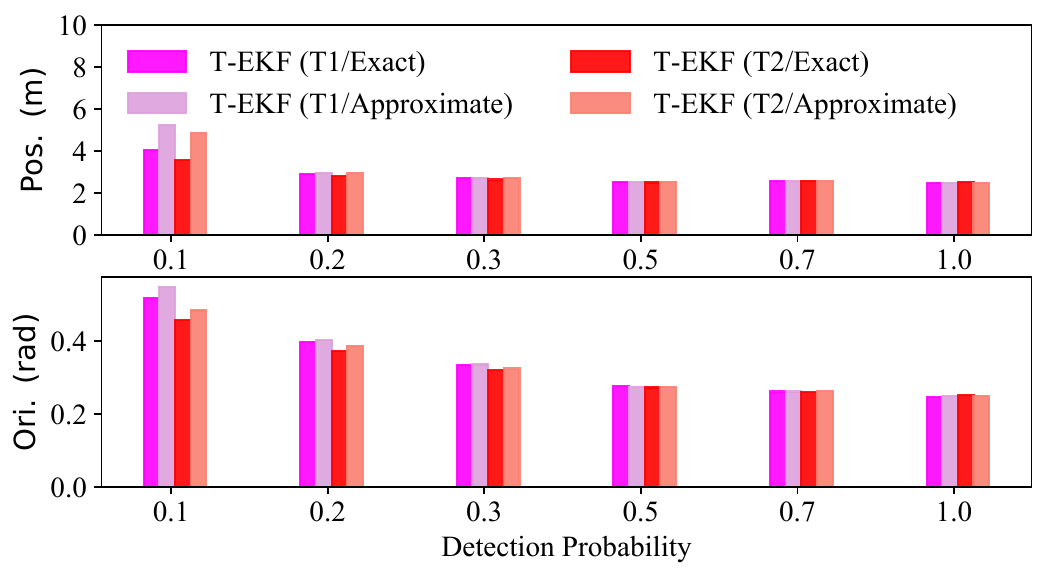}
\caption{The statistical RMSE (position and orientation) under different detection probabilities of relative measurements for CL applications.}
\label{fig:cl_update}
\vspace{-1.0em}
\end{figure}

As demonstrated, the exact state update method achieves better performance across all detection probabilities. In contrast, the approximate state update approach behaves poorly when the detection probability of relative measurements is relatively low. More critically, it may cause the estimator to be divergent under these conditions. As the relative measurement detection probability increases, the results achieved using the approximate state update approach gradually get close to that obtained from the exact state update method. This phenomenon might be attributed to the decreasing discrepancies between the predicted and estimated values as the detection probability of relative measurements rises. In practice, it is better to adopt the exact state update method so as to achieve more accurate and reliable results.

\subsection{Real-world Experiments}
\label{sec:cl_exp}
In this section, we conduct tests on the publicly available UTIAS dataset~\cite{B3}, which was collected using a fleet of five two-wheeled differential drive robots in nine different experiments with different motion profiles. All sub-datasets include a cohesive collection of ego-motion information (linear and angular velocities), relative range-bearing measurements, and ground-truth data. Notably, the ninth one is more challenging since some barriers are intentionally placed in the environment to occlude robots’ views, resulting in a significant reduction in the robots' sensing range and the detection frequency of relative measurements. Additionally, incorrect data associations among relative measurements substantially increase, potentially causing estimators to be divergent. Figure~\ref{fig:cl_traj} illustrates the trajectories of the five robots within the ninth sub-dataset.

\begin{figure}[!htb]
\centering
\includegraphics[scale=0.5, trim=0 0 0 20,clip]{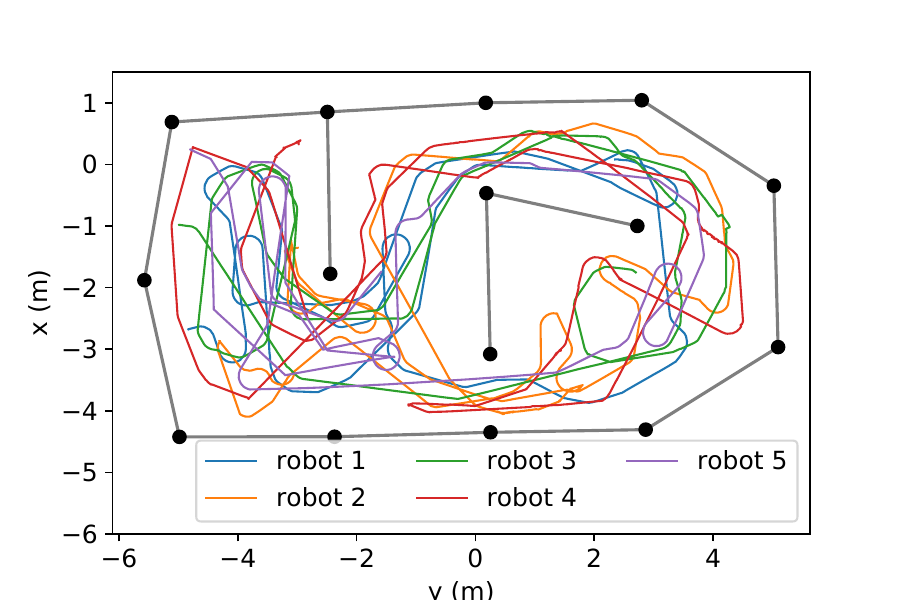}
\caption{The five robots' trajectories in sub-dataset $9$ from $\rm 0$ to $\rm 600s$.}
\label{fig:cl_traj}
\vspace{-1.0em}
\end{figure}

In the dataset experiment, we compare the T-EKF with the nonlinear estimator UKF, and other state-of-the-art filters, including the EKF, the FEJ \cite{B8}, and the I-EKF \cite{B44}. To make a fair comparison, we implement all estimators at the same frequency across all dataset experiments. Additionally, for each estimator, all available relative robot-to-robot measurements are utilized for the estimation updates. The plots of the statistical distribution of position and orientation estimation errors are presented in Figure~\ref{fig:s_error}. Besides, the statistical position and orientation RMSE of these estimators across the nine sub-datasets are summarized in Table~\ref{table:rmse_exp}, with the best results highlighted in bold.

\begin{figure}[!htb]
\centering
\subfloat[Position errors]{
\includegraphics[scale=0.45, trim=0pt 0pt 0pt 30pt, clip]{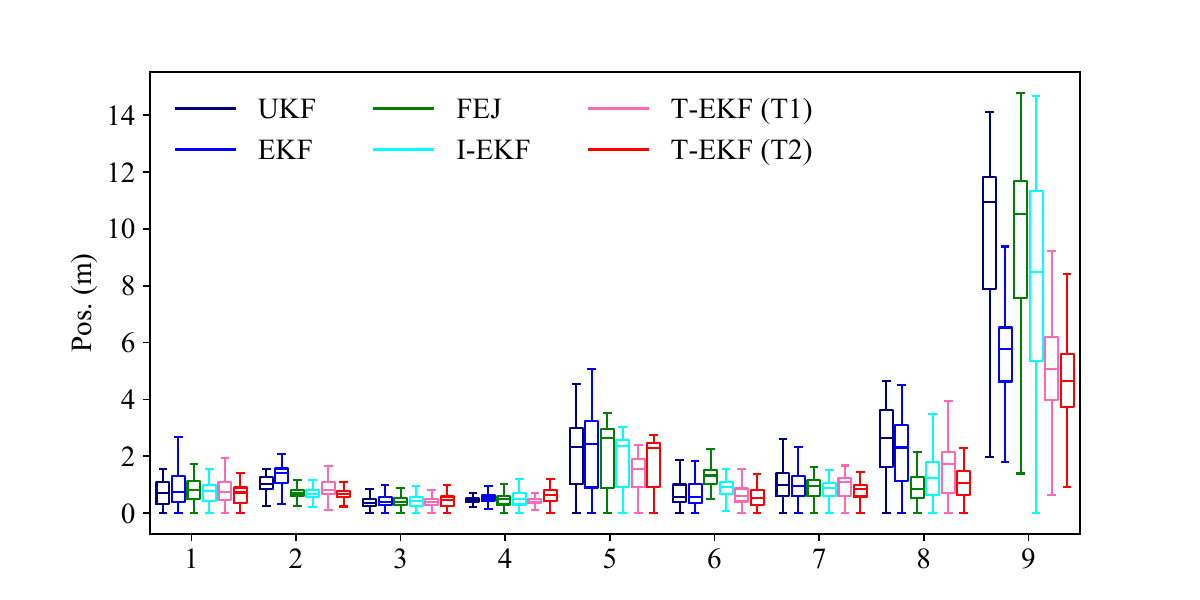}
} \vspace{-8pt} \\
\subfloat[Orientation errors]{
\includegraphics[scale=0.45, trim=0pt 0pt 0pt 30pt, clip]{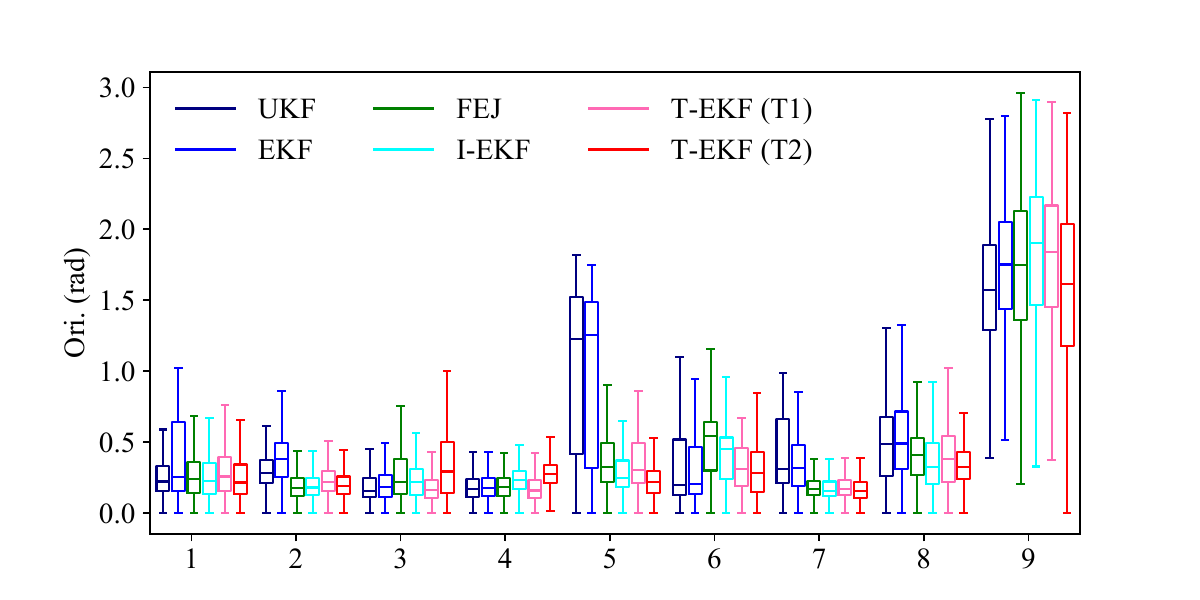}
}
\caption{Statistical error distributions of these estimators over nine sub-datasets for CL applications.}
\vspace{-1.0em}
\label{fig:s_error}
\end{figure}

\begin{table}[!h] 
\centering
\caption{{Average position and orientation RMSE over UTIAS dataset experiments for CL applications.}}
\setlength{\abovecaptionskip}{0cm}
\setlength{\belowcaptionskip}{0cm}
\setlength{\tabcolsep}{1.3pt}
\renewcommand{\arraystretch}{1.5}
\begin{tabular}{|c|c|c|c|c|c|c|} 
\hline
Dataset &  UKF & EKF & FEJ & I-EKF & \makecell{T-EKF \\ (w/ T1)} & \makecell{T-EKF \\ (w/ T2)}  \\
\hline
& \multicolumn{6}{c|}{Position / Orientation RMSE (m / rad)}  \\
\hline
1 & 0.83/ \textbf{0.29} & 1.10/ 0.48 & 0.91/ 0.34 & 0.80/ 0.32 & 0.89/ 0.40 & \textbf{0.72}/ 0.31 \\ \hline
2 & 1.04/ 0.32 & 1.31/ 0.41 & 0.71/ \textbf{0.21} & 0.69/ 0.21 & 0.89/ 0.25 & \textbf{0.67}/ 0.22 \\ \hline
3 & 0.43/ 0.21 & 0.46/ 0.22 & 0.43/ 0.32 & 0.46/ 0.27 & \textbf{0.42}/ \textbf{0.20} & 0.48/ 0.38 \\ \hline
4 & 0.48/ 0.21 & 0.57/ 0.21 & 0.49/ 0.22 & 0.56/ 0.26 & \textbf{0.43}/ \textbf{0.20} & 0.68/ 0.30 \\ \hline
5 & 2.43/ 1.15 & 2.60/ 1.13 & 2.32/ 0.40 & 2.05/ 0.32 & \textbf{1.53}/ 0.40 & 1.96/ \textbf{0.28} \\ \hline
6 & 0.83/ 0.42 & 0.76/ 0.37 & 1.26/ 0.55 & 0.89/ 0.45 & 0.78/ 0.39 & \textbf{0.61}/ \textbf{0.37} \\ \hline
7 & 1.16/ 0.45 & 1.07/ 0.38 & 0.95/ 0.19 & 0.88/ 0.18 & 1.03/ 0.19 & \textbf{0.83}/ \textbf{0.18} \\ \hline
8 & 2.81/ 0.62 & 2.40/ 0.61 & \textbf{1.04}/ 0.49 & 1.50/ 0.46 & 1.74/ 0.49 & 1.17/ \textbf{0.43} \\ \hline
9 & 10.04/ \textbf{1.62} & 5.70/ 1.77 & 9.82/ 1.79 & 8.87/ 1.88 & 5.26/ 1.84 & \textbf{4.87}/ 1.68 \\ \hline
Average & 2.23/ 0.59 & 1.78/ 0.62 & 1.99/ 0.50 & 1.86/ 0.48 & 1.44/ 0.48 & \textbf{1.33}/ \textbf{0.46} \\
\hline
\end{tabular}
\label{table:rmse_exp}
\vspace{-1.0em}
\end{table}

As evident, among these estimators, the T-EKF achieves the best mean performance in both position and orientation estimation. Besides, the T-EKF adopting the second coordinate transformation (T2) achieves better mean performance than that using the first coordinate transformation (T1). The nonlinear estimator UKF and the classical EKF behave worse than the estimators having correct observability properties due to the inconsistency issues arising from observability mismatch. The FEJ and the I-EKF suffer from severe accuracy degradation in sub-dataset $9$. This is because of the influence of measurement outliers resulting from erroneous associations. In comparison, the T-EKF demonstrates better robustness against measurement perturbations. 

Notably, although the T-EKF addresses the inconsistency issue and achieves near-optimal estimation results, it does not consistently demonstrate the highest accuracy across all sub-datasets. This is because the optimality of the estimator is guaranteed only in a statistical sense. Due to inherent randomness, it cannot be assured to be the best in every single test. Furthermore, it can be found that the results obtained using different transformations vary and may even be complementary. This suggests the potential for fusing estimation results from various transformations to achieve improved performance, which will be explored in future research.

\section{Application in Multi-Source Target Tracking}
\label{sec:application_tt}
In this section, we present the application of the proposed methods to a class of observable nonlinear systems, where multiple measurement sources exist with each capable of perceiving only partial states of the system. Although the entire system is observable, the system would become unobservable if each measurement source operates independently. Consequently, the classical EKF may still encounter inconsistency issues in such systems. 

\subsection{Problem Statement of Target Tracking}

Consider a typical scenario of multi-source target tracking wherein a robot moves in a two-dimensional plane and is tracked by observing multiple landmarks on the plane. These landmarks are scattered in the environment and provide relative bearing measurements to the target alternatively. By using the intermittent bearing measurements from multiple landmarks as well as the ego-motion information, we can estimate the target’s pose with respect to the global reference frame. 

Consider an omnidirectional robot platform with its motion model given by
\begin{align} \label{equ:tt_system}
\mathbf{p}_{k+1} &= \mathbf{p}_{k} + \mathbf{R}({\psi_{k}}) \Big(\mathbf{v}_{k} + \boldsymbol{\nu}_{i, k} \Big) \delta t, \\
\psi_{k+1} &= \psi_{k} + \Big(\omega_{k} + \varpi_{i, k} \Big) \delta t, 
\label{equ:tt_system_}
\end{align}
where $k$ denotes the discrete-time index; $\mathbf{p}_{k} \in \mathbb{R}^2$ and $\psi_{k} \in \mathbb{R}$ are the robot's position and orientation at time step $k$, respectively;
$\mathbf{v}_{k} \in \mathbb{R}^2$ and $\omega_{k} \in \mathbb{R}$ are the robot's linear and angular velocities; $\boldsymbol{\nu}_{i, k}$ and $\varpi_{i, k}$ denote the zero-mean white Gaussian input noises; $\delta t$ is the sampling period, and $\mathbf{R}(\cdot) \in \mathbb{R}^{2\times 2}$ represents the rotation matrix. 

The bearing measurement provided by the $i$-th landmark at time step $k$ is expressed as:
\begin{equation}
z_{i, k} 
=
\tan^{-1} \left( \dfrac{y_{{\rm s}_i} - y_{k}}{x_{{\rm s}_i} - x_{k}} \right) - \psi_{k} + \eta_{i, k}
\label{equ:tt_relative_bearing}
\end{equation} 
where $\mathbf{p}_{k} = [x_{k} \: y_{k}]^{\top}$ denotes the robot's position, $\mathbf{p}_{{\rm s}_i} = [x_{{\rm s}_i} \: y_{{\rm s}_i}]^{\top}$ represents the known position of the $i$-th landmark, and $\eta_{i, k} \in \mathbb{R}$ denotes the measurement noise subject to a zero-mean Gaussian distribution. 

\subsection{Transformation Design}
\label{sec:application_tt_trans}

Note that the system described in \eqref{equ:tt_system}-\eqref{equ:tt_relative_bearing} is observable if and only if at least two bearing measurements can be accessed. Otherwise, the system is not observable. To mitigate the potential inconsistency issue caused by intermittent bearing measurements, we first analyze the system's unobservable subspace under the condition of one single landmark and then present the design of coordinate transformations.

Suppose only the $i$-th landmark is available. Then the system is not observable with the unobservable subspace as follows
\begin{equation}
\mathbb{N}_{i, k}
=
\Span_{\col}
\left( 
\left[
\begin{array}{c}
\mathbf{J} (\mathbf{p}_{k} - \mathbf{p}_{{\rm s}_{i}}) \\ 
1 \\
\end{array}
\right]
\right) .
\label{equ:tt_N_x}
\end{equation}
To mitigate the inconsistency issue, we aim to find a common transformation matrix such that the observable subspaces associated with different landmarks can be changed as state-independent simultaneously. To this end, we construct a transformation matrix as follows
$$
\mathbf{T}_{k}
=
\left[
\begin{array}{cc}
\mathbf{I}_2 & - \mathbf{J} \mathbf{p}_{k} \\
\mathbf{0}_{1 \times 2} & 1
\end{array}
\right]
$$
by which the state propagation Jacobian matrix becomes an identity matrix. Then, in light of Theorem \ref{theorem:trans_design_2}, the system's unobservable subspace also becomes constant.
Thus, the inconsistency issue caused by intermittent bearing measurements can be mitigated.

\subsection{Simulation Experiments}

To validate the proposed method, we conduct $200$ Monte Carlo simulation trials to evaluate the estimators’ performance. 
In this simulation test, the target robot is controlled to move along a circular trajectory at a constant velocity of $\rm 0.3 \: m/s$ and $\rm 0.1 \: rad/s$. The standard deviations of linear and angular velocity measurement noises are set to be ${\rm 0.15 \: m/s}$ and ${\rm 0.06 \: rad/s}$, respectively. Two landmarks with known positions are adopted to alternately yield relative bearing measurements to the target. The standard deviation of bearing measurement noises is $\rm 0.1 \: rad$. The simulation step is set to ${\rm 0.4 \: s}$. It should be noted that the noise levels and the simulation step are purposely set to be larger than that encountered in practice so as to make the effects of inconsistency more apparent.

\begin{table}[h] 
\centering
\caption{Average RMSE, NEES, and running time over Monte Carlo simulations for target tracking applications.}
\setlength{\abovecaptionskip}{0cm}
\setlength{\belowcaptionskip}{0cm}
\setlength{\tabcolsep}{6.8pt}
\renewcommand{\arraystretch}{1.3}
\begin{tabular}{|c|c|c|c|c|c|} 
\hline
Estimators & \makecell{RMSE \\ Pos. (m)} & \makecell{RMSE \\ Ori. (rad)} & \makecell{NEES \\ Pos.} & \makecell{NEES \\ Ori.} & \makecell{{ Running} \\ {Time (ms)}} \\
\hline
DR & 3.250 & 0.608 & \textbf{2.120} & \textbf{1.060} & {0.068} \\
\hline
EKF & 1.587 & 0.316 & 5.499 & 2.809 & {0.192} \\
\hline
OC-EKF  & 1.351 & 0.257 & 3.541 & 1.094 & 0.199 \\
\hline
I-EKF & \color{blue}{1.348} & \color{blue}{0.256} & 3.923 & 1.087 & 0.229 \\
\hline
T-EKF 1 & \multirow{2}{*}{\textbf{1.342}} & \multirow{2}{*}{\textbf{0.254}} & \multirow{2}{*}{\color{blue}{3.287}} & \multirow{2}{*}{\color{blue}{1.070}} & 0.223 \\
\cline{1-1} \cline{6-6}
T-EKF 2 & & & & & 0.271 \\
\hline
\end{tabular}
\label{table:tt_sim}
\vspace{-1.0em}
\end{table}

We compare the performance of the T-EKF with the dead reckoning (DR) method as well as state-of-the-art estimation methods, including the classical EKF, the Observability-Constrained (OC) EKF \cite{B8}, and the Invariant EKF (I-EKF) \cite{B44}. The RMSE and NEES of these estimators are summarized in Table~\ref{table:tt_sim}. Amongst, the best results are highlighted in bold. Besides, we draw the plots of the average RMSE and NEES in Figure \ref{fig:tt_nees}. 

\begin{figure}[!htp]
\centering
\subfloat[RMSE]{
\includegraphics[scale=0.48]{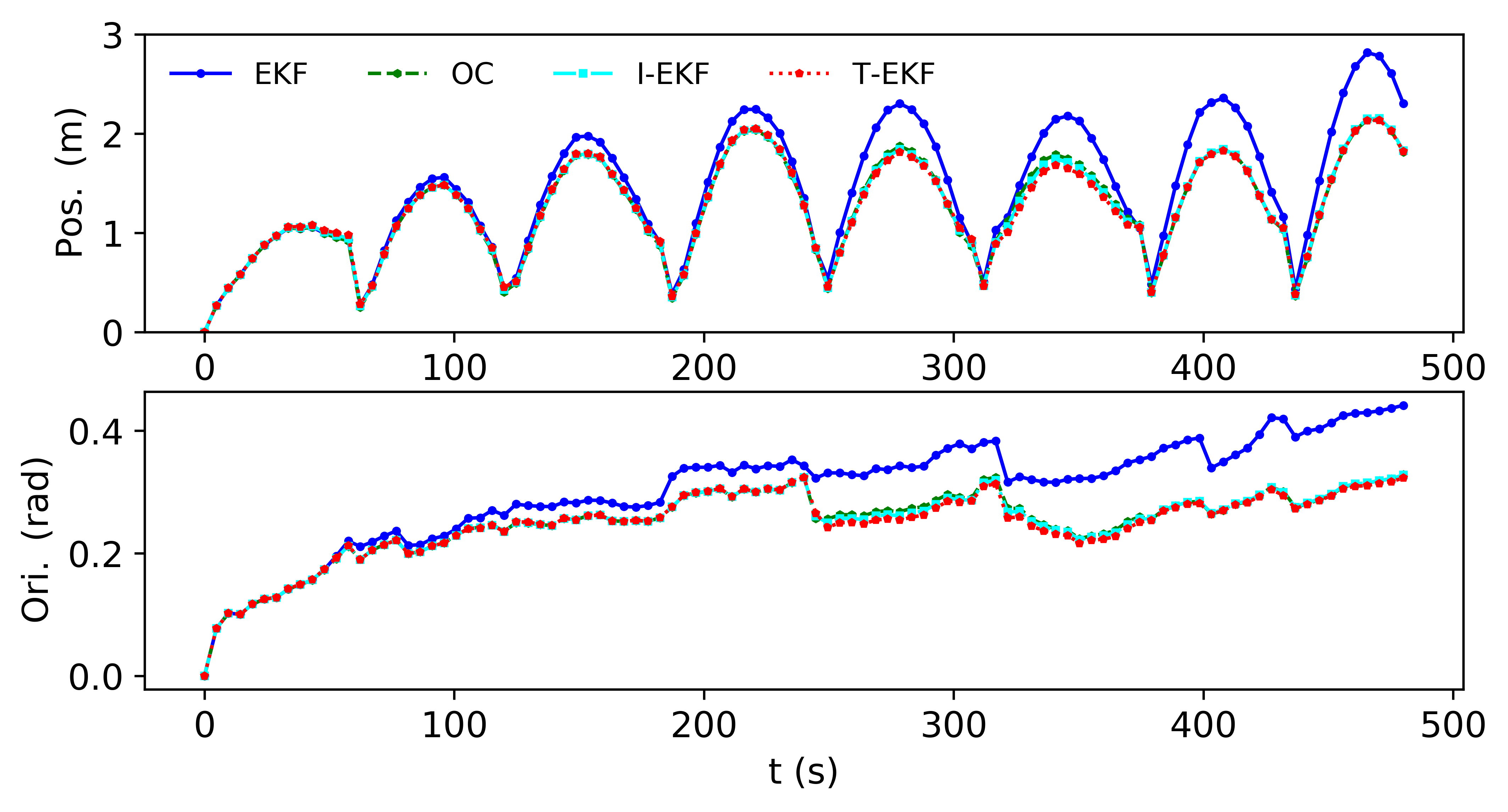}
} \vspace{-8pt} \\
\subfloat[NEES]{
\includegraphics[scale=0.48]{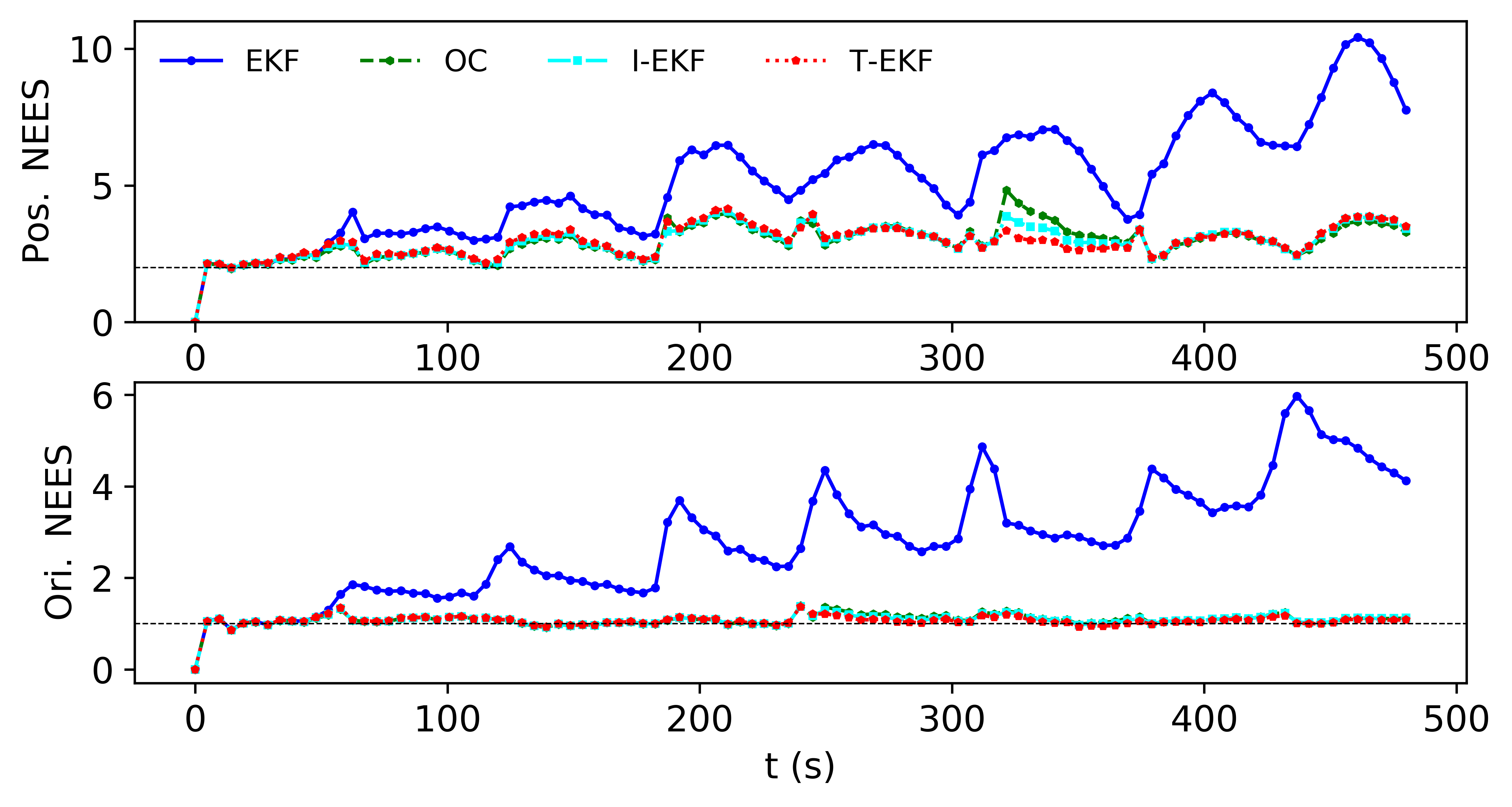}
}
\caption{The plots of statistical RMSE and NEES (position and orientation) over 200 Monte Carlo simulation trials for target tracking application.}
\label{fig:tt_nees}
\vspace{-1.0em}
\end{figure}

As evident, the classical EKF is inconsistent and significantly inferior to those estimators possessing correct observability properties. This is due to the fact that the estimator mistakes the system's unobservable directions and acquires spurious information when the landmarks provide observations individually. Notably, the inconsistency effect would be relieved as the interval between alternating landmark observations decreases. In comparison, benefiting from guaranteeing correct observability properties, the OC-EKF, the I-EKF, and the T-EKF achieve almost consistent estimation results and exhibit better performance. In addition, the I-EKF and the T-EKF behave more accurately and reliably since the first-order optimality is guaranteed.

\subsection{Real-world Experiments}
We further conduct tests on the publicly available UTIAS dataset~\cite{B3}. Different from the experiment presented in Section \ref{sec:cl_exp} wherein all landmarks are discarded, we regard the landmarks as known anchors and utilize the bearing measurements from these landmarks to track the robots' pose. To verify the performance of the proposed method, we compare it with state-of-the-art baselines, including the classical EKF, the OC-EKF, and the I-EKF. The position and orientation RMSE of these estimators across the nine sub-datasets are summarized in Table~\ref{table:rmse_exp_tt}, with the best results highlighted in bold and the second best results printed in blue.

\begin{table}[!htbp] 
\centering
\caption{{Average position and orientation RMSE over UTIAS dataset experiments for target tracking applications.}}
\setlength{\abovecaptionskip}{0cm}
\setlength{\belowcaptionskip}{0cm}
\setlength{\tabcolsep}{1.5pt}
\renewcommand{\arraystretch}{1.5}
\begin{tabular}{|c|c|c|c|c|} 
\hline
Dataset & EKF & OC-EKF & I-EKF & T-EKF  \\
\hline
& \multicolumn{4}{c|}{Position / Orientation RMSE (m / rad)}  \\
\hline
1 & {\color{blue} 0.1954} / {\color{blue} 0.1125} & \textbf{0.1926} / \textbf{0.1121} & 0.2044 / 0.1162 & 0.2048 / 0.1158 \\ \hline
2 & 0.2146 / 0.1186 & 0.2147 / \textbf{0.1181} & {\color{blue} 0.2117} / 0.1183 & \textbf{0.2083} / {\color{blue} 0.1182} \\ \hline
3 & 0.1384 / 0.1153 & 0.1353 / 0.1147 & {\color{blue} 0.1321} / {\color{blue} 0.1141} & \textbf{0.1300} / \textbf{0.1136} \\ \hline
4 & 0.1461 / 0.0828 & 0.1442 / 0.0827 & {\color{blue} 0.1417} / {\color{blue} 0.0824} & \textbf{0.1388} / \textbf{0.0823} \\ \hline
5 & 0.4216 / 0.1889 & 0.4102 / 0.1878 & {\color{blue} 0.4030} / {\color{blue} 0.1862} & \textbf{0.4009} / \textbf{0.1852} \\ \hline
6 & 0.2548 / 0.1363 & \textbf{0.2408} / 0.1336 & {\color{blue} 0.2428} / {\color{blue} 0.1334} & 0.2433 / \textbf{0.1329} \\ \hline
7 & 0.3382 / 0.1670 & 0.3278 / 0.1664 & {\color{blue} 0.3277} / {\color{blue} 0.1655} & \textbf{0.3245} / \textbf{0.1651} \\ \hline
8 & 0.3408 / 0.2202 & 0.3376 / 0.2190 & \textbf{0.3309} / {\color{blue} 0.2178} & {\color{blue} 0.3362} / \textbf{0.2175} \\ \hline
9 & 0.2779 / 0.5022 & \textbf{0.2714} / 0.4975 & {\color{blue} 0.2727} / {\color{blue} 0.4958} & 0.2792 / \textbf{0.4944} \\ \hline
Average & 0.2586 / 0.1826 & 0.2527 / 0.1813 & {\color{blue} 0.2518} / {\color{blue} 0.1810} & \textbf{0.2517} / \textbf{0.1805}  \\
\hline
\end{tabular}
\label{table:rmse_exp_tt}
\vspace{-1.0em}
\end{table}

As seen, these estimators behave comparably across the nine sub-datasets. This is attributed to the robot's frequent access to bearing measurements from multiple landmarks, which makes the system remain observable most of the time. Consequently, the inconsistency phenomenon caused by the erroneous perception of observability properties is not as obvious as that observed in simulations. Nevertheless, the classical EKF continues to exhibit inconsistency resulting from observability mismatch and behaves worse than the other estimators. Among these, the T-EKF achieves the best mean performance in both position and orientation estimation and outperforms others in most of the nine sub-datasets. 

\section{Application in Visual-Inertial Navigation}
\label{sec:application_vins}

In this section, we present the application of the proposed method to address the inherent inconsistency problem in 3D visual-inertial navigation systems. 

\subsection{Problem Statement of VINS}
\label{sec:vio}
VINS, with an IMU and a camera, aims to simultaneously estimate the IMU state $\mathbf{x}_{I}$ and the unknown feature state $\mathbf{x}_{f}$:
\begin{equation}
\begin{aligned}
\mathbf{x}
&=
\big( \mathbf{x}_{I}, \ \mathbf{x}_{f} \big) \\
&=
\big( {^I_G \mathbf{R}}, \ {^G \mathbf{p}_{I}}, 
\ {^G \mathbf{v}_{I}}, \ \mathbf{b}_{g}, \ \mathbf{b}_{a}, \ {^{G}\mathbf{p}_{f_1}}, \ \cdots, \ {^{G}\mathbf{p}_{f_m}} \big)
\end{aligned}
\end{equation}
where ${^I_G \mathbf{R}} \in \mathbf{SO}(3)$ represents the rotation from the global frame $\{G\}$ to the IMU frame $\{I\}$; ${^G \mathbf{p}_{I}} \in \mathbb{R}^3$ and ${^G \mathbf{v}_{I}} \in \mathbb{R}^3$ represent the IMU position and velocity in the global frame $\{\mathcal{G}\}$, respectively; $\mathbf{b}_g \in \mathbb{R}^3$ and $\mathbf{b}_a \in \mathbb{R}^3$ denote the biases of the gyroscope and accelerometer, respectively; ${^{G}\mathbf{p}_{f_i}} \in \mathbb{R}^3$ denotes the 3D position of the $i$-th unknown feature ($i =1, ..., m$). 

The discrete-time system model \cite{B71} that describes the time evolution of the IMU state can be written as
\begin{equation} \label{equ:imu_evol}
\mathbf{x}_{I_{k+1}} = \mathbf{f}(\mathbf{x}_{I_{k}}, \boldsymbol{\omega}_{m_k}, \boldsymbol{a}_{m_k}, \mathbf{n}_k)
\end{equation}
where $\mathbf{n}_k$ consists of the zero-mean white Gaussian noise of the IMU measurements along with random walk bias noises; $\boldsymbol{\omega}_{m_k}$ and $\mathbf{a}_{m_k}$ denote the gyroscope and accelerometer measurements, which are modeled as
\begin{align}
\boldsymbol{\omega}_{m_k} &= {\boldsymbol{\omega}_{I_k}} + \mathbf{b}_{g_k} + \mathbf{n}_{g_k} \\
\boldsymbol{a}_{m_k} &= {\boldsymbol{a}_{I_k}} - {^{I_k}_G \mathbf{R}} {^G \mathbf{g}} + \mathbf{b}_{a_k} + \mathbf{n}_{a_k}
\end{align}
where ${\boldsymbol{\omega}_{I_k}}$ and ${\boldsymbol{a}_{I_k}}$ represent the angular velocity and the linear acceleration of the system, respectively; ${^G \mathbf{g}}$ denotes the gravitational acceleration expressed in the global frame $\{G\}$; $\mathbf{n}_{g_k}$ and $\mathbf{n}_{a_k}$ denote zero-mean white Gaussian noises of the gyroscope and accelerometer, respectively.

The visual measurement model can be written as the following series of nested functions:
\begin{equation} \label{equ:vis_m}
\begin{aligned}
\mathbf{z}_{i_k} 
&= \mathbf{h}(\mathbf{x}_{I_{k}}, {^{G}\mathbf{p}_{f_i}}) + \mathbf{w}_{i_k} \\
&= \mathbf{h}_{p}({^{C_k}\mathbf{p}_{f_i}}) + \mathbf{w}_{i_k} \\
&= \mathbf{h}_{p}(\mathbf{h}_{t}({^{G}\mathbf{p}_{f_i}}, {^{I_k}_G \mathbf{R}}, {^G\mathbf{p}_{I_{k}}}, {^{C}_{I}\mathbf{R}}, {^{C}\mathbf{p}_{I}})) + \mathbf{w}_{i_k} 
\end{aligned}
\end{equation}
where $\mathbf{z}_{i_k}$ is the normalized undistorted visual measurement; $\mathbf{w}_{i_k}$ is the zero-mean white Gaussian measurement noise; ${^{C_k}\mathbf{p}_{f_i}}$ denotes the feature position in the current camera frame $\{C_{k}\}$; ${^{G}\mathbf{p}_{f_i}}$ represents the feature position in the global frame $\{G\}$;  ${^{C}_{I}\mathbf{R}}$ and ${^C\mathbf{p}_{I}}$ denote the system's extrinsic parameters;
$\mathbf{h}_p(\cdot)$ represents the projection function of the camera
\begin{equation} \label{equ:prop_func}
\begin{aligned}
\mathbf{z}_{i_k} 
\triangleq
\mathbf{h}_p({^{C_k}\mathbf{p}_{f_i}}) 
= 
\left[
\begin{array}{c}
\dfrac{^{C_k}x_{f_i}}{^{C_k}z_{f_i}} \\
\dfrac{^{C_k}y_{f_i}}{^{C_k}z_{f_i}} \\
\end{array}
\right]
\end{aligned}
\end{equation}
with 
$
{^{C_k}\mathbf{p}_{f_i}} = [^{C_k}x_{f_i} \ ^{C_k}y_{f_i} \ ^{C_k}z_{f_i}]^{\top};
$
and $\mathbf{h}_t(\cdot)$ represents the coordinate transformation function expressed as
\begin{equation} \label{equ:trans_func}
\begin{aligned}
{^{C_k}\mathbf{p}_{f_i}} 
&\triangleq 
\mathbf{h}_{t}(\cdot) 
=\mathbf{h}_{t}({^{G}\mathbf{p}_{f_i}}, {^{I_k}_G \mathbf{R}}, {^G \mathbf{p}_{I_{k}}}, {^C_I\mathbf{R}}, {^C \mathbf{p}_{I}}) \\
\end{aligned}
\end{equation}

\subsection{Transformation Design}
\label{sec:application_vins_trans}
 
In the following, we will present the design of coordinate transformations to mitigate the inconsistency issue in VINS.
To this end, we first give the system's unobservable subspace. According to \cite{B34}, the system's unobservable subspace spans along the direction of the global position and the global orientation around gravity as follows
\begin{equation}
\mathbb{N}_{k}
= \Span_{\col}
\left( 
\left[
\begin{array}{cc}
\mathbf{0}_{3 \times 3} &  {^G \mathbf{g}} \\
\mathbf{I}_{3} & -[{^G \mathbf{p}_{I_k}}]_\times {^G \mathbf{g}} \\
\mathbf{0}_{3 \times 3} & -[{^G \mathbf{v}_{I_k}}]_\times {^G \mathbf{g}} \\
\mathbf{0}_{6 \times 3} & \mathbf{0}_{6\times 1} \\
\mathbf{I}_{3} & - [{^G \mathbf{p}_{f_1}}]_\times {^G \mathbf{g}} \\
\vdots &\vdots \\
\mathbf{I}_{3} & - [{^G \mathbf{p}_{f_m}}]_\times {^G \mathbf{g}} \\
\end{array}
\right]
\right) .
\label{equ:VINS_N_G}
\end{equation}
Observing the state-dependent unobservable direction, i.e., the last column in \eqref{equ:VINS_N_G}, according to Theorem \ref{theorem:trans_design_1}, we can design the following transformation matrix:
\begin{equation} \label{equ:T_for_VINS_global}
\mathbf{T}_{k} = 
\left[
\begin{array}{c|cccccc}
\mathbf{I}_3 & \multicolumn{6}{c}{\raisebox{0ex}[0pt]{\large 0}} \\ \hline
- \lbrack {^G \mathbf{p}_{I_k}} \rbrack_{\times} &  \\
- \lbrack {^G \mathbf{v}_{I_k}} \rbrack_{\times} &  \\
\mathbf{0}_{6\times 3} & \\
- \lbrack {^G \mathbf{p}_{f_1,k}} \rbrack_{\times} &  \\
\vdots & \\
- \lbrack {^G \mathbf{p}_{f_m,k}} \rbrack_{\times} & \multicolumn{6}{c}{\raisebox{8ex}[0pt]{\LARGE I}}  \\
\end{array}
\right]^{-1} .
\end{equation}
%
%
In light of Lemma \ref{lemma1}, applying the above transformation matrix results in the following constant (state-independent) unobservable subspace: 
\begin{equation}
\bar{\mathbb{N}}_{k}
= \Span_{\col}
\left( 
\left[
\begin{array}{cc}
\mathbf{0}_{3 \times 3} &  {^G \mathbf{g}} \\
\mathbf{I}_{3} & \mathbf{0}_{3\times 1} \\
\mathbf{0}_{3 \times 3} & \mathbf{0}_{3\times 1} \\
\mathbf{0}_{6 \times 3} & \mathbf{0}_{6\times 1} \\
\mathbf{I}_{3} & \mathbf{0}_{3\times 1} \\
\vdots &  \vdots \\
\mathbf{I}_{3} & \mathbf{0}_{3\times 1} \\
\end{array}
\right]
\right).
\label{equ:VINS_N_G_T}
\end{equation}
Evidently, the state-dependent unobservable direction becomes constant, while the state-independent unobservable directions remain invariant. Notably, in the transformation design process described above, we focus solely on the state-dependent unobservable direction and design transformations in accordance with Theorem \ref{theorem:trans_design_1} to treat them. The state-independent unobservable directions are automatically preserved.

\subsection{Simulation Experiments}

To verify the effectiveness of the proposed method in addressing the inconsistency issue in VINS, we conduct Monte Carlo simulations and compare it with state-of-the-art approaches, including the classical EKF, the FEJ \cite{B16}, and the I-EKF \cite{B31}. To eliminate the potential influence of the estimation framework on algorithm evaluation, we select OpenVINS as a unified testing platform to evaluate these estimators. It should be noted that the EKF and the FEJ are natively supported by OpenVINS, while the I-EKF and the T-EKF are developed by ourselves.

\begin{figure}[htbp]
\centering
\includegraphics[scale = 0.1, trim=0pt 20pt 0pt 150pt, clip]{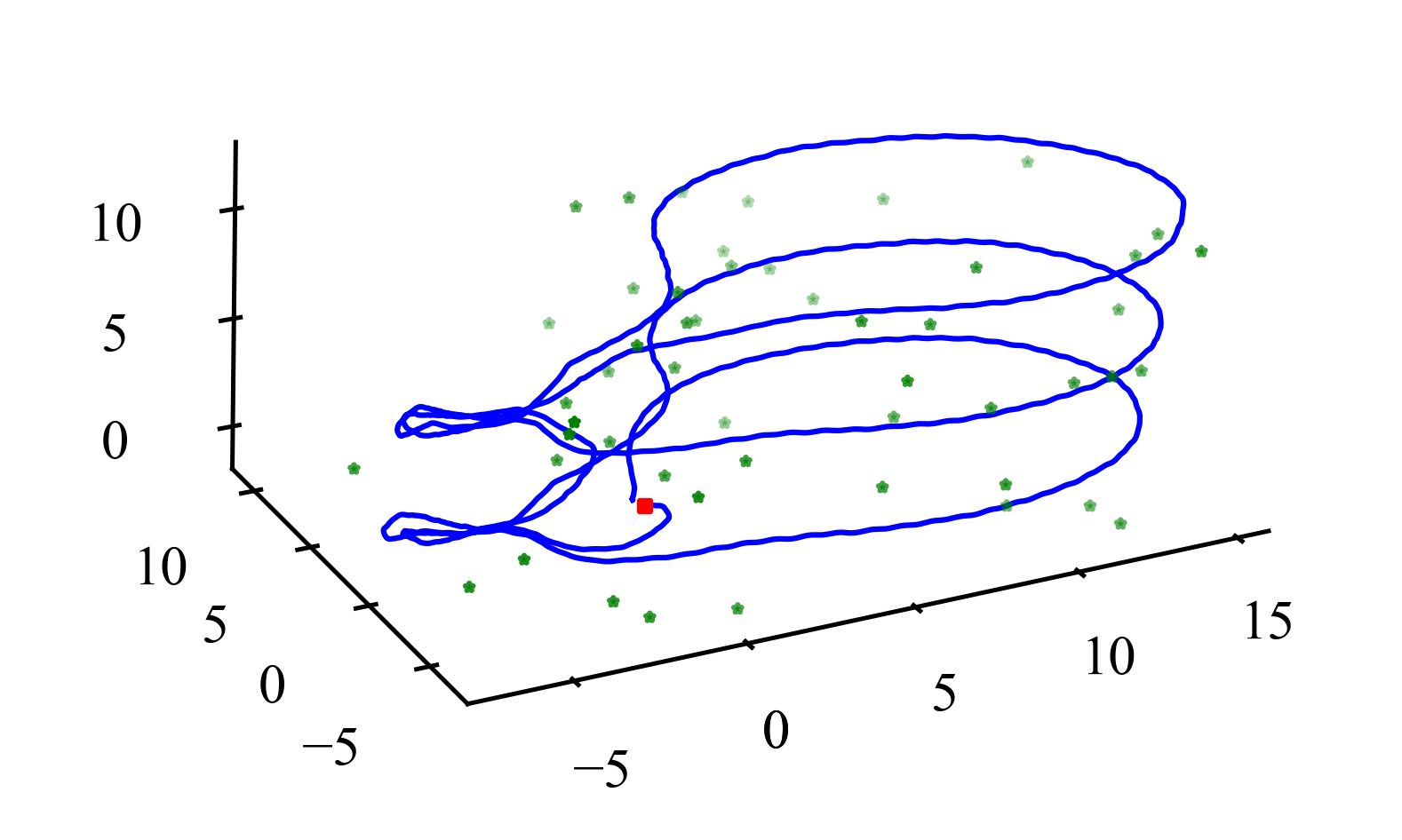}
\caption{ The motion trajectory in Monte Carlo simulations. The red square is the start of the trajectory and the green stars are the features.
}
\label{fig:slam_gore_traj}
\vspace{-1.0em}
\end{figure}

We employ OpenVINS's built-in simulator to generate measurement data and select the Udel-gore as the motion trajectory~\cite{B18,B33}, as shown in Figure \ref{fig:slam_gore_traj}. To fully validate the performance of these estimators under different noise conditions, we perform multiple Monte Carlo simulations with the visual measurement noise set to be $1$, $3$, and $5$ pixels, respectively. In each simulation, every estimator undergoes 1000 trials. For clarity, we summarize the main simulation parameters in Table \ref{tab:VINS_Simu_Param}. 

\begin{table}[htbp]
\centering
\caption{Parameter setup in Monte Carlo simulations.}
\setlength{\abovecaptionskip}{0cm}
\setlength{\belowcaptionskip}{0cm}
\setlength{\tabcolsep}{8pt}
\renewcommand{\arraystretch}{1.5}
\begin{tabular}{cc|cc}
\hline
\textbf{Parameters}   &  \textbf{Value} & \textbf{Parameters}   &  \textbf{Value} \\
\hline
Gyro. Walk Noise & 1.9393e-5 & Acc. Walk Noise & 3.0000e-3 \\
Gyro. Meas. Noise & 1.6968e-4 & Acc. Meas. Noise & 2.0000e-3 \\
IMU Freq. & 200 & Cam. Freq. & 10\\ 
Cam. Meas. Noise & [1, 3, 5] & Max SLAM Point & 40 \\ 
Cam. Num. & Mono & Max Cloned IMU &11 \\
\hline
\end{tabular}
\label{tab:VINS_Simu_Param}
\vspace{-1.0em}
\end{table}

\subsubsection{Accuracy and Consistency Evaluation}

As with the preceding simulation, we also employ the root mean square error (RMSE) and normalized estimation error squared (NEES) to quantify the accuracy and consistency of these estimators. The average position and orientation RMSE values for these estimators under varying levels of visual measurement noise are summarized in Table \ref{table:rmse_exp}, with the best results highlighted in bold. Additionally, Figure \ref{fig:vins_rmse_overtime} illustrates the position and orientation RMSE over time for different visual measurement noise levels. Furthermore, Figure \ref{fig:VINS_NEES} displays the statistical distributions of the position and orientation NEES for these estimators under varying noise conditions. It is important to note that within the 3D visual-inertial navigation application, both the position and orientation NEES should approach $3$ for an estimator to be considered consistent.

\begin{figure}[htbp]
\centering
\includegraphics[width=0.99\linewidth]{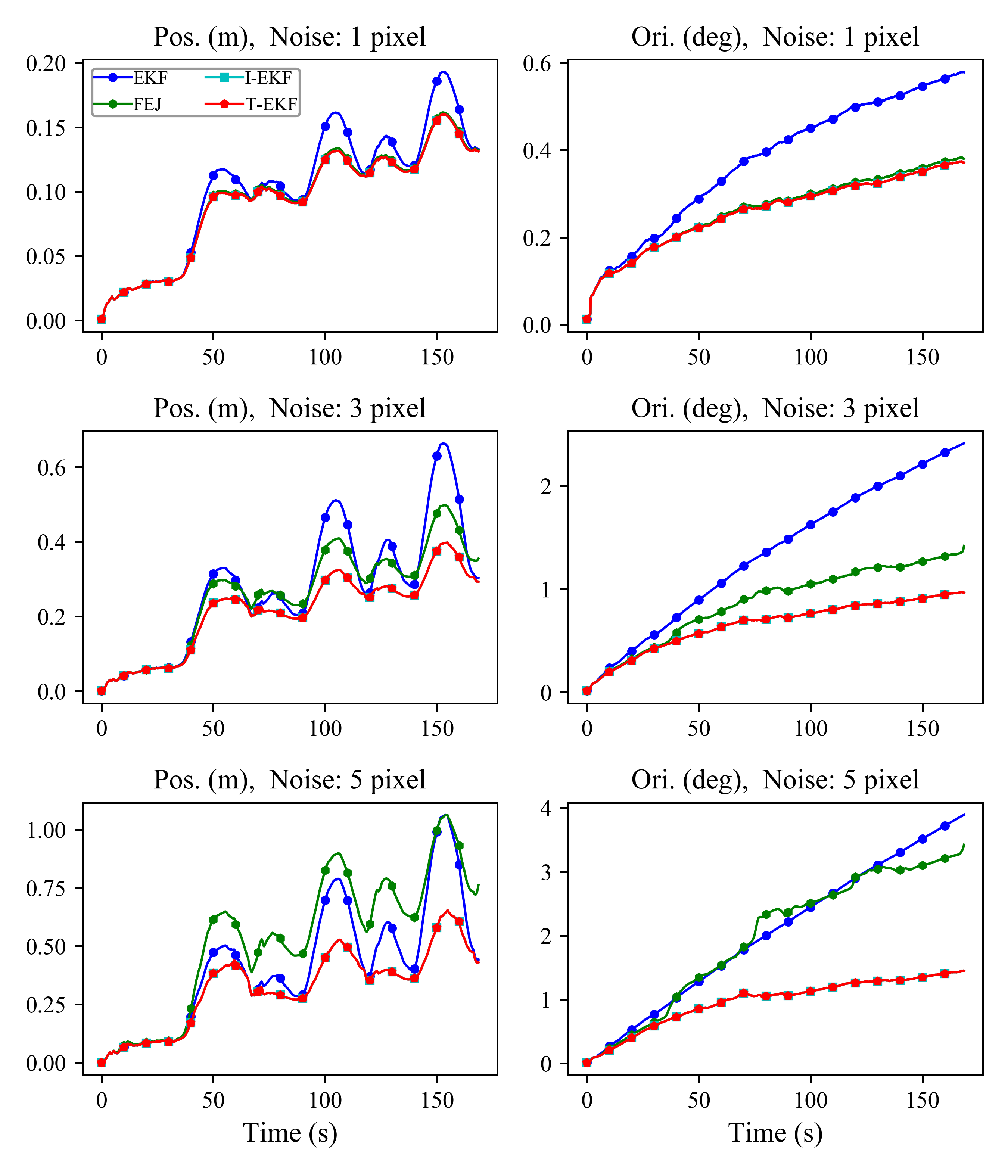}
\caption{The plots of position and orientation RMSE in Monte Carlo simulations under different measurement noise levels. Notably, the plots of the I-EKF and the T-EKF almost coincide and are nearly indistinguishable.}
\label{fig:vins_rmse_overtime}
\vspace{-1.0em}
\end{figure}

\begin{table}[htbp]
\centering
\caption{The average position and orientation RMSE in Monte Carlo simulations under different measurement noise levels.}
\setlength{\abovecaptionskip}{0cm}
\setlength{\belowcaptionskip}{0cm}
\setlength{\tabcolsep}{11pt}
\renewcommand{\arraystretch}{1.5}
\begin{tabular}{c|ccc}
\hline
{Estimator}     & {1 pixel} & {3 pixel} & {5 pixel} \\ 
\hline
& \multicolumn{3}{c}{Position / Orientation RMSE (m / deg)}  \\
\hline
EKF &   0.114 / 0.407   &  0.324 / 1.528  & 0.496 / 2.355   \\
FEJ &  0.103 / 0.279   & 0.290 /  0.963  &  0.610 / 2.260   \\
I-EKF &  \textbf{0.102} / \textbf{0.274}    & \textbf{0.238} / 0.708  &   0.366 / 1.049   \\
T-EKF &  \textbf{0.102} / \textbf{0.274}    & \textbf{0.238} / \textbf{0.706}  &   \textbf{0.365} / \textbf{1.046}   \\
\hline
\end{tabular}
\label{fig:vins_armse}
\vspace{-1.0em}
\end{table}

From Table \ref{fig:vins_armse} and Figure \ref{fig:vins_rmse_overtime}, it can be observed that the three estimators: FEJ, I-EKF, and T-EKF, behave comparably and significantly surpass the classical EKF in terms of accuracy when the measurement noise is low. However, as measurement noise increases, the FEJ's performance degrades substantially, and in some cases it even falls below that of the classical EKF. This degradation occurs because the FEJ selects the first state estimate as linearization points, which violates the first-order optimality. Owning to this, poor initial estimates caused by high measurement noises might lead to non-negligible linearization errors, thereby reducing its accuracy. In contrast, both the I-EKF and T-EKF adopt the current best estimates as linearization points, ensuring both first-order optimality and correct observability properties. Consequently, these two estimators consistently deliver accurate and reliable estimation results, significantly outperforming the classical EKF across all levels of measurement noises.

\begin{figure}[htbp]
\centering
\includegraphics[width=0.99\linewidth]{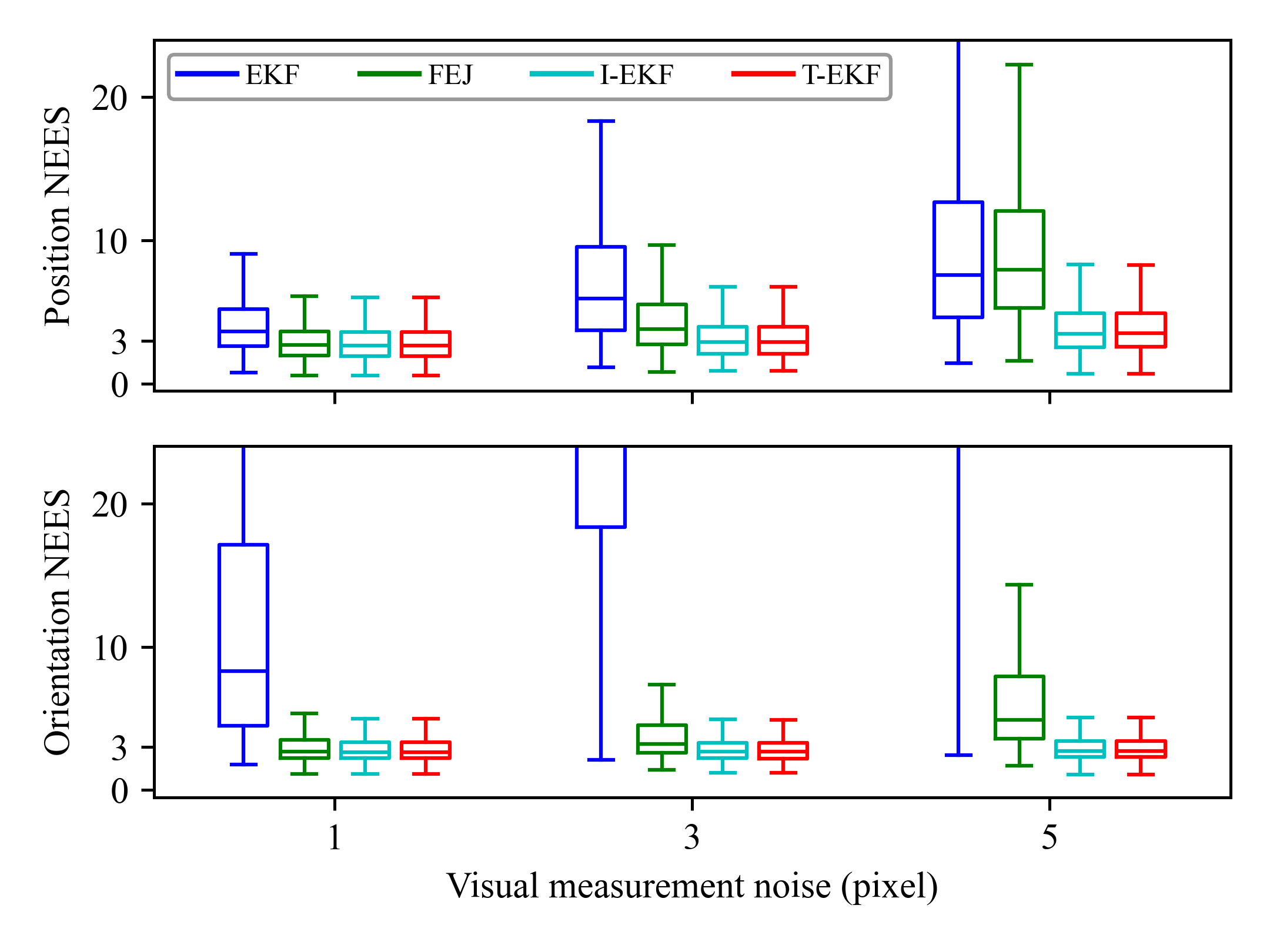}
\caption{The plots of position and orientation NEES in Monte Carlo simulations under various measurement noise levels. }
\label{fig:VINS_NEES}
\vspace{-1.0em}
\end{figure}

As seen from Figure \ref{fig:VINS_NEES}, due to mistaking the unobservable global orientation as observable, the classical EKF exhibits severe inconsistency phenomena, especially in the result of orientation estimation. Benefiting from guaranteeing correct observability properties, the FEJ exhibits improved consistency under low measurement noise levels. However, owning to suffering from linearization errors caused by poor initial estimates, its consistency significantly deteriorates as measurement noises increase. In comparison, the I-EKF and the T-EKF always remain consistent across different measurement noise levels. This demonstrates the stability and reliability of the proposed method, especially in highly challenging scenarios.

\subsubsection{Computational Efficiency Evaluation}
Next, we evaluate the computational efficiency of these estimators. It is important to note that the two proposed implementation methods, i.e., T-EKF 1 and T-EKF 2, differ in computational efficiency. As a result, we examine the two methods separately. Figure \ref{fig:VINS_timing} depicts the average running time required by these estimators to process single-frame data. 

\begin{figure}[htbp]
\centering
\includegraphics[scale = 0.56]{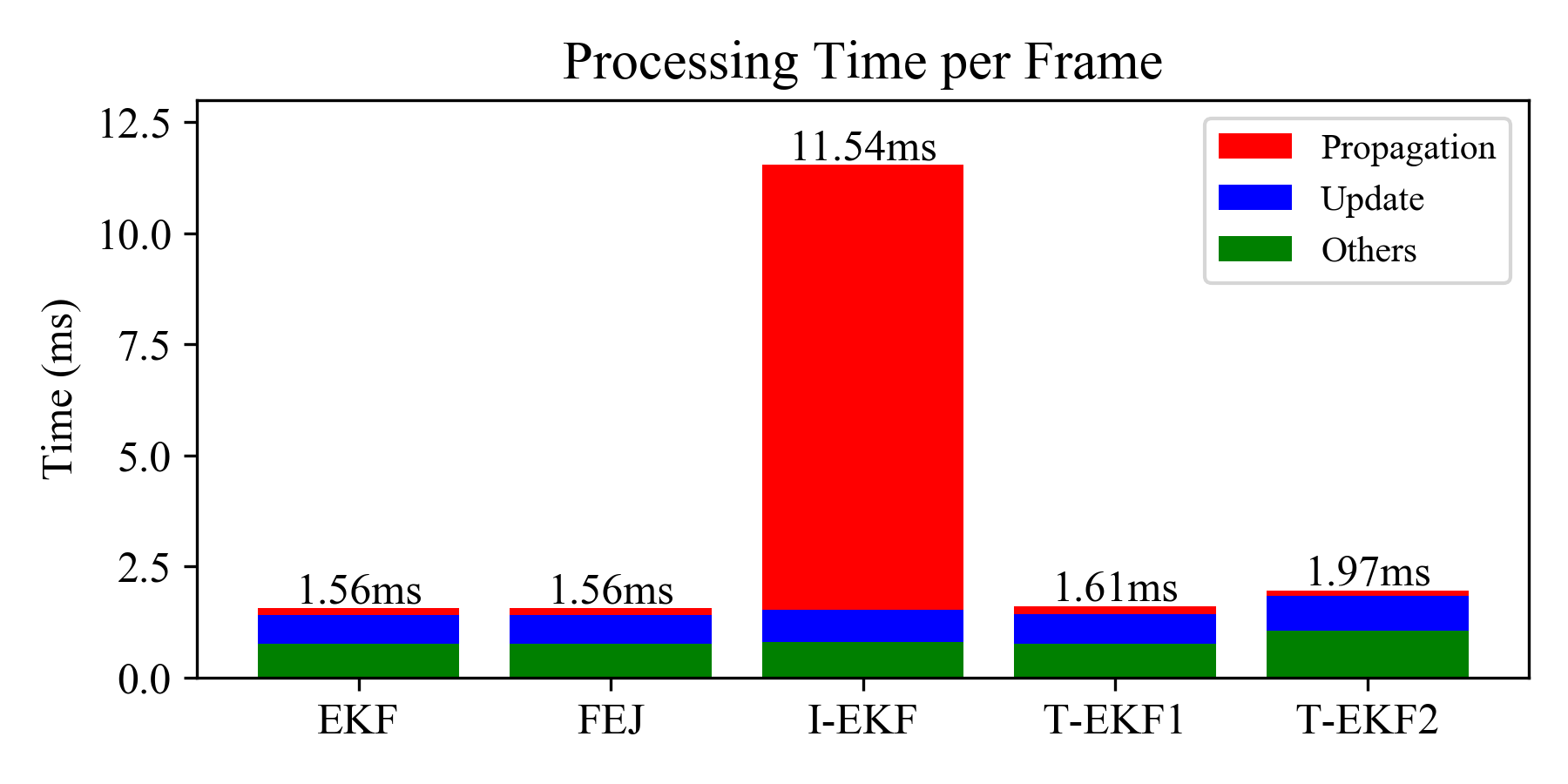}
\caption{The plots of average time required to process single-frame data. The tests are run on a single core of AMD® Ryzen 9 3900x CPU.}
\label{fig:VINS_timing}
\vspace{-1.0em}
\end{figure}

As evident, the classical EKF and the FEJ are the most efficient due to their simple realizations. In contrast, the I-EKF behaves the worst with the running time significantly exceeding that of others. This is mainly because the I-EKF suffers from the computational bottleneck associated with its quadratic complexity to the number of landmarks in covariance propagation~\cite{B32}. In comparison, the proposed two implementation methods (i.e., T-EKF 1 and T-EKF 2), achieve efficiency comparable to the classical EKF and the FEJ. This is achieved by leveraging the covariance propagation technique \cite{B70} (T-EKF 1) and implementing covariance propagation in the original coordinates (T-EKF 2), respectively, thereby circumventing the computational bottleneck of the I-EKF. Notably, compared to the T-EKF 1, the T-EKF 2 consumes more running time. This is primarily because OpenVINS requires multiple state updates with the time-consuming covariance correction operation (see Line $22$ in Algorithm \ref{alg:a2}) executed multiple times.

\subsection{Real-world Experiments}

We further validate the performance of these estimators using the publicly available EuRoC~\cite{B72} and TUM-VI~\cite{B73} datasets. To ensure a fair evaluation, we employ the default configuration parameters in OpenVINS. Table \ref{tab:VINS_euroc_rmse} presents the root mean square error (RMSE) for position and orientation estimation across the two datasets. The best results are highlighted in bold, while the second-best results are marked in blue. Additionally, Figure \ref{fig:VINS_MH_traj} illustrates the trajectories of these estimators on the EuRoC MH\_04\_difficult dataset.

\begin{figure}[htbp]
\centering
\includegraphics[width=0.45\textwidth]{./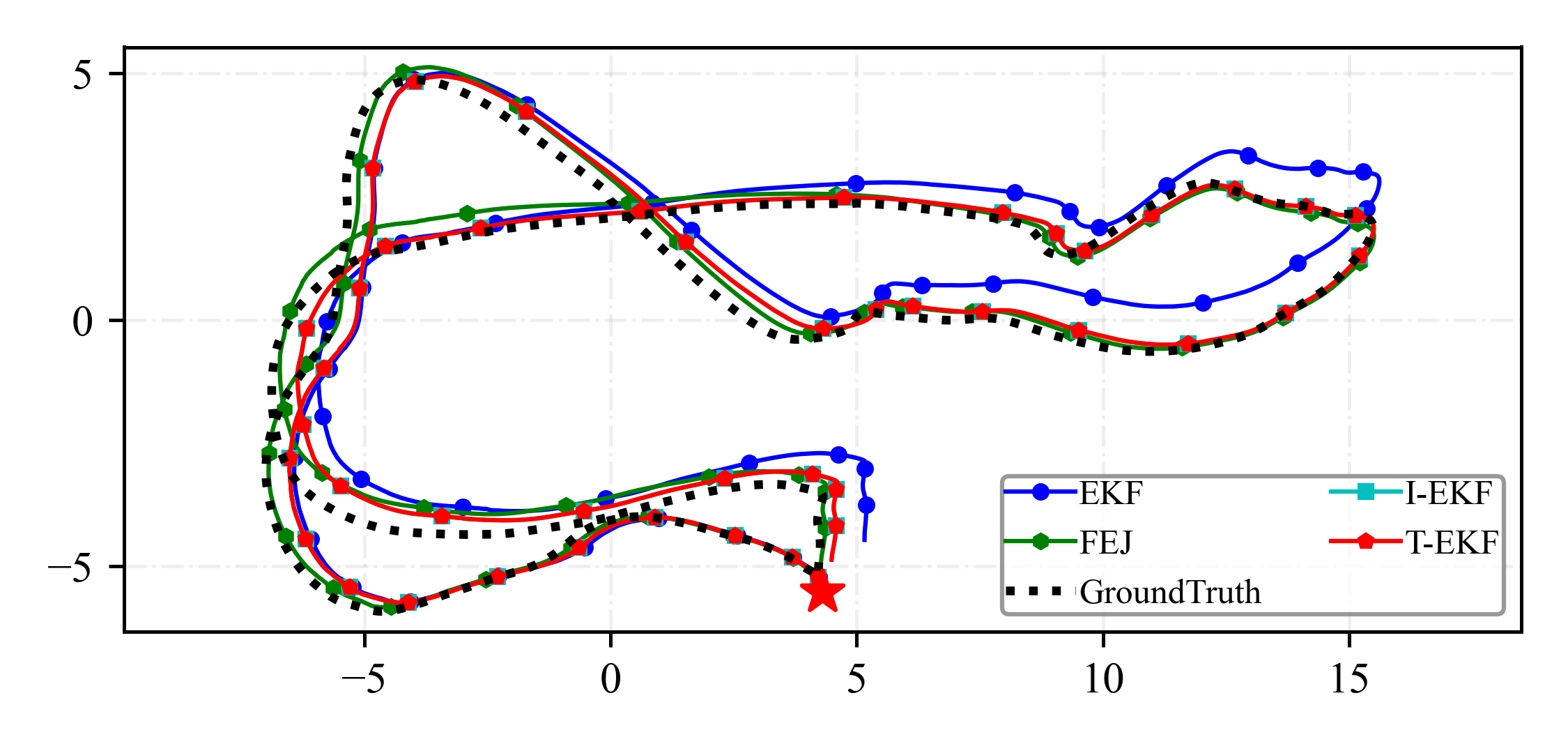}
\caption{The estimated trajectories on the EuRoC MH\_04\_difficult dataset. These trajectories are aligned at the first frame and the common initial position is represented as a red star.
Notably, the trajectories of the I-EKF and the T-EKF almost coincide and are difficult to distinguish.}
\label{fig:VINS_MH_traj}
\end{figure}

{
\def\r#1{{\color{red}#1}}
\def\b#1{{\color{blue}#1}}
\def\txtb#1{\textbf{#1}}
\def\txtr#1{\textbf{#1}}
\begin{table}
\centering
\caption{The average position and orientation RMSE on EuRoC and TUM-VI datasets.}
\label{tab:VINS_euroc_rmse}
\setlength{\abovecaptionskip}{0cm}
\setlength{\belowcaptionskip}{0cm}
\setlength{\tabcolsep}{1.5pt}
\renewcommand{\arraystretch}{1.5}
\begin{tabular}{|c|c|c|c|c|}
\hline
Dataset                 &EKF        &FEJ      &I-EKF      &T-EKF      \\ 
\hline
& \multicolumn{4}{c|}{Position / Orientation RMSE (m / deg)}  \\
\hline
V1\_01\_easy            & \b{0.084} / \txtb{1.154} & 0.117 / 1.828 & 0.094 / 1.999 & \txtb{0.080} / \b{1.640}  \\
V1\_02\_medium          & 0.115 / 1.954 & 0.118 / 1.127 & \txtb{0.102} / \b{1.037} & \b{0.110} / \txtb{1.017}  \\
V1\_03\_difficult       & 0.207 / 3.977 & 0.095 / \txtb{2.264} & \b{0.080} / 2.910 & \txtb{0.075} / \b{2.741}  \\
V2\_01\_easy            & 0.149 / 1.890 & \txtb{0.067} / \b{0.972} & 0.115 / 1.139 & \b{0.092} / \txtb{0.945}  \\
V2\_02\_medium          & 0.111 / 2.134 & 0.101 / 1.339 & \b{0.087} / \b{1.198} & \txtb{0.068} / \txtb{1.193}  \\
V2\_03\_difficult       & 0.234 / 6.428 & \b{0.162} / 2.088 & 0.166 / \b{1.746} & \txtb{0.150} / \txtb{1.416}  \\ 
Average                 & 0.205 / 3.474 & 0.110 / \b{1.602} & \b{0.107} / 1.671 & \txtb{0.095} / \txtb{1.492} \\ 
\hline
MH\_01\_easy            & 0.509 / 5.311 & \txtb{0.210} / 2.653 & 0.329 / \b{1.916} & \b{0.284} / \txtb{1.673}  \\
MH\_02\_easy            & 0.564 / 7.144 & \txtb{0.263} / \txtb{1.554} & \b{0.328} / 1.731 & 0.329 / \b{1.676}  \\
MH\_03\_medium          & \txtb{0.159} / 2.182 & {0.193} / 1.390 & 0.198 / \b{1.338} & \b{0.197} / \txtb{1.337} \\
MH\_04\_difficult       & 0.720 / \b{1.631} & \txtb{0.396} / \txtb{1.301} & 0.529 / 1.765 & \b{0.524} / 1.668 \\
MH\_05\_difficult       & 0.780 / 2.642 & 0.355 / 2.096 & \txtb{0.287} / \b{1.576} & \b{0.303} / \txtb{1.286} \\ 
Average                 & 0.546 / 3.782 & \txtb{0.283} / 1.798 & 0.334 / \b{1.665} & \b{0.327} / \txtb{1.528} \\ 
\hline
room1  & 0.101 / 3.300 & \b{0.095} / 3.024 & \txtb{0.069} / \b{1.036} & 0.117 / \txtb{0.807} \\
room2  & 0.363 / 17.245 & \b{0.176} / 1.758 & 0.185 / \b{1.150} & \txtb{0.175} / \txtb{0.804} \\
room3  & 0.261 / 9.492 & \b{0.127} / 1.448 & 0.128 / \txtb{1.156} & \txtb{0.113} / \b{1.287} \\
room4  & 0.076 / 3.492 & \txtb{0.033} / \txtb{0.595} & 0.065 / 0.653 & \b{0.057} / \b{0.640} \\
room5  & 0.186 / 6.127 & \txtb{0.118} / \b{1.026} & \b{0.120} / 1.162 & 0.143 / \txtb{0.982} \\
room6  & 0.095 / 5.796 & \txtb{0.055} / \b{3.030} & 0.062 / \txtb{2.158} & \b{0.061} / 3.252 \\
Average                 & 0.180 / 7.575 & \txtb{0.100} / 1.813 & \b{0.104} / \txtb{1.219} & 0.110 / \b{1.295} \\ 
\hline
\end{tabular}
\vspace{-1.0em}
\end{table}
}

As demonstrated, the three variants of the classical EKF, namely the FEJ, the I-EKF, and the T-EKF, achieve superior accuracy and significantly outperform the classical EKF in both position and orientation estimation. This superiority is due to the fact that these estimators guarantee correct observability properties, thereby eliminating the inconsistency caused by observability mismatch. Among these estimators, the T-EKF exhibits particularly competitive and even better estimation results, especially in orientation estimation. Notably, the visual measurement noise in these datasets is lower than the settings used in the simulation. As a result, the FEJ is nearly unaffected by the linearization errors and performs comparably to the I-EKF and the T-EKF across all datasets. 

Furthermore, to provide deeper insight into the consistency of these estimators, we plot the normalized estimation error squared (NEES) for position and orientation estimation on the EuRoC V1\_02\_medium dataset, as shown in Figure \ref{fig:VINS_V12_nees}. As observed, the NEES values of the classical EKF significantly exceed those of the other estimators, particularly in terms of orientation estimation. This suggests that the classical EKF acquires spurious information along directions that are erroneously mistaken as observable, leading to significant overconfidence in the estimation results for those directions. In contrast, the T-EKF, the FEJ, and the I-EKF exhibit more reasonable NEES values, indicating enhanced consistency.

\begin{figure}[htbp]
\centering
\includegraphics[width=0.49\textwidth]{./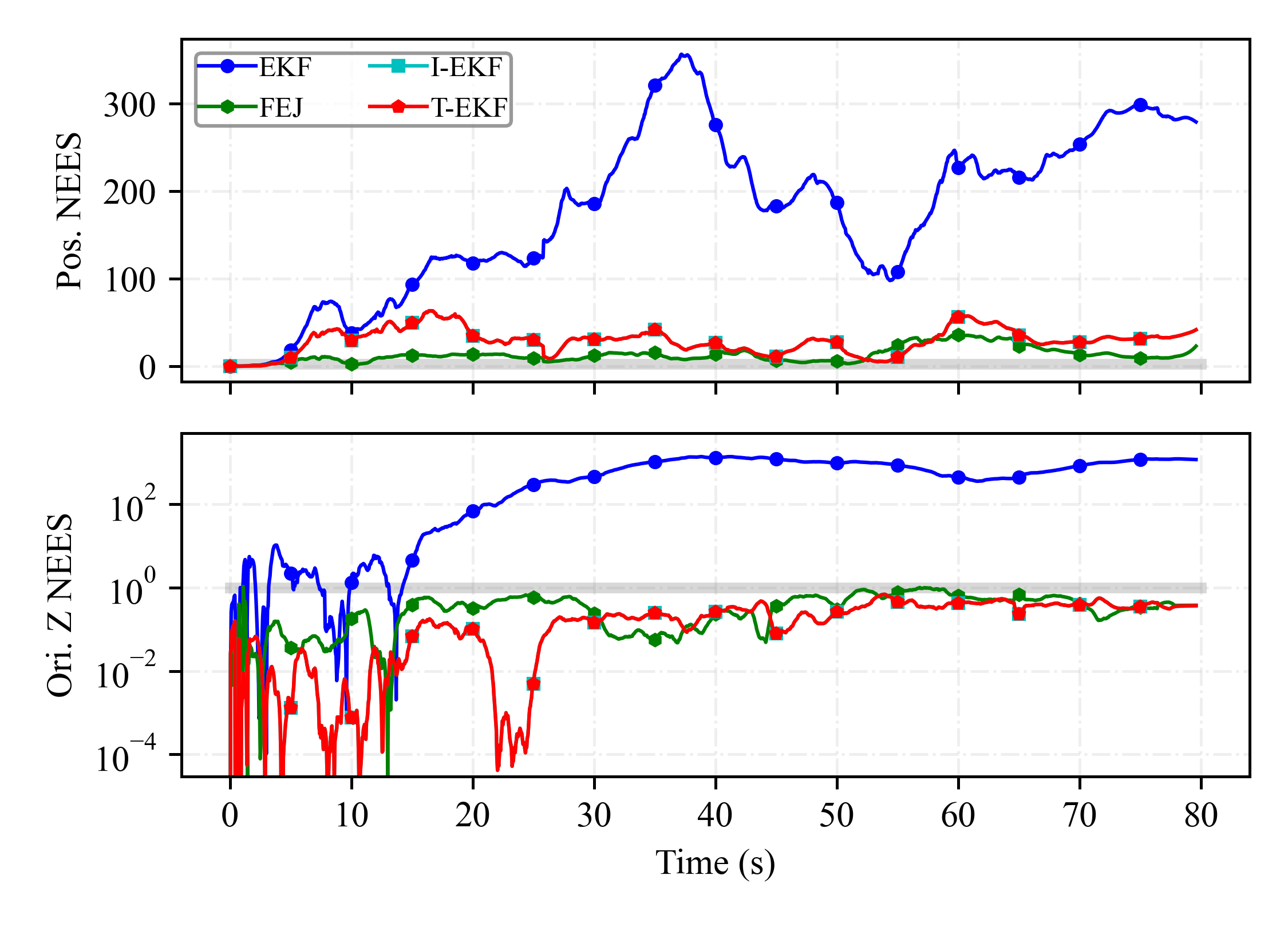}
\caption{The plots of position and orientation NEES on the EuRoC MH\_04\_difficult dataset. The results of the I-EKF and the T-EKF nearly coincide and are hard to distinguish.}
\label{fig:VINS_V12_nees}
\end{figure}

\section{Beyond the Results: A Dissuasion}
\label{sec:discussion}


In this paper, we have introduced a novel framework for addressing the inconsistency problem in the estimation of partially observable nonlinear systems, particularly within the context of the Extended Kalman Filter (EKF). The proposed framework involves the design of a linear time-varying transformation, which is always exists and can be analytically constructed, along with a transformation-based estimator. Specifically, our framework is grounded in an in-depth theoretical analysis of the observability properties of the EKF estimator system and their relationship to the underlying nonlinear system. The key insight is that the unobservable subspace of the EKF estimator system is state-independent and belongs to the unobservable subspace of the original nonlinear system. This insight has led to the development of a systematic solution, thus offering a new perspective and methodology in the field. 


\subsection{Connections}


Matrix Lie group-based methods utilize the geometric structure of the system to maintain consistency in state estimation, often leveraging the properties of matrix Lie groups to preserve intrinsic system dynamics. The Robot-centric methods focus on reformulating the state relative to a local moving frame that is attached to the robot. Our proposed solution leverages linear time-varying transformations to ensure consistent state estimation. This approach does not require complex geometric or application-specific assumptions, making it more flexible and applicable to a broader range of nonlinear estimation problems.

Although these methods may appear to operate under fundamentally different paradigms, our proposed approach establishes an underlying connection among existing state estimation approaches. Specifically, by constructing appropriate time-varying transformations that systematically reshape the observer dynamics, we demonstrate that the transformed estimator systems are equivalent to those of both robot-centric and matrix Lie group-based estimators. 
The proposed framework not only offers a unified perspective for analyzing and understanding existing estimation architectures but may also contribute to the development of advancing state estimation techniques. This integrative understanding could potentially serve as a foundation for exploring new hybrid approaches that combine the strengths of different estimation philosophies while mitigating their individual limitations.

\subsection{Advantages}

\subsubsection{Accuracy and Consistency}

The proposed framework exhibits superior accuracy and consistency performance. As simulations and experiments demonstrate, the T-EKF estimator outperforms observability constraint-based estimators (e.g., FEJ and OC) in terms of accuracy and consistency. Furthermore, it is competitive, and in some cases superior, to those of matrix Lie group-based estimators (e.g., I-EKF). This superiority arises mainly from the fact that the estimator systems used by the T-EKF and I-EKF is theoretically optimal, unlike the FEJ and OC, which do not conform to the first-order approximation. This discrepancy becomes particularly evident in scenarios involving intermittent observations (see Section \ref{sec:application_cl}) or substantial measurement noise (see Section \ref{sec:application_vins}), where the linearization errors associated with the FEJ and OC are no longer negligible.

%

\subsubsection{Computational Efficiency}

The proposed framework demonstrates superior computational efficiency compared to matrix Lie group-based methods. Moreover, it enables the simultaneous resolution of consistency and computational efficiency through the design of transformations, thereby alleviating the computational bottleneck associated with robot-centric and matrix Lie group-based estimators (see Section \ref{sec:application_vins}). Additionally, this framework may address consistency, accuracy, and other performance concerns by combining T-EKF 1 and T-EKF 2.


\subsubsection{Simplicity of Realization}
The proposed framework is straightforward to implement. Linear transformations are always existing and can be designed using analytical construction methods. In comparing to the empirical design of state representations in robot-centric and matrix Lie group-based estimators, the design of transformations follows a clear procedure and has a theoretical guarantee. More importantly, introducing linear transformations into filters, such as EKF and MSCKF \cite{???}, is simpler to implement and convenient to integrate into existing frameworks like OpenVINS \cite{???}.


\subsubsection{Versatility}

Our framework offers a flexible and versatile solution for consistent state estimation in arbitrary partially observable nonlinear systems, without requiring complex geometric or application-specific assumptions. Multiple realizations of the proposed estimators allow customization to specific application requirements, enabling users to exploit more advantages of our methodologies. 


\subsection{Limitations}

Despite the significant advantages in terms of simplicity and effectiveness provided by the introduction of linear time-varying transformations, the proposed framework still has limitations. Specifically, these transformations are currently only adapt to applications in linearization-based nonlinear estimators, such as the EKF and Maximum A Posteriori (MAP) estimation. Nonlinear estimators, including the Unscented Kalman Filter (UKF) and Particle Filter (PF), cannot be applied directly for now. It might require \emph{nonlinear transformations}. However, the design of nonlinear transformations remains still a great challenge, and lacks canonical approaches. This limitation may hinder further improvements in estimation accuracy.


\section{Conclusions And Future Work}
\label{sec:conclusion}

In this paper, we theoretically revealed the relationship of the unobservable subspace between the EKF estimator system and the actual system, and established the necessary and sufficient conditions for observability matching. Based on the theoretical finding, we proposed two design methodologies to construct linear time-varying transformations that render the unobservable subspace of the transformed system constant, thereby circumventing the observability mismatch issue. By leveraging the designed transformations, we further proposed two equivalent, consistent transformation-based EKF estimators. We conducted both simulations and experiments on several representative applications and validated the advantages of our approach in terms of consistency, computational efficiency, and practical realizations.


In our current work, we have witnessed the advantages of transformation-based methods in improving consistency and computational efficiency. However, their potential capabilities in other aspects, such as enhancing convergence and accuracy, have not been fully explored. We will discuss these topics in future work.

\appendix

\subsection{Proof of Lemma \ref{propoition:pre1}}
\label{app:propoition:pre1}

\begin{proof}
In light of \eqref{equ:invariant2}, for the nominal linearized system \eqref{equ:linear_state}-\eqref{equ:linear_measure}, the unobservable subspace $\Ker (\boldsymbol{\mathcal{O}}_{k})$ should belong to the kernel space of the linearized measurement Jacobian matrix $\mathbf{H}_{k}$, that is,
\begin{equation} \label{equ:subspace}
\Ker(\boldsymbol{\mathcal{O}}_k) \subseteq \Ker(\mathbf{H}_{k}) .
\end{equation}
Referring to \eqref{equ:trans_h}, the linearized measurement Jacobian matrix $\mathbf{H}_{k}$ is a matrix-valued function of the nominal linearization point $\mathbf{x}_{k}^{\ast}$, i.e., 
$$
\mathbf{H}_{k}=\mathbf{H}(\mathbf{x}_{k}^{\ast}) ,
$$
As a consequence, any element of the kernel space $\Ker(\mathbf{H}_{k})$ is a vector-valued function of the nominal linearization point $\mathbf{x}_{k}^{\ast}$. Thus the elements of $\Ker(\boldsymbol{\mathcal{O}}_k)$ are also vector-valued functions of the nominal linearization point $\mathbf{x}_{k}^{\ast}$, which completes the proof.
\end{proof}

\subsection{Proof of Theorem \ref{theorem_1_invariant}}
\label{app:theorem_1_invariant}

\begin{proof}
We conduct the proof by contradiction. Suppose that $\Ker(\hat{\boldsymbol{\mathcal{O}}}_k)$ is state-dependent and let $\mathbf{B}_{k}(\hat{\mathbf{x}}_{k|k-1})$ be a basis matrix of $\Ker(\hat{\boldsymbol{\mathcal{O}}}_k)$, i.e.,
$$
\Ker(\hat{\boldsymbol{\mathcal{O}}}_k)
=
\Span_{\col} \left( \mathbf{B}_{k}(\hat{\mathbf{x}}_{k|k-1}) \right) .
$$
Then, in light of \eqref{equ:invariant1}, it can be obtained that
\begin{equation} \label{equ:contradiction_con}
\hat{\mathbf{F}}_{k} \Span_{\col} \left( \mathbf{B}_{k}(\hat{\mathbf{x}}_{k|k-1}) \right) \subseteq \Span_{\col} \left( \mathbf{B}_{k+1}(\hat{\mathbf{x}}_{k+1|k}) \right) .
\end{equation}
Combing with \eqref{equ:prop_jac_2}, it can be derived that
\begin{equation} \label{equ:cont_obey_con}
\begin{aligned}
\mathbf{F}(\hat{\mathbf{x}}_{k|k}) \Span_{\col} \left( \mathbf{B}_{k}(\hat{\mathbf{x}}_{k|k-1}) \right) \subseteq \Span_{\col} \left( \mathbf{B}_{k+1}(\hat{\mathbf{x}}_{k+1|k}) \right)  
\end{aligned}
\end{equation}
Notably, $\hat{\mathbf{x}}_{k|k}$ and $\hat{\mathbf{x}}_{k|k-1}$ obey the state update equation of the classical EKF as follows 
$$
\hat{\mathbf{x}}_{k|k} = \hat{\mathbf{x}}_{k|k-1} + \mathbf{K} (\mathbf{y}_{k} - \mathbf{h}(\hat{\mathbf{x}}_{k|k-1}, \mathbf{0})) ,
$$
while $\hat{\mathbf{x}}_{k+1|k}$ and $\hat{\mathbf{x}}_{k|k}$ follow the state prediction equation of the classical EKF, i.e.,
$$
\hat{\mathbf{x}}_{k+1|k} = \mathbf{f}(\hat{\mathbf{x}}_{k|k}, \mathbf{u}_{k}, \mathbf{0}) .
$$
Substituting the two equations into \eqref{equ:cont_obey_con} yields 
\begin{equation} \label{equ:cont_obey_con_/}
\small
\begin{aligned}
&\mathbf{F}(\hat{\mathbf{x}}_{k|k-1} + \mathbf{K} (\mathbf{y}_{k} - \mathbf{h}(\hat{\mathbf{x}}_{k|k-1}, \mathbf{0}))) \Span_{\col} \left( \mathbf{B}_{k}(\hat{\mathbf{x}}_{k|k-1}) \right) \subseteq \\
&\Span_{\col} \left( \mathbf{B}_{k+1}(\mathbf{f}(\hat{\mathbf{x}}_{k|k-1} + \mathbf{K} (\mathbf{y}_{k} - \mathbf{h}(\hat{\mathbf{x}}_{k|k-1}, \mathbf{0})), \mathbf{u}_{k}, \mathbf{0})) \right)
\end{aligned}
\end{equation}
which cannot be guaranteed to hold since $\mathbf{y}_{k}$ is random and independent of the prediction $\hat{\mathbf{x}}_{k|k-1}$. This disobeys the forward $\hat{\mathbf{F}}_{k}$-invariance. Thus the unobservable subspace is independent of the state prediction value. Using the same manner, we can also prove that the unobservable subspace is independent of the state estimation value. Consequently, the unobservable subspace $\Ker(\hat{\boldsymbol{\mathcal{O}}}_k)$ of the estimator's system is state-independent. 
\end{proof}

\subsection{Proof of Theorem \ref{theorem_2_invariant}}
\label{app:theorem_2_invariant}

\begin{proof}
According to Theorem \ref{theorem_1_invariant}, it is known that the unobservable subspace of the estimator's linearized system is state-independent (constant). Thus, let $\hat{\mathbf{B}}_k$ denote a basis matrix of the unobservable subspace $\Ker (\hat{\boldsymbol{\mathcal{O}}}_{k})$ of the estimator's linearized system, i.e.,
\begin{equation} \label{threm2:kernel_1}
\Ker (\hat{\boldsymbol{\mathcal{O}}}_{k}) = \Span_{\col} \left( \hat{\mathbf{B}}_k \right) .
\end{equation}
It obeys the properties given by \eqref{equ:invariant1}-\eqref{equ:invariant2}, which can be jointly written as
\begin{equation} \label{equ:obs_hbk_0}
\hat{\mathbf{F}}_k \Span_{\col} \left( \hat{\mathbf{B}}_k \right) 
\subseteq 
\Span_{\col} \left( \hat{\mathbf{B}}_{k+1} \right) 
\subseteq  
\Ker(\hat{\mathbf{H}}_{k+1}) .
\end{equation}
Recalling \eqref{equ:trans_f}-\eqref{equ:trans_h} and \eqref{equ:prop_jac_2}-\eqref{equ:update_jac_2}, $\mathbf{F}_k$ and $\hat{\mathbf{F}}_k$ are the matrix-valued function of $\mathbf{x}_{k}^{\ast}$ and $\hat{\mathbf{x}}_{k|k}$, respectively, while $\mathbf{H}_{k+1}$ and $\hat{\mathbf{H}}_{k+1}$ are the matrix-valued function of $\mathbf{x}_{k+1}^{\ast}$ and $\hat{\mathbf{x}}_{k+1|k}$, respectively, as follows
$$
\begin{aligned}
&\hat{\mathbf{F}}_k = \mathbf{F}(\hat{\mathbf{x}}_{k|k}), \quad 
\mathbf{F}_k = \mathbf{F}(\mathbf{x}_{k}^{\ast}) , \\
&\hat{\mathbf{H}}_{k+1} = \mathbf{H}(\hat{\mathbf{x}}_{k+1|k}), \quad
\mathbf{H}_{k+1} = \mathbf{H}(\mathbf{x}_{k+1}^{\ast}) .
\end{aligned}
$$
Since both $\hat{\mathbf{B}}_{k}$ and $\hat{\mathbf{B}}_{k+1}$ are state-independent (constant), it can be derived from \eqref{equ:obs_hbk_0} that
\begin{equation} 
\mathbf{F}_k \Span_{\col} \left( \hat{\mathbf{B}}_k \right) 
\subseteq 
\Span_{\col} \left( \hat{\mathbf{B}}_{k+1} \right) 
\subseteq  
\Ker(\mathbf{H}_{k+1}) .
\end{equation}
for any $k \geq 0$. By iterating forward, it can be derived that 
$$
\mathbf{F}_{k+\ell-1} \cdots \mathbf{F}_k \Span_{\col} \left( \hat{\mathbf{B}}_k \right) 
\subseteq 
\Span_{\col} \left( \hat{\mathbf{B}}_{k+\ell} \right) 
\subseteq  
\Ker(\mathbf{H}_{k+\ell}) 
$$
which implies that
$$
\mathbf{H}_{k+\ell} \mathbf{F}_{k+\ell-1} \cdots \mathbf{F}_{k} \hat{\mathbf{B}}_k = \mathbf{0}
$$
for any $\ell \geq 0$. Therefore, we have
$$
\boldsymbol{\mathcal{O}}_{k} \hat{\mathbf{B}}_k = \mathbf{0} .
$$
That is, 
\begin{equation} \label{threm2:kernel_2}
\Span_{\col} \left( \hat{\mathbf{B}}_k \right)
\subseteq
\Ker \left( \boldsymbol{\mathcal{O}}_{k} \right) .
\end{equation}
Combing \eqref{threm2:kernel_1} and \eqref{threm2:kernel_2} yields
$$
\Ker ( \hat{\boldsymbol{\mathcal{O}}}_{k} )
\subseteq
\Ker ( \boldsymbol{\mathcal{O}}_{k} )
$$
which completes the proof.
\end{proof}

\subsection{Proof of Theorem \ref{theorem:final}}
\label{app:theorem:final}

\begin{proof}
According to Theorem \ref{theorem_1_invariant} and Theorem \ref{theorem_2_invariant}, since the unobservable subspace of the estimator's linearized system is state-independent (constant), the unobservable subspace of the two systems does not coincide if the unobservable subspace of the nominal linearized system is state-dependent (\textbf{Proof of Necessity}). To complete the proof, we only need to prove that the unobservable subspace of the nominal linearized system belongs to that of the estimator system if it is state-independent. Let $\mathbf{B}_k$ be a basis matrix of the unobservable subspace of the nominal linearized system \eqref{equ:linear_state}-\eqref{equ:linear_measure}, which is state-independent (constant). In light of \eqref{equ:invariant1}, it can be obtained that
\begin{equation} \label{equ:const_inv}
\mathbf{F}(\mathbf{x}_{k}^{\ast}) \Span_{\col} \left( \mathbf{B}_{k} \right) \subseteq \Span_{\col} \left( \mathbf{B}_{k+1} \right) .    
\end{equation}
Substituting $\hat{\mathbf{x}}_{k|k}$ into \eqref{equ:const_inv} by replacing $\mathbf{x}_{k}^{\ast}$ yields
\begin{equation} \label{equ:const_est}
\mathbf{F}(\hat{\mathbf{x}}_{k|k}) \Span_{\col} \left( \mathbf{B}_{k} \right) \subseteq \Span_{\col} \left( \mathbf{B}_{k+1} \right) 
\end{equation}
which can be equivalently written as
\begin{equation} \label{equ:const_est_iter}
\hat{\mathbf{F}}_{k} \Span_{\col} \left( \mathbf{B}_{k} \right) \subseteq \Span_{\col} \left( \mathbf{B}_{k+1} \right) .        
\end{equation}
Moreover, in light of \eqref{equ:invariant2}, we have
\begin{equation} \label{equ:const_inv_h}
\Span_{\col} \left( \mathbf{B}_{k} \right) \subseteq \Ker(\mathbf{H}(\mathbf{x}_{k}^{\ast})) .
\end{equation}
Substituting $\hat{\mathbf{x}}_{k|k-1}$ into \eqref{equ:const_inv_h} by replacing $\mathbf{x}_{k}^{\ast}$ yields
\begin{equation} \label{equ:const_est_iter_h}
\Span_{\col} \left( \mathbf{B}_{k} \right) \subseteq \Ker(\hat{\mathbf{H}}_{k})
\end{equation}
Combing \eqref{equ:const_est_iter} and \eqref{equ:const_est_iter_h} and iterating forward, it can be derived that
$$
\hat{\mathbf{F}}_{k+\ell-1} \cdots \hat{\mathbf{F}}_{k} \Span_{\col} \left( \mathbf{B}_{k} \right) \subseteq \Ker(\hat{\mathbf{H}}_{k+\ell})
$$
for any $\ell \geq 0$. Then it can be concluded that
$$
\Span_{\col} \left( \mathbf{B}_{k} \right) \subseteq \Ker (\hat{\boldsymbol{\mathcal{O}}}_{k}) .
$$
Together with Theorem \ref{theorem_2_invariant}, we have
$$
\Ker (\hat{\boldsymbol{\mathcal{O}}}_{k}) = \Ker (\boldsymbol{\mathcal{O}}_{k}) 
$$
if $\Ker (\boldsymbol{\mathcal{O}}_{k})$ is constant (\textbf{Proof of Sufficiency}). 
The proof is completed.
\end{proof}

\subsection{Proof of Theorem \ref{theorem:trans_design_2}}
\label{app:theorem:trans_design_2}

\begin{proof}
We complete the proof by contradiction. Suppose that the transformed state propagation Jacobian matrix is constant and the unobservable subspace of the transformed linearized system is state-dependent. Then the basis matrix of this unobservable subspace can be expressed as $\mathbf{B}_{k}(\mathbf{x}_{k}^{\ast})$, which explicitly relies on the state $\mathbf{x}_{k}^{\ast}$. Referring to the definition of the local observability matrix in \eqref{equ:obs_mat}, the unobservable subspace, i.e., the kernel space of the local observability matrix, should also be the kernel space of each block row matrix of the local observability matrix. However, if the transformed state propagation Jacobian matrix is constant, the kernel space of any $\ell$-th block row matrix explicitly depends on the state $\mathbf{x}_{k+\ell}^{\ast}$ rather than the state $\mathbf{x}_{k}^{\ast}$. As a consequence, the unobservable subspace of the transformed linearized system is state-independent.
\end{proof}

\bibliography{IEEEabrv}
\bibliographystyle{IEEETran}

\end{document}